\documentclass{article}

\usepackage[preprint]{neurips_2026}

\usepackage[utf8]{inputenc} 
\usepackage[T1]{fontenc}    
\usepackage{hyperref}       
\usepackage{url}            
\usepackage{booktabs}       
\usepackage{amsfonts}       
\usepackage{nicefrac}       
\usepackage{microtype}      
\usepackage{xcolor}         
\usepackage{amsmath, amssymb, amsthm}

\newtheorem{theorem}{Theorem}
\newtheorem{lemma}{Lemma}
\newtheorem*{remark}{Remark}
\usepackage{mathrsfs}
\usepackage{multirow}
\usepackage{adjustbox,wrapfig}
\usepackage{ulem}
\usepackage{enumitem}
\usepackage{pifont}
\newcommand{\cmark}{\ding{51}} 
\newcommand{\xmark}{\ding{55}} 

\newcommand{\BE}{\begin{equation}}
\newcommand{\EE}{\end{equation}}

\title{Theory Guided and Interpretable Neural Operator Design for Partial Differential Equation Learning}

\author{
  Zeyuan Song\\
  Oklahoma State University\\
  Stillwater, OK 74078 \\
  \texttt{taekwon.song@okstate.edu} \\
  \And
  Zheyu Jiang \\
  Oklahoma State University\\
  Stillwater, OK 74078 \\
  \texttt{zheyu.jiang@okstate.edu} \\
}

\begin{document}

\maketitle

\begin{abstract}
  Accurate numerical solutions of partial differential equations (PDEs) are crucial in numerous science and engineering applications. In this work, we introduce a novel neural PDE solver named AFDONet, which incorporates neural operator learning and adaptive Fourier decomposition (AFD) theory for the first time into a specifically designed variational autoencoder (VAE) structure, to solve a general class of nonlinear PDEs on smooth manifolds. AFDONet is the first neural PDE solver whose architectural and component design is fully guided by an established mathematical framework (in this case, AFD theory), turning neural operator design from an art to a science. Thus, AFDONet also exhibits exceptional mathematical explainability and groundness, and enjoys several desired properties. Furthermore, AFDONet achieves outstanding solution accuracy and competitive computational efficiency in several benchmark problems. In particular, thanks to its deep connections with AFD theory, AFDONet shows superior performance in solving PDEs on i) arbitrary (Riemannian) manifolds, and ii) datasets with sharp gradients. Overall, this work presents a new paradigm for designing explainable neural operator frameworks.
\end{abstract}

\section{Introduction}
A wide range of scientific and engineering phenomena can be characterized and modeled by partial differential equations (PDEs). Most nonlinear PDEs do not have analytical solutions and need to be solved numerically. Traditional discretization-based numerical solvers, such as finite element methods (FEM) and finite difference methods (FDM), can become quite slow, inefficient, and unstable \citep{hittinger2013block, sokic2011stability, carey1993succa}. On the other hand, data-driven methods, such as neural PDE solvers, can directly learn the trajectory of the family of equations from the data, and thus can be orders of magnitude faster than traditional solvers \cite{li2020fourier}. Most neural PDE solvers operate either by approximating the solutions \citep{raissi2019physics,han2018solving}, directly learning the mappings between function spaces \citep{li2020fourier,li2025d,tripura2023wavelet,lu2020extracting}, or integrating neural networks with conventional numerical solvers in a hybrid manner \citep{bar2019learning,li2025learning,brevis2020data}.

While most existing PDE solvers are designed for regular Euclidean domains, in many real-world applications, PDEs are defined on non-Euclidean manifolds. Most existing approaches to solve PDEs on manifolds rely on classical numerical approaches, such as parameterization \citep{lui2005solving}, collocation \citep{chen2020extrinsic}, and spectral methods \citep{yan2023spectral}. Although researchers have begun to explore manifold-aware neural architectures that can learn directly from point clouds \citep{he2024geom,liang2024solving} or graphs \citep{bronstein2017geometric}, they cannot easily be generalized to different manifolds. Thus, extending neural PDE solvers to manifold domains remains challenging. Instead, pullback operators are often used in existing neural PDE solvers to map functions and differential operators from the manifold to a Euclidean space.

Another research gap in neural operator solver is that, so far, the design of exact neural architectures in many neural PDE solvers has been ``more of an art than a science'' \citep{Benjamin}. Typically, neural architecture design is done in a bottom-up approach that involves significant intuition, expert experience, and trial-and-error experimentation. And rigorous mathematical basis and explainability have been lacking in guiding the design of these neural architectures. 

\textbf{Our approach.} To bridge these gaps, in this work, we propose a novel neural PDE solver named AFDONet for solving general nonlinear PDEs on smooth manifolds. Specifically, AFDONet is a variational autoencoder (VAE)-based neural operator whose design replicates adaptive Fourier decomposition (AFD), a novel signal decomposition technique achieving higher accuracy and significant computational speedup compared to conventional signal decomposition methods \citep{qian2010intrinsic}. AFD can approximate signals and functions in a reproducing kernel Hilbert space (RKHS) on different domains and manifolds \citep{qian2011algorithm,qian2012adaptive,zhang2023simulation,song2022representing}, making it a desirable choice for designing theory-guided, interpretable neural operator for solving PDEs on manifolds. Motivated by this, in AFDONet, latent variables are first mapped to their nearest reproducing kernel Hilbert space (RKHS) via a latent-to-RKHS network, followed by reconstructing the solution manifold using a new type of decoder replicating AFD operations. 

\textbf{Key contributions.} The key contributions of this work are summarized as follows:
\begin{enumerate}
    \item We follow a unique, top-down approach based on adaptive Fourier decomposition (AFD) theory to guide every step in the design of AFDONet's neural architecture. This presents a new paradigm for designing explainable neural operator frameworks.
    \item AFDONet is mathematically grounded in AFD theory, as the solutions produced by our novel neural architecture can be interpreted as an adaptive decomposition into basis functions. Thus, AFDONet has rigorous mathematical foundations based on approximation theory and possesses several desirable properties. 
    \item We demonstrate the effectiveness of our AFDONet solver by comparing its solution accuracy with several neural PDE solvers over benchmark problems on arbitrary (Riemannian) manifolds and datasets with sharp gradients. We show that AFDONet achieves outstanding performance in terms of solution accuracy and its capability to reconstruct solution manifolds.
\end{enumerate}

\section{Problem Statement}
We consider a PDE defined on a spatial domain $\Omega \subset \mathbb{R}^d$ and a time interval $(0, T]$:
\begin{equation}\label{eqn_problem}
    \mathcal{L}_{\alpha} [u(x,t)] = f(x,t), \quad \forall (x,t) \in \Omega \times (0,T],
\end{equation}
where $\mathcal{L}$ denotes the differential operator,  $f(x,t)$ is the source/sink term, and the parameter function $\alpha \in \mathcal{A}$ specifies the physical parameters and the initial and boundary conditions. Our goal is to learn a neural operator $G: \mathcal{A} \rightarrow \mathcal{F}(D \times [0,T])$, which maps the parameter function $\alpha$ from its parameter space $\mathcal{A}$ to the corresponding solution $u(x,t)\in \mathcal{F}$. In this work, we focus on two types of tasks: (i) the static task, which solves a PDE for one set of physical parameters $\alpha$ and a fixed final time \(T\) (i.e., \(u(x,T)\)); and (ii) the autoregressive task, which forecasts the PDE solution at time step \(t+1\) (i.e., \(u(x,t+1)\)) based on the solution at the previous time step \(t\) (i.e., \(u(x,t)\)).

\section{Related Work}

\textbf{Classic Fourier-based methods}, such as Fourier transform approaches \citep{negero2014fourier}, Fourier series expansions \citep{asmar2016partial}, and Fourier spectral methods \citep{alali2020fourier}, have been extensively used to solve PDEs numerically. Classic Fourier-based methods offer accurate and efficient representations of smooth, periodic functions by transforming differential operators into simple algebraic operations in the frequency domain. However, the use of global basis functions produces oscillations when approximating functions with discontinuities or sharp transitions \citep{gottlieb1997gibbs}. Furthermore, the fixed basis structure in these methods lacks adaptability to signals with time-localized, transient, or nonperiodic features. In addition, these methods are typically defined on simple, regular domains, making them difficult to apply directly to manifolds.

\textbf{Operator learning} aims to directly learn the mapping between infinite-dimensional function spaces (e.g., from input functions to solutions) to enable fast, mesh-independent approximation of PDE solutions across various input conditions, including source and/or sink term, physical parameters, and initial and boundary conditions. Among existing operator learning-based PDE solvers, two notable ones backed by the approximation theory are DeepONet \citep{lu2019deeponet,lu2021learning}, which is inspired by the universal approximation theorem for nonlinear operators, and the Fourier Neural Operator (FNO) \citep{li2020fourier,li2023fourier}, which performs convolution in the frequency domain to capture global spatial dependencies efficiently. Both operator learning paradigms have led to several new variants. Some of the recently developed network architectures \citep{he2023novel, goswami2022physics, he2024geom, li2023phase} built upon DeepONet provide enhancements such as physics-informed structure, parameterized geometry and phase-field modeling. Some of the new variants of FNO include Factorized FNO (F-FNO) \citep{tran2021factorized}, Decomposed FNO (D-FNO) \citep{li2025d}, Spherical FNO \citep{bonev2023spherical}, Domain Agnostic FNO (DAFNO) \citep{liu2023domain}, Wavelet Neural Operator (WNO) \citep{tripura2023wavelet}, Multiwavelet Neural Operator (MWT) \citep{gupta2021multiwavelet}, Coupled Multiwavelet Neural Operator (CMWNO) \citep{xiao2025cmw}, and Adaptive Fourier Neural Operator (AFNO) \citep{guibas2021adaptive}. 


\textbf{Physics-informed representation learning and variational autoencoder (VAE).} Another avenue for solving PDEs is to directly incorporate physical knowledge and constraints derived from the PDE into a neural architecture. One of the popular frameworks is the Physics-Informed Neural Network (PINN) \citep{raissi2019physics,raissi2017physics}, where the PDE itself is embedded in the loss function as a regularization term. Another approach is to introduce variational autoencoders (VAEs) \citep{tait2020variational,kingma2013auto} in a physics-informed architecture. This provides a structured latent space and a probabilistic framework for integrating physics, leading to more stable and generalizable representation learning. Several physics-informed VAE models have recently been proposed, including \citet{glyn2024varphi,zhong2023pi,takeishi2021physics,lu2020extracting}. Specifically, \citet{lu2020extracting} used a dynamics encoder and a propagating decoder to extract interpretable physical parameters from PDEs. Later, \citet{takeishi2021physics} proposed a physics-informed VAE model by introducing physics-based models to augment latent variables, encoder, and decoder. However, these methods lack rigorous theoretical justifications for the design of their neural architectures that ensure convergence and performance guarantees.

\section{Preliminaries to Adaptive Fourier Decomposition (AFD)}

AFD is a novel signal decomposition technique that leverages the Takenaka-Malmquist system and adaptive orthogonal bases \citep{qian2010intrinsic,qian2012adaptive}. It is established as a new approximation theorem in a reproducing kernel Hilbert space (RKHS) sparsely in a given domain $\Omega$ as $s = \sum_{i=1}^\infty \langle s,\mathscr{B}_i \rangle \mathscr{B}_i$ for the chosen orthonormal bases $\mathscr{B}_i$ \citep{saitoh2016theory}. An RKHS is a Hilbert space of functions where evaluation at any point is continuous with respect to the inner product \(\langle\cdot,\cdot\rangle\), and each point on the domain corresponds to a unique kernel function. For AFD in RKHS, the sparse bases $\{\mathscr{B}_i\}_i$ are made orthonormal to each other by applying Gram-Schmidt orthogonalization to the normalized reproducing kernels associated with a set of adaptively selected ``poles'' $\{a_i\}_i$, which are complex numbers used to parameterize the sparse bases. Specifically, to decompose signals in a Hardy space (i.e., a Hilbert space consisting of holomorphic functions defined on the unit disk), which can be further relaxed to an RKHS \citep{song2022representing}, the orthonormal basis functions $\mathscr{B}_i$ can be derived as: 
\begin{equation}\label{eqn_3}
    \mathscr{B}_i(z) = \frac{\sqrt{1-|a_i|^2}}{1-\overline{a_i}z} \prod_{j=1}^{i-1} \frac{z-a_j}{1-\overline{a_j}z}, \quad a_i \in \mathbb{D},
\end{equation}
where $\mathbb{D} = \{z\in\mathbb{C}: |z|<1\}$. To adaptively select the sequence of poles such that convergence of AFD approximation is ensured, one shall follow the so-called ``maximal selection principle'', such that the resulting $|\langle s, \mathscr{B}_i \rangle|$ is as large as possible. That is, to select the next pole $a_i$ given $i-1$ already selected poles, $a_1, \dots, a_{i-1}$ (hence bases $\mathscr{B}_1, \dots, \mathscr{B}_{i-1}$), the corresponding orthonormal basis $\mathscr{B}_i$ needs to satisfy: 
\begin{equation}\label{eqn_msp}
    |\langle s,\mathscr{B}_i \rangle| \geq \rho_i \sup \left\{ \langle s,\mathscr{B}'_i \rangle | b_i\in \Omega \backslash \{a_1,\dots, a_{i-1} \} \right \},
\end{equation}
where $0 < \rho_0 \leq \rho_i < 1$, $\mathscr{B}'_1 = \frac{k_{b_1}}{\|k_{b_1}\|_{H(\Omega)}}$ and $\mathscr{B}'_i=\frac{k_{b_i}-\sum_{j=1}^{i-1} \langle k_{b_i},\mathscr{B}_j \rangle \mathscr{B}_j}{||k_{b_i}-\sum_{j=1}^{i-1} \langle k_{b_i},\mathscr{B}_j \rangle \mathscr{B}_j||_{_{H(\Omega)}}}$. Here, $k_{b_i}$ is the reproducing kernel (e.g., Gaussian or Bergman kernel) at $b_i$. In classic AFD theory, the algorithmic procedure of pole selection, which is discussed in \citet{song2022representing}, is computationally expensive. Therefore, integrating the classical AFD with neural operators is a promising approach to enable fast and accurate solution of PDEs through the use of adaptive orthonormal basis functions.



\section{AFDONet Architecture}
Guided by the AFD theory, we design AFDONet to approximate PDE solution spaces on any smooth manifold. The AFDONet architecture shown in Figure \ref{fig:frame} consists of an encoder, a latent-to-RKHS network, and an AFD-type dynamic convolutional kernel network (CKN). These components work synergistically to enhance the performance of the AFDONet solver. After the encoder, AFDONet identifies the closest RKHS where the latent variables reside using a latent-to-RKHS network. Subsequently, AFDONet reconstructs the PDE solutions by replicating the AFD operation and adaptively selecting the poles using a specially designed decoder network. For static tasks, the training dataset is denoted as \(\{u(x,T)\}_{\{\alpha\}}\) for different sets of physical parameters $\alpha$, while for autoregressive tasks, the training dataset is denoted as \(\{u(x,t), u(x,t+1)\}_{t=0}^T\).

\begin{figure*}[t]
  \centering
  \includegraphics[width=0.95\textwidth]{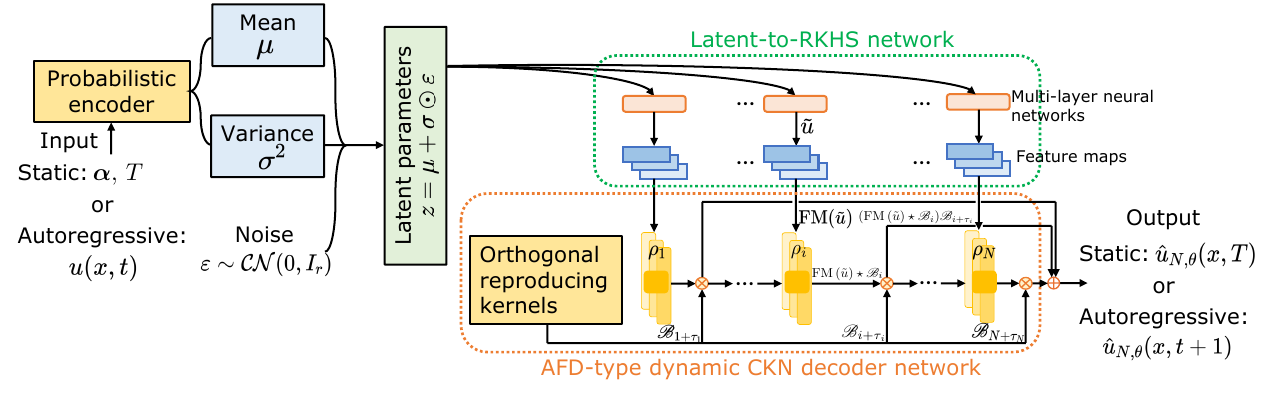}
  \caption{Our proposed AFDONet framework, which adopts VAE as the backbone, introduces a latent-to-RKHS network and a dynamic CKN decoder to reproduce the AFD setting and operation.}
  \label{fig:frame}
\end{figure*}

\textbf{The use of VAE as architecture backbone} is motivated from both methodological and experimental perspectives. From a methodological perspective, the use of VAE architecture as the backbone for our AFDONet is motivated by several reasons. First, many PDE solution fields lie on low-dimensional manifolds in high-dimensional function space. VAE-based neural operators can learn a probabilistic latent representation of these manifolds, mapping high-dimensional inputs to a compact latent space while capturing variation in solution behavior. This reduces the complexity of learning and enables generalization across parametric inputs, as shown in many prior successes in VAE-based neural operators \citep{zhong2023pi,rafiq2025,lu2020extracting,takeishi2021physics}. Second, VAE is inherently connected to AFD theory in several ways. For instance, VAEs benefit from frequency transformations \citep{li2024fld}, which are the foundation of bases used in AFD. Also, the maximal selection principle of basis functions in AFD aligns well with the variational inference of VAE \citep{Chen2020nonconstant}.

From an experimental perspective, we will show in Section \ref{experiments} that the use of VAE and its holistic integration with other components in the AFDONet architecture help significantly improve the accuracy of PDE solutions on manifolds.

\textbf{The encoder network} maps the inputs \(\alpha\) or \(u(x,t)\) to a latent space in the complex domain \(\mathbb{C}^{2r}\) using a standard probabilistic encoder network based on the VAE framework. For the static task, this means:
\begin{equation}\label{eq:static-enc}
\begin{aligned}
    \bigl(\mu(\alpha),\,\log\sigma^{2}(\alpha)\bigr) = A_{2}\!\bigl(\Phi\!\bigl(A_{1}\alpha\bigr)\bigr),\;
    z = \mu(\alpha)+\sigma(\alpha)\odot\varepsilon,\;
    \varepsilon \sim\mathcal{CN}(0,I_{r}),
\end{aligned}
\end{equation}
where \(A_{1}\in\mathbb{C}^{W_{\mathrm e}\times d}\) and  
\(A_{2}\in\mathbb{C}^{2r\times W_{\mathrm e}}\) are the weight matrices (where \(W_{\mathrm e}=\mathcal{O}(r)\)), \(\Phi(\cdot)\) is the activation function, the latent mean is \(\mu(\alpha) \in \mathbb{C}^r \), the log-variance is \(\log\sigma^{2}(\alpha) \in \mathbb{C}^r \), and $z$ is the latent parameter vector.

For the autoregressive task, the input \(u_t=u(x,t)\) lies on the Hilbert space \(H(\mathcal{M})\) of manifold \(\mathcal{M}\). Therefore, \(u_t=u(x,t)\) must be projected from \(H(\mathcal{M})\) into an appropriate complex domain. Let \(\{\phi_k\}_{k=0}^{\infty}\) be an orthonormal Fourier basis. Then, we define a linear projection:
\begin{equation} \label{eqn:linear-proj}
    \Pi_{K}u_t
      :=
      \bigl(
         \langle u_t,\phi_{0}\rangle,\;
         \dots,\;
         \langle u_t,\phi_{K-1}\rangle
      \bigr)\in\mathbb{C}^{K},
\end{equation}
which retains the first \(K\) modes of the field. This leads to the following encoder structure:
\begin{equation}\label{eq:static-enc2}
\begin{aligned}
    \bigl(\mu_t,\,\log\sigma^{2}_t\bigr) = A_{2}\!\Bigl(\Phi\!\bigl(A_{1}\,\Pi_{K}u_t\bigr)\Bigr),\;
    z_t = \mu_t+\sigma_t\odot\varepsilon_t,\;
    \varepsilon_t \sim\mathcal{CN}(0,I_{r}),
\end{aligned}
\end{equation}
where \(A_{1}\in\mathbb{C}^{W_{\mathrm e}\times K}\) and \(A_{2}\in\mathbb{C}^{2r\times W_{\mathrm e}}\) are the weight matrices (where \(W_{\mathrm e}=\mathcal{O}(r)\)), \(\Phi(\cdot)\) is the activation function. In both tasks, the encoder network has a depth \(L_{\mathrm e}=2\) and width \(W_{\mathrm e}=\mathcal{O}(r)\).

\textbf{The latent-to-RKHS network} maps the latent parameters to convolutional kernels while constraining the corresponding functional space to be an RKHS, where the AFD operations are defined. This extends the latent-to-kernel network proposed by \citet{lu2020extracting} by explicitly accounting for the fact that the kernels are constructed in a Hilbert space. Our latent-to-RKHS network consists of multi-layer fully-connected feedforward (MLP) networks and feature maps. The MLP networks will first take the latent parameter vector \(z\) obtained from the encoder network to generate \(\Tilde{u}(x,\cdot)\) on \(H(\mathcal{M})\). Then, feature maps \(\mathrm{FM}(\cdot)\) will map \(\Tilde{u}(x,\cdot)\) to its nearest RKHS \(\mathcal{H}(\mathcal{M})\) via orthogonal projection. This way, the latent-to-RKHS network learns the feature maps from \(H(\mathcal{M})\) to its nearest RKHS \(\mathcal{H}(\mathcal{M})\), in which the reproducing kernel \(k_a\) can be obtained by:
\begin{equation}\label{eqn:kernel}
     k_a(\xi)
   \;=\;
   \sum_{i=1}^{N'}
      \nu_{i}(a)\,
      e^{2\pi j \phi \cdot (\xi - y_i)},
   \qquad
   \forall a, \xi\in \mathcal{M}
\end{equation}
where $j^2 = -1$ and $\phi$ is the fundamental
frequency. Here, weights \(\nu_i \in \mathbb{C}\) and parameters \(y_i\in \mathcal{M}\) are learnable from the latent-to-RKHS network. Essentially, a feature map applies a fast Fourier transform (FFT) to its input, multiplies the top $N'$ low-frequency components by learnable complex weights while discarding the high-frequency components, and then performs an inverse FFT. Note that this is \textit{different from Fourier layers in FNO} because we only perform one-sided (positive-frequency) operations, whereas FNO performs both positive- and negative-frequency operations. This is because, in AFD, negative frequencies are redundant, as they can be determined by the positive ones via complex conjugation. We also remark that, since Fourier basis kernel $e^{2\pi j \phi \cdot (\xi - y_i(a))}$ lies in \(\mathcal{H}(\mathcal{M})\), which is closed under finite linear combinations, the reproducing kernel $k_a(\xi)$ is guaranteed to lie in \(\mathcal{H}(\mathcal{M})\) as well. In addition, although Fourier basis kernels are orthogonal to each other, the reproducing kernels are not. Thus, orthogonalization is still needed.

\textbf{Orthogonal reproducing kernels.} Like AFD, in AFDONet, a set of reproducing kernels in Equation \ref{eqn:kernel}, each corresponding to one of the $N$ distinct poles \(a_1,\dots, a_N \in \mathcal{M}\), need to be first orthogonalized via Gram-Schmidt orthogonalization:
\begin{equation}\label{eqn:orthor-kernel}
\begin{aligned}
    \mathscr{B}_1 &= \frac{k_{a_1}(\xi)}{\|k_{a_1}(\xi)\|_{\mathcal{H}(\mathcal{M})}},\qquad  \mathscr{B}_i = \frac{k_{a_i}(\xi) - \sum_{j=1}^{i-1} \langle k_{a_i}(\xi), \mathscr{B}_j \rangle \mathscr{B}_j}{\left\|k_{a_i} (\xi)- \sum_{j=1}^{i-1} \langle k_{a_i}(\xi), \mathscr{B}_j \rangle \mathscr{B}_j\right\|_{\mathcal{H}(\mathcal{M})}}
\end{aligned}
\end{equation}

To adaptively select the poles, we develop a maximum selection principle that is analogous to Equation \ref{eqn_msp} in AFD theory as:  
\begin{equation}\label{eqn:weakMSP}
\begin{aligned}
    |\mathrm{FM}(\Tilde{u} (x,\cdot))&\star\mathscr{B}_i| \geq \rho_i \sup \Bigl\{ |\mathrm{FM}(\Tilde{u}(x,\cdot))\star\mathscr{B}'_i|: b_i\in \mathcal{M} \backslash \{a_1,\dots, a_{i-1} \} \Bigr\},
\end{aligned}
\end{equation}
where for $i = 2, \ldots, N$, \(\mathscr{B}'_1 = \frac{k_{b_1}(\xi)}{\|k_{b_1}(\xi)\|_{\mathcal{H}(\mathcal{M})}}\),\, \(\mathscr{B}'_i = \frac{k_{b_i}(\xi) - \sum_{j=1}^{i-1} \langle k_{b_i}(\xi), \mathscr{B}_j \rangle \mathscr{B}_j}{\left\|k_{b_i}(\xi) - \sum_{j=1}^{i-1} \langle k_{b_i}(\xi), \mathscr{B}_j \rangle \mathscr{B}_j\right\|_{\mathcal{H}(\mathcal{M})}}\), and $k_{b_i}$ is the reproducing kernel at $b_i$.

\textbf{The AFD-type decoder network} reconstructs PDE solutions from \(\mathrm{FM}\left(\Tilde{u}(x,\cdot)\right)\) once the RKHS and its reproducing kernel are established. The decoder adopts a dynamic convolutional kernel network (CKN) \citep{mairal2014convolutional,chen2020dynamic}, which (i) performs cross-correlation between \(\mathrm{FM}\left(\Tilde{u}(x,\cdot)\right)\) and the orthogonal reproducing kernels \(\mathscr{B}_i\), (ii) assigns a multiplier \(0<\rho_0\leq\rho_i<1\) to the output of each convolutional layer, and (iii) incorporates skip connections for each convolutional layer. With this, the output of the dynamic CKN with \(N\) convolutional layers (each pole is associated with a layer) replicates the AFD operation and reconstructs the PDE solution as:
\begin{equation} \label{eqn:CKN}
\begin{aligned}
    \hat{u}_{N,\theta}(x,\cdot) &= \sum_{i=1}^N \langle \mathrm{FM}(\Tilde{u}(x,\cdot)),\mathscr{B}_{i+\tau_i}\rangle \mathscr{B}_{i+\tau_i} = \sum_{i=1}^N \left(\mathrm{FM}(\Tilde{u}(x,\cdot))\star\mathscr{B}_i\right)\mathscr{B}_{i+\tau_i},
\end{aligned}
\end{equation}
where \(\star\) is the cross-correlation defined as  \(f\star g(\tau_i)=\int_{\mathcal{M}}\Bar{f}(z)g(z+\tau_i) \mathrm{d} z\) and $\tau_i$ can choose between 0 and $N-i$ for convolutional layer $i$.

\textbf{Training.} Overall, our AFDONet model is trained end-to-end by minimizing the loss function:
\begin{equation}\label{eqn:loss}
\begin{aligned}
\mathcal{L}(\theta) &= \underbrace{|| u - \hat{u}_{N,\theta} ||^2_{\mathcal{H}(\mathcal{M})}}_{\text{reconstruction loss}} + \underbrace{|| \tilde{u} - \mathrm{FM}(\tilde{u}) ||^2_{H(\mathcal{M})}}_{\text{feature map loss}} + \underbrace{\omega\ \mathcal{D}_{\mathrm{KL}} \left( \mathcal{CN}(\mu, \sigma^2)\ \|\ \mathcal{CN}(0, I_r) \right)}_{\text{latent space regularization}} \\
&+ \underbrace{\sum_{i=0}^k w_i\ || \nabla^i \hat{u}_{N,\theta} - \nabla^i u ||^2_{L^2(\mathcal{M})}}_{\text{holomorphic training loss}},
\end{aligned}
\end{equation}
where \( \nabla^i u \) denotes the \( i \)-th covariant derivative defined on manifold $\mathcal{M}$. Notice that here, we extend the idea of Sobolev training \citep{czarnecki2017sobolev} to the complex domain and introduce a holomorphic training loss to enforce consistency with the ground truth solutions both at the function value level and across all orders of derivatives. This enables AFDONet to better capture the inherent smoothness and analytic structure of the target function. 

\section{Properties of AFDONet}
The design of AFDONet architecture is fully guided by the AFD theory, making it mathematically interpretable in several aspects. Here, we list three important properties of AFDONet:
\begin{enumerate}
    \item Under the loss function of Equation \ref{eqn:loss}, we can rigorously bound the error of AFDONet in Theorem \ref{thm:1}, which is formally stated and proved in Appendix \ref{appendix_thm1}.
    \item By extending the work of \citet{caragea2022quantitative}, we can rigorously prove the existence of RKHS $\mathcal{H}(\mathcal{M})$ through the construction of feature map \(\mathrm{FM}(\cdot)\) in the latent-to-RKHS network in Theorem \ref{thm:2} (see proof in Appendix \ref{appendix_thm2}). 
    \item To ensure convergence of AFDONet, we leverage the convergence mechanism of AFD to design a convergent dynamic CKN decoder by regulating the layer width, depth, and kernel complexity based on the number of samples and the intrinsic smoothness of the target function. This result is formalized in Theorem \ref{thm:3} and is stated and proved in Appendix \ref{appendix_thm3}.
\end{enumerate}

\section{Experiments} \label{experiments}
We evaluate the performance of our proposed model across three different PDEs on different manifolds whose solution spaces are not necessarily an RKHS, and compare it with recent neural PDE solvers including FNO \citep{li2020fourier,li2023fourier}, WNO \citep{tripura2023wavelet}, D-FNO \citep{li2025d}, and DeepONet \citep{lu2019deeponet}. Then, we present some key results from selected ablation studies to demonstrate the need for each of the core components of our AFDONet framework. The detailed experimental settings and the complete numerical results can be found in Appendix \ref{appendix_G}. Additional experiments and their results, including one using real-world noisy dataset and another defined on an arbitrary manifold, are discussed in Appendix \ref{appendix_h}.

\subsection{PDE problem settings}

\textbf{Helmholtz equation on planar manifold with boundary.} Let \((\mathcal M ,g)\) be a smooth planar Riemannian manifold with boundary \(\mathcal M\subset\mathbb R^{2}\) equipped with the Euclidean‐induced metric \(g\). We consider the 2-D Helmholtz equation on \(\mathcal M\) with perfectly‑matched layer (PML) absorption on \(\partial \mathcal M\) as follows:
\begin{equation}\label{eqn:helm}
\begin{aligned}
\Delta_{\mathcal M} u(x,y) \;+\; k^{2}n^{2}(x,y)\,u(x,y)
      &= -\,S(x,y), \;(x,y)\in\mathcal M,\\
\text{PML absorption}&\text{ on } \partial\mathcal M,
\end{aligned}
\end{equation}
where wavenumber \(k\) is a positive constant, \(n:\mathcal M\to\mathbb C\) is the complex refractive‑index field, and \(S:\mathcal M\to\mathbb C\) is the source density. In our experiment, the planar manifold is constructed following \citet{helmhurts-python}. Furthermore, one can show that the solutions of the Helmholtz equation naturally span an RKHS (see Appendix \ref{appendix_F}). 

\textbf{Incompressible Navier-Stokes equation on a torus.} Let \((\mathbb T^{2},g)\) denote a flat two-dimensional torus \(\mathbb T^{2}=\bigl([0,2\pi]\times[0,2\pi]\bigr)\big/\!\!\sim\) obtained by identifying opposite edges of the square and endowed with the Euclidean metric \(g\). It is worth noting that this two-dimensional torus is a compact manifold without boundary, thus it is not diffeomorphic to an open rectangular domain (which is non-compact) or a closed rectangular domain (which has boundary). In other words, even though this flat two-dimensional torus can be projected onto a rectangular domain, it does not necessarily have the same ``shape'' as a regular domain (e.g., a rectangular domain) from a topological perspective. For viscosity \(\nu>0\), we study the 2-D incompressible Navier-Stokes system:
\begin{equation}\label{eq:NS}
\begin{aligned}
\partial_{t}\mathbf u + (\mathbf u\!\cdot\!\nabla)\,\mathbf u &= -\,\nabla p \;+\; \nu\,\Delta_{\mathbb T^{2}}\mathbf u,
& &(x,y,t)\in\mathbb T^{2}\times(0,T],\\
\nabla_{\mathbb T^{2}}\!\cdot\mathbf u &= 0,
& &(x,y,t)\in\mathbb T^{2}\times[0,T],\\
\mathbf u(\,\cdot\,,0) &= \mathbf u_{0},
& &x\in\mathbb T^{2},
\end{aligned}
\end{equation}
where \(\mathbf u=(u,v):\mathbb T^{2}\times[0,T]\to\mathbb R^{2}\) is the velocity field and \(p:\mathbb T^{2}\times[0,T]\to\mathbb R\) is the pressure.  

\textbf{Homogeneous Poisson equation on a quarter-cylindrical surface.} Let \((\mathcal M ,g)\) be a smooth two-dimensional Riemannian manifold  \(\mathcal M=\Bigl\{(\cos\phi,\sin\phi,z)\in\mathbb R^{3}\;:\;0<\phi<\tfrac{\pi}{2},\;0<z<L\Bigr\}\), which restricts the lateral surface of the unit cylinder to a single quadrant. The metric \(g\) is the Euclidean metric pulled back by the embedding, so that in local coordinates \((\phi,z)\) one has \(\Delta_{\mathcal M}=\partial_{\phi\phi}+\partial_{zz}\). We study the 2-D homogeneous Poisson problem with Dirichlet boundary conditions on \(\partial\mathcal M\):
\begin{equation}
\label{eq:poisson}
\begin{aligned}
-\;\Delta_{\mathcal M} u(\phi,z) &= f(\phi,z),
&\quad &(\phi,z)\in(0,\tfrac{\pi}{2})\times(0,L),\\
u(\phi,z)&=0,
& &(\phi,z)\in\partial\mathcal M,
\end{aligned}
\end{equation}
where the source term \(f(\phi,z)=\beta\Bigl[\bigl(\tfrac{\alpha\pi}{L}\bigr)^{2}
           (1-\cos\phi)-\bigl(\cos\phi+\sin\phi-4\sin\phi\cos\phi\bigr)\Bigr]
      \sin\!\Bigl(\tfrac{\alpha\pi z}{L}\Bigr)\) \citep{kamilis2013numerical}.

Since Helmholtz and Poisson equations are stationary, we focus on the static task for both problems. And for the Navier-Stokes equation, we consider both static and autoregressive tasks.

\subsection{Results and discussions}\label{sec:results}

\textbf{Comparison with benchmark methods.} In Table \ref{tab:mae}, we report the performance of AFDONet and benchmark methods in terms of average mean absolute error (MAE) and relative $L^2$ error, as well as their standard deviations ($\pm$) obtained using five random seeds and dataset size of $5000$. Synthetic datasets are generated using finite difference and isogeometric methods, and each model is trained on a 60/20/20 split of training, validation, and testing data. We conclude that, given different dataset sizes, our AFDONet solver consistently outperforms FNO-based solvers and DeepONet across all PDE cases on manifolds. Note that FNO, D-FNO, AFNO, and WNO solvers rely on fast Fourier transform and wavelet transform, both of which are inherently defined on Euclidean domain and thus do not generalize well to curved geometries. {Specifically, FNO uses fixed global Fourier bases, which struggle with sharp discontinuities and non-periodic boundaries, and WNO uses fixed wavelets.} Meanwhile, DeepONet does not exploit the spectral sparsity of the solution space. In contrast, AFDONet adaptively selects analytic modes and employs pullback operators to ensure accurate, manifold-aware representations. {It uses adaptive rational orthogonal bases (i.e., the Takenaka-Malmquist system) parameterized by poles that are learned from input data. This allows the bases to locally adapt to the spatiotemporal dynamics of the solution profile, such as sharp gradients.}

\begin{table*}[ht!]
\centering
\caption{Average \(\mathrm{MAE}\) and relative \(L^2\) errors and their standard deviations for different PDE benchmark solvers obtained using five random seeds. Dataset size is $5000$. The best results are bolded. All values in the table have been multiplied by $100$.}
\label{tab:mae}
\begin{adjustbox}{width=\columnwidth}
\begin{tabular}{llccccc}
\toprule
\textbf{Equation} & \textbf{Metric} & \textbf{AFDONet (Ours)} & \textbf{FNO} & \textbf{D-FNO} & \textbf{WNO} & \textbf{DeepONet} \\
\midrule
\multirow{2}{*}{Helmholtz \ref{eqn:helm}} & \(\mathrm{MAE}\) 
  & \textbf{0.937 \(\pm\) 0.063} & 1.855 \(\pm\) 0.165 & 6.085 \(\pm\) 0.355 & 11.701 \(\pm\) 1.429 & 16.224 \(\pm\) 1.054 \\
& Rel. \(L^2\) 
  & \textbf{8.141 \(\pm\) 1.401} & 11.915 \(\pm\) 0.935 & 39.191 \(\pm\) 9.361 & 69.735 \(\pm\) 12.675 & 46.310 \(\pm\) 10.540 \\
\midrule
Navier-Stokes & \(\mathrm{MAE}\) 
  & \textbf{0.332 \(\pm\) 0.030} &  2.908 \(\pm\) 0.741 & 0.375 \(\pm\) 0.103  & 3.974 \(\pm\) 0.005  & 3.189 \(\pm\) 0.164  \\
(Static) \ref{eq:NS}  & Rel. \(L^2\)
  & \textbf{0.882 \(\pm\) 0.059} &  7.567 \(\pm\) 0.173 & 0.996 \(\pm\) 0.263 & 9.989 \(\pm\) 0.004 & 7.251 \(\pm\) 0.422  \\
\midrule
Navier-Stokes & \(\mathrm{MAE}\)
  & \textbf{0.068 \(\pm\) 0.037} & 2.386 \(\pm\) 0.249 & 0.142 \(\pm\) 0.009  & 3.826 \(\pm\) 0.191 & 3.168 \(\pm\) 0.221\\
(Autoreg.) \ref{eq:NS}  & Rel. \(L^2\) 
  & \textbf{0.170 \(\pm\) 0.104} & 6.288 \(\pm\) 0.820& 0.298 \(\pm\) 0.060 & 9.541 \(\pm\) 0.475 & 7.071 \(\pm\) 0.897\\
\midrule
\multirow{2}{*}{Poisson \ref{eq:poisson}} & \(\mathrm{MAE}\)
  & \textbf{0.158 \(\pm\) 0.033}  & 0.777 \(\pm\) 0.093 & 0.343 \(\pm\) 0.066  & 0.770 \(\pm\) 0.161 & 0.531 \(\pm\) 0.030  \\
& Rel. $L^2$ 
  & \textbf{0.472 \(\pm\) 0.109} & 2.567 \(\pm\) 0.502  & 0.513 \(\pm\) 0.242 & 1.754 \(\pm\) 0.943  & 0.483 \(\pm\) 0.305 \\
\bottomrule
\end{tabular}
\end{adjustbox}
\end{table*}

\textbf{Scalability of AFDONet.} In Figure \ref{fig:size}, we show that AFDONet is scalable subject to increasing dataset size for all benchmark PDE problems considered.

\textbf{Computational performance.} From Figure \ref{fig:fno}, we find empirically that the total time scales almost linearly with respect to the dataset size, as for $Z$ data points, the expensive operations (e.g., the Gram-Schmidt orthogonalization in our decoder) happen $Z$ times per epoch. In addition, we remark that the computational cost for AFDONet per data point is significantly lower than that of FNO. This is because, first, AFDONet operates on a compact latent space (with a dimension of $10$ in our experiments) after the encoder. Thus, the number of basis functions $N$ in the decoder is also small ($N=3$ in our experiments). Second, the expensive Gram-Schmidt orthogonalization scales with $N^2$, while FNO uses a lifting layer with a width (channel dimension) of $W=32$ in our setting. However, in every Fourier layer, FNO performs dense matrix multiplications to mix these channels for every frequency mode. This cost scales with $W^2$, which boils down to $32^2 = 1024$ operations per mode. Last but not least, AFDONet only performs one-sided (positive-frequency) operations due to the nature of AFD, while FNO implements both positive and negative-frequency operations, consuming twice as much memory and computational load.

\begin{figure*}[ht!] 
  \centering
  \includegraphics[width=\textwidth]{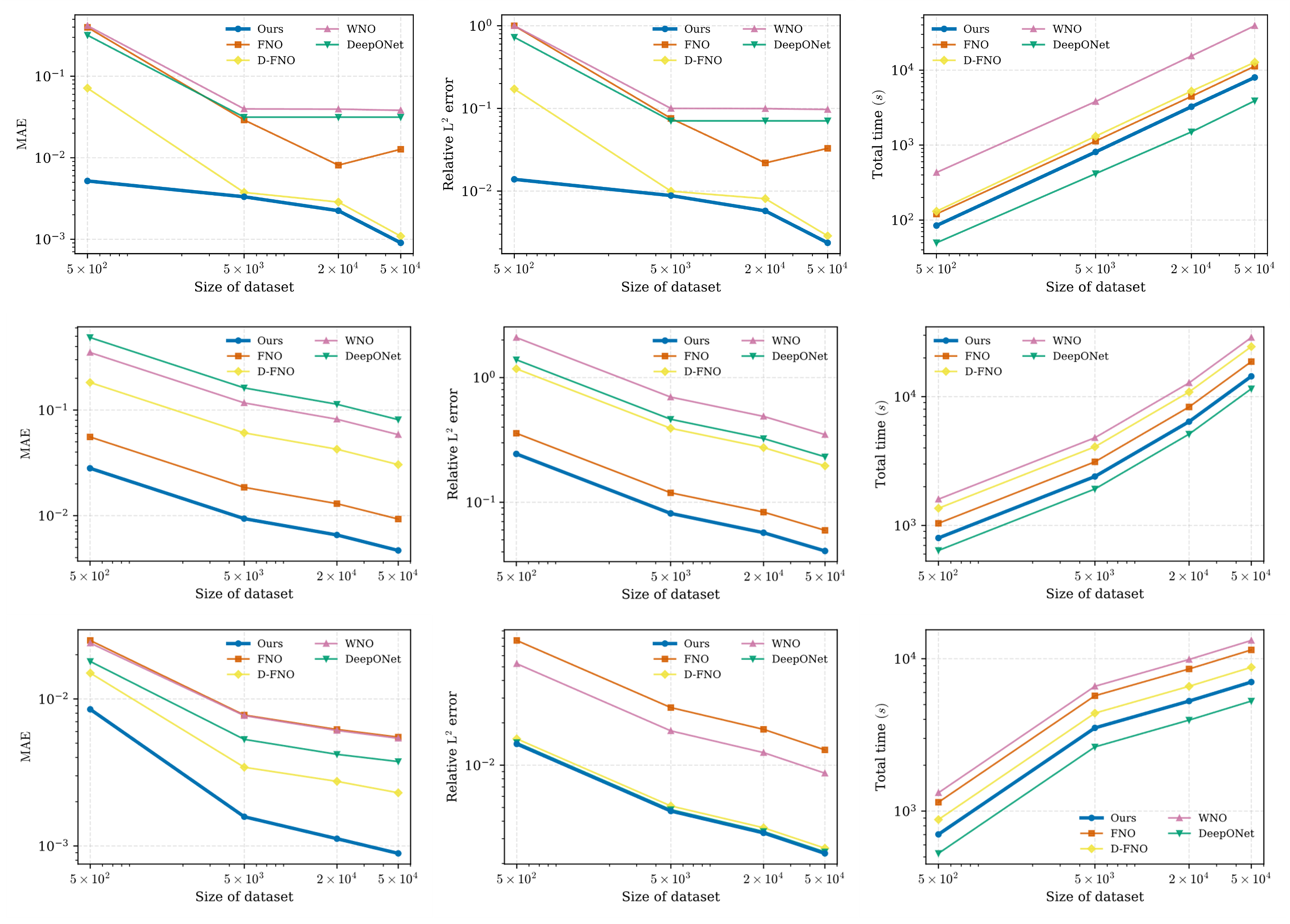}
  \caption{Average \(\mathrm{MAE}\), relative \(L^2\) error, and total computational time comparisons with respect to dataset size (averaged over five random seeds) for Navier-Stokes equation (static task) (top row), Helmholtz equation (middle row), and Poisson equation (bottom row).}\label{fig:size}
\end{figure*}

\textbf{Latent-to-RKHS network vs. Latent-to-kernel network.} Our decoder operates within an RKHS $\mathcal{H}(\mathcal{M})$, which is constructed via a latent-to-RKHS network. This network maps latent representations to their nearest RKHS within a Hilbert space. To understand the need for function restrictions within an RKHS, we conduct an ablation study and compare the latent-to-RKHS network with the latent-to-kernel network \citep{lu2020extracting}, which directly maps latent representations to a kernel function that does not necessarily satisfy the reproducing property. By comparing the results in Tables \ref{tab:mae} and \ref{tab:ablation}, we observe that latent-to-RKHS network consistently outperforms the latent-to-kernel network. Both MAE and relative $L^2$ error show at least an order of magnitude reduction for all PDE cases \textit{except} the Helmholtz equation \ref{eqn:helm}, which only yields a slight performance gain. This is due to the fact that the solution space for the the Helmholtz equation \ref{eqn:helm} is already an RKHS (See Appendix \ref{appendix_F}). This illustrates the need and benefit of restricting the latent representations to their RKHS.

\begin{table*}[ht!]
\centering
\footnotesize
\caption{Ablation studies of our AFDONet architecture show that latent-to-RKHS and AFD-type dynamic CKN decoder work synergistically to improve the solution accuracy. Note that the results for the full architecture are presented in Table \ref{tab:mae}. The dataset size is $5000$.}
\label{tab:ablation}
\begin{adjustbox}{width=\columnwidth}
\begin{tabular}{llccccc}
\toprule
\textbf{Equation} & \textbf{Metric} 
& \textbf{Latent-to-kernel} 
& \textbf{Latent-to-RKHS} & \textbf{Latent-to-RKHS} & \textbf{Latent-to-RKHS} & \textbf{Latent-to-RKHS network} \\
& & \textbf{network + AFD-type} & \textbf{network + MLP-type} & \textbf{network + propagation} & \textbf{+ AFD-type decoder} & \textbf{+ AFD-type decoder}\\
& & \textbf{decoder} & \textbf{decoder} & \textbf{decoder} & \textbf{(static CNN)} & \textbf{(without Equation \ref{eqn:weakMSP})} \\
\midrule
\multirow{2}{*}{Helmholtz \ref{eqn:helm}} 
  & \(\mathrm{MAE}\)   & 1.27E-02 \(\pm\) 1.91E-03 
  & 2.11E-01 \(\pm\) 2.04E-03 & 1.93E-01 \(\pm\) 5.11E-02 & 2.41E-02 \(\pm\) 1.16E-02 & 1.81E-01 \(\pm\) 5.16E-02 \\
  & Rel. \(L^2\)  & 8.89E-02 \(\pm\) 6.90E-03 
  & 1.17 \(\pm\) 1.22E-02 & 1.07 \(\pm\) 2.64E-01 & 1.72E-01 \(\pm\) 9.13E-02 & 1.10 \(\pm\) 2.62E-01\\
\midrule
Navier-Stokes  & \(\mathrm{MAE}\)         & 8.32E-02 \(\pm\) 1.46E-02 & 4.00E-01 \(\pm\) 4.46E-03  & 3.98E-01 \(\pm\) 4.68E-04 & 7.12E-02 \(\pm\) 1.20 E-02 & 1.27E-02 \(\pm\) 2.03E-03\\
 (Static) \ref{eq:NS}  & Rel. \(L^2\) & 2.19E-01 \(\pm\) 3.44E-02 & 1.00 \(\pm\) 9.36E-03 &  1.00 \(\pm\) 8.30E-06 & 1.85E-01 \(\pm\) 3.54E-02 & 3.71E-02 \(\pm\) 6.29E-03 \\
\midrule
Navier-Stokes
  & \(\mathrm{MAE}\)         & 6.11E-02 \(\pm\) 2.92E-03 
  & 1.45E-01 \(\pm\) 2.59E-02 & 1.48E-01 \(\pm\) 1.09E-01 & 8.32E-02 \(\pm\) 9.28E-03 &  2.53E-03 \(\pm\) 8.26E-04  \\
 (Autoreg.) \ref{eq:NS}  & Rel. \(L^2\)  & 1.58E-01 \(\pm\) 9.20E-03  
  & 3.85E-01 \(\pm\) 6.84E-02 & 3.91E-01 \(\pm\) 2.30E-01 & 2.16E-01 \(\pm\) 2.35E-02 & 7.80E-03 \(\pm\) 1.10E-03  \\
  
\midrule
\multirow{2}{*}{Poisson \ref{eq:poisson}} 
  & \(\mathrm{MAE}\)         & 3.16E-01 \(\pm\) 8.76E-04 
  & 1.71E-02 \(\pm\) 7.73E-03 & 1.81E-02 \(\pm\) 1.84E-03 & 6.08E-02 \(\pm\) 6.88E-03 & 3.53E-02 \(\pm\) 5.51E-03 \\
  & Rel. \(L^2\)  & 9.77E-01 \(\pm\) 2.31E-03 
  & 5.10E-02 \(\pm\) 2.22E-02 & 5.61E-02 \(\pm\) 2.17E-02 & 1.77E-01 \(\pm\) 5.16E-03 & 1.30E-01 \(\pm\) 1.44E-02\\
\bottomrule
\end{tabular}
\end{adjustbox}
\end{table*}

\textbf{AFD‑type decoder vs. other decoder architectures.} We conduct ablation studies by replacing our full AFD-type dynamic CKN decoder with three alternatives, namely an MLP decoder, a propagation decoder \citep{lu2020extracting,buchberger2020pruning}, and an AFD‑type decoder with a static CNN. As shown in Table \ref{tab:ablation}, full AFD-type dynamic CKN decoder achieves the best performance for all PDE cases. The improvements are especially significant for the Navier-Stokes equation \ref{eq:NS} and Poisson equation \ref{eq:poisson}, where both the MAE and relative $L^2$ error are reduced by one to two orders of magnitude compared to the benchmark decoders. Also, we observe that AFD-type decoder with a static CNN performs slightly worse than our AFD-type dynamic CKN decoder since CNN uses stationary kernels that lack adaptability to the varying spatiotemporal dynamics in PDE solutions. In contrast, dynamic CKN enables data-driven, non-stationary kernel learning, which can better capture these inherent dynamics, especially for heterogeneous equations such as the Poisson equation \ref{eq:poisson} or time-dependent equations like the Navier-Stokes equation \ref{eq:NS}.

\textbf{Need for VAE backbone.} We design a new ablation study for the Navier-Stokes example with randomized vortex field dataset (see Appendix \ref{sec:aer} for details). The randomized vortex field dataset exhibits sharp gradients and turbulence-like behavior and includes a phase shift for the $v$-component. Therefore, the dynamics of this dataset are challenging to learn. Our goal is to determine whether the $v$-component solution profile would visually match with the ground truth solution when the VAE backbone and its components are removed or replaced. From Table \ref{tab:vae}, it is clear that the synergistic integration of VAE backbone, latent-to-RKHS network, and AFD-type decoder is essential in accurately capturing $v$-component solution profile in the dataset. Guided by the AFD theory in their design and integration, these components establish the accuracy of our AFDONet solver.

\begin{table}[ht!]
\centering
\footnotesize
\caption{Ablation study of replacing VAE with multi-layer fully-connected feedforward (MLP) network as the encoder. Here, \cmark: $v$-component solution dynamics visually matches with the ground truth solution; \xmark: $v$-component solution dynamics does not visually match with the ground truth.}\label{tab:vae}
\begin{adjustbox}{width=\columnwidth}
\begin{tabular}{ccccccc}
\toprule
\textbf{Backbone} & \textbf{Full AFDONet (latent-to-RKHS} 
& \textbf{Latent-to-kernel} 
& \textbf{Latent-to-RKHS} & \textbf{Latent-to-RKHS +} & \textbf{Latent-to-RKHS +} & \textbf{Latent-to-RKHS + AFD-type} \\
& \textbf{network + AFD-type decoder +}& \textbf{network + AFD-} & \textbf{+ MLP-type} & \textbf{propagation} & \textbf{AFD-type decoder} & \textbf{decoder (without maximal}\\
& \textbf{Equation \ref{eqn:weakMSP}} & \textbf{type decoder} & \textbf{decoder} & \textbf{decoder} & \textbf{(static CNN)} & \textbf{(without Equation \ref{eqn:weakMSP})} \\
\midrule
VAE & \cmark & \xmark & \xmark & \xmark &\cmark& \cmark \\
\midrule
Without VAE (encoder & \multirow{2}{*}{\xmark}  & \multirow{2}{*}{\xmark} & \multirow{2}{*}{\xmark} & \multirow{2}{*}{\xmark} & \multirow{2}{*}{\xmark} & \multirow{2}{*}{\xmark} \\
 deterministic MLP) &&&&&&\\
\bottomrule
\end{tabular}
\end{adjustbox}
\end{table}

\section{Conclusion}

We introduce AFDONet, a new neural PDE solver for solving general nonlinear PDEs on smooth manifolds. AFDONet is the first neural PDE solver whose architectural and component design is fully guided by the AFD theory. Thus, it exhibits exceptional mathematical explainability and groundness, and enjoys several desired properties, such as convergence guarantee. AFDONet also achieves outstanding solution accuracy and competitive computational efficiency in benchmark problems studied. In particular, thanks to its deep connections with AFD theory, AFDONet shows superior performance in solving PDEs on i) arbitrary (Riemannian) manifolds, and ii) datasets with sharp gradients. Overall, this work presents a new paradigm for designing explainable neural operator frameworks.



\bibliography{reference}
\bibliographystyle{plainnat}


\appendix

\section{Proof of Theorem \ref{thm:1}} \label{appendix_thm1}

Under the loss function of Equation \ref{eqn:loss}, we can rigorously bound the error of AFDONet in Theorem \ref{thm:1}, which states:
\begin{theorem}\label{thm:1}
    Let \( \mathcal{P} \subset \mathbb{R}^d \) be compact and \( \{(p_i, u_i)\}_{i=1}^Z \) be \(Z\) i.i.d. samples with \(u_i = F(p_i) + \xi_i\), \(\xi_i \sim \text{SubGaussian}(\mathcal{H}(\mathcal{M}))\), and \(\mathbb{E}[\xi_i] = 0\), where \( F: \mathcal{P} \to \mathcal{H}(\mathcal{M}) \) is holomorphic, and \( \mathcal{H}(\mathcal{M}) \) is an RKHS with a kernel \(k_m\) whose eigenvalues decay polynomially with rate \(k\). Suppose \(L_d=\mathcal{O}(\log Z)\) and \(W_d=\mathcal{O}(Z^{\frac{1}{2(k+1)}})\) in the decoder network. For the minimizer \( \hat{\theta} \) of the loss function \( \mathcal{L}(\theta) \) in Equation \ref{eqn:loss}, there exists a constant \( C > 0 \) such that:
    \[
    \mathbb{E} \left[ \left\| \hat{u}_{N,\hat{\theta}} - F \right\|^2_{\mathcal{H}(\mathcal{M})} \right] \leq C Z^{-\frac{2k+1}{2(k+1)}} (\log Z)^2.
    \]
\end{theorem}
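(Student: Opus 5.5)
The plan is a standard nonparametric regression argument: an oracle inequality splits the excess risk of the empirical minimizer $\hat\theta$ into an approximation term and an estimation (complexity) term, and the two are then balanced with the prescribed depth $L_d$ and width $W_d$. Concretely, let $\mathcal{F}_{L_d,W_d}$ denote the class of decoder outputs $\hat u_{N,\theta}$ of the form in Equation \ref{eqn:CKN}. First I will argue that the feature map loss, the KL term and the holomorphic (Sobolev-type) term in Equation \ref{eqn:loss} are nonnegative and vanish at a suitable comparison parameter (or can be controlled by it). Then $\hat\theta$ also nearly minimizes the empirical reconstruction risk $\frac1Z\sum_i\|u_i-\hat u_{N,\theta}(p_i)\|^2_{\mathcal{H}(\mathcal{M})}$, up to a regularization remainder absorbed into $C$. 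The basic inequality with $u_i=F(p_i)+\xi_i$ then gives
\[
\mathbb{E}\|\hat u_{N,\hat\theta}-F\|^2_{\mathcal{H}(\mathcal{M})}\lesssim \inf_{g\in\mathcal{F}_{L_d,W_d}}\|g-F\|^2_{\mathcal{H}(\mathcal{M})}+\mathbb{E}\sup_{g\in\mathcal{F}_{L_d,W_d}}\Bigl(\tfrac{2}{Z}\textstyle\sum_i\langle \xi_i,g(p_i)-F(p_i)\rangle\Bigr)+\text{(empirical-to-population gap)}.
\]

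\textbf{Approximation term.} Because $F$ is holomorphic on the compact set $\mathcal{P}$ and $\mathcal{H}(\mathcal{M})$ has polynomially decaying kernel eigenvalues $\lambda_m\asymp m^{-(k+1)}$, truncating to the top $W$ eigenmodes leaves a tail of order $W^{-(2k+1)}$ in squared norm. I will realize this truncation with the AFD structure: the maximal selection principle (Equation \ref{eqn:weakMSP}) with $\rho_i\ge\rho_0$ yields the standard AFD energy-decay estimate, so $N$ orthonormalized kernels achieve an error comparable to the best $N$-term error up to a constant depending on $\rho_0$. Holomorphy in $p$ lets each coefficient map $p\mapsto\langle F(p),\mathscr{B}_i\rangle$ be approximated by the encoder/latent-to-RKHS MLPs at an exponential rate in depth. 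Choosing $L_d=\mathcal{O}(\log Z)$ then makes the parameter approximation error $\mathcal{O}(Z^{-1})$, which is negligible. The resulting approximation term is $\lesssim W_d^{-(2k+1)}$ up to logarithmic factors.

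\textbf{Estimation term.} For the sub-Gaussian noise in $\mathcal{H}(\mathcal{M})$, I will bound the supremum by a Dudley chaining integral over $\mathcal{F}_{L_d,W_d}$. The metric entropy of a network class with $S\asymp L_dW_d^2$ parameters and bounded weights is $\log\mathcal N(\epsilon)\lesssim S L_d\log(W_d/\epsilon)$. A localized (peeling) argument with Bernstein-type concentration then gives an estimation term of order $\frac{L_d^2W_d^{\,?}\log Z}{Z}$. The empirical-to-population gap is handled by the same entropy bound.

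The main obstacle is making this count produce exactly $W_d/Z$ rather than $W_d^2/Z$. With $W_d^2$ parameters, balancing $W^{-(2k+1)}$ against $W^2/Z$ would give the wrong exponent. My first attempt will exploit the AFD decomposition: the effective complexity is the number $W_d$ of selected modes, each carrying a coefficient function. Then the entropy scales like $W_d\,L_d\log Z$ in the relevant localized ball, with the other weights frozen by orthogonality. This yields an estimation term $\lesssim W_d(\log Z)^2/Z$, where one $\log Z$ comes from $L_d$ and one from chaining.

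Finally, I will balance the two terms. With $W_d\asymp Z^{1/(2(k+1))}$, the estimation term is $W_d/Z=Z^{-\frac{2k+1}{2(k+1)}}$. The approximation term $W_d^{-(2k+1)}=Z^{-\frac{2k+1}{2(k+1)}}$ matches it, so the bound $CZ^{-\frac{2k+1}{2(k+1)}}(\log Z)^2$ follows. This matches the stated depth and width choices.
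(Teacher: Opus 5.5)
Your overall scheme (an oracle inequality, Dudley chaining over a localized class, Bernstein/Bennett-type concentration, and a fixed point of order $W_dL_d\log(W_dL_d)/Z$) is the same as the paper's (Lemmas~\ref{lem:cover}--\ref{lem:risk}). The genuine gap is the step you flag yourself: making the entropy linear in $W_d$. Your proposed fix is that, in the localized ball, the effective complexity is the $W_d$ selected modes, with the other weights \emph{frozen by orthogonality}. This does not work.

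Localizing to $\{f:\|f-F\|_{\mathcal{H}}\le r\}$ only shortens the range of the Dudley integral. It does not lower the dimension of the parameter set. Orthogonality of the $\mathscr{B}_i$ constrains the output basis, not the encoder, latent-to-RKHS and CKN weights that generate the poles and coefficients. Those weights remain free, and $\hat\theta$ is optimized over all of them. The entropy bound a Lipschitz parameterization with $S$ free parameters delivers is $S\log(\cdot/\delta)$, localized or not. So $S\asymp L_dW_d^2$ gives exactly the wrong exponent you describe.

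The paper does not derive the linear count from AFD structure. In Lemma~\ref{lem:cover} it takes the parameters of $\mathcal{N}_{L_d,W_d,N}$ to range over a ball $\mathcal{B}^{2W_dL_d}(R)$ with $R=\mathcal{O}(W_dL_d)$. It also assumes $\|f_\theta-f_{\theta'}\|_{\mathcal{H}}\le L_f\|\theta-\theta'\|_2$. The volumetric bound then gives $\log\mathcal{N}(\delta)\le\dot C W_dL_d\log(W_dL_d/\delta)$ directly. The rest is Lemma~\ref{lem:bound}, Lemma~\ref{lem:risk}, and integrating the tail with $t_0=\log Z$ and $\mathcal{L}\le M$. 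To close your argument you need this parameter-count hypothesis, or an architecture that really has $\mathcal{O}(W_dL_d)$ parameters; orthogonality cannot supply it.

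Your balancing step is also unsupported, and it is not where the exponent comes from:
\begin{itemize}
\item Polynomial eigenvalue decay controls $L^2$-tails of an $\mathcal{H}$-ball, not tails in the $\mathcal{H}(\mathcal{M})$-norm. A $W^{-(2k+1)}$ truncation error in $\|\cdot\|_{\mathcal{H}(\mathcal{M})}$ therefore needs a source condition that is not among the hypotheses.
\item The span of $N$ adaptively chosen reproducing kernels is not the top eigenspace. AFD under a weak maximal selection principle is a greedy-type algorithm with greedy-type rates, not in general comparable to the best $N$-term error.
\item You identify the width $W_d$ with the number of selected modes. In the paper that number is the separate quantity $N$.
\end{itemize}

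In the paper there is no balancing. Equation~\ref{eqn:order} shows that the complexity term alone, at the prescribed $L_d$ and $W_d$, already equals $Z^{-\frac{2k+1}{2(k+1)}}(\log Z)^2$. The approximation term $\inf_\theta\mathcal{L}(\theta)$ is treated as lower order. This is consistent with Theorem~\ref{thm:3} at $\varepsilon\asymp Z^{-1/2}$, which yields exactly $L_d=\mathcal{O}(\log Z)$, $W_d=\mathcal{O}(Z^{\frac{1}{2(k+1)}})$ and squared approximation error $\mathcal{O}(Z^{-1})$. That is the approximation input you should use instead of spectral truncation.

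Finally, the extra loss terms do not \emph{vanish at a comparison parameter}. The KL term is zero only when the posterior equals the prior, i.e.\ when the latent ignores $p$. The paper simply posits $\mathcal{L}(\theta)\asymp\|\hat u_{N,\theta}-F\|^2_{\mathcal{H}}$, and you would need the same assumption.
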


We introduce and prove a few lemmas before proving Theorem \ref{thm:1}. We assume that the neural network \(f_\theta\) is Lipschitz continuous with respect to hyperparameters \(\theta\) (i.e., \(\|f_{\theta} - f_{\theta'}\|_{\mathcal{H}} \leq L_f \|\theta - \theta'\|_2\)).
\begin{lemma}\label{lem:cover}
    For any $0 < \delta < 1$, for the class of complex‑analytic networks with depth \( L_d \) and width \( W_d \), denoted as \(\mathcal{N}_{L_d,W_d,N}\), there exists \(\dot{C}>0\) such that: 
    \[
        \log \mathcal{N}\left({\delta, \mathcal{N}_{L_d,W_d,N}, \|{\cdot}}\|_{\mathcal{H}}\right) \leq \dot{C} W_dL_d \log\left(\frac{W_dL_d}{\delta}\right),
    \]
    where \(\mathcal{N}\left({\delta, \mathcal{N}_{L_d,W_d,N}, \|{\cdot}}\|_{\mathcal{H}}\right)\) means the \(\delta\)-covering number of \(\left(\mathcal{N}_{L_d,W_d,N}, \|{\cdot}\|_{\mathcal{H}}\right)\).
\end{lemma}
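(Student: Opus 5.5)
The plan is the standard parametric covering argument: bound the number of parameters, put the parameters in a bounded box, discretize the box, and push the discretization through the Lipschitz assumption $\|f_\theta - f_{\theta'}\|_{\mathcal{H}} \leq L_f\|\theta-\theta'\|_2$ stated just before Lemma \ref{lem:cover}.

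\textbf{Parameter count.} I first count the parameters of a network in $\mathcal{N}_{L_d,W_d,N}$, namely the dynamic CKN decoder with $L_d$ layers of width $W_d$. The $N$ poles (with $N\le L_d$, one per layer), the multipliers $\rho_i$, and the kernel weights $\nu_i,y_i$ from Equation \ref{eqn:kernel} all enter as trainable coordinates. The bound as stated is linear in $W_dL_d$, not quadratic. I would therefore read the class as having $P \le c_0 W_d L_d$ real parameters, counting complex parameters twice. This reading is natural for the convolutional layers, where each layer carries $\mathcal{O}(W_d)$ kernel coefficients rather than a dense $W_d\times W_d$ matrix. The constant $c_0$ is absorbed into $\dot C$.

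\textbf{Bounded parameter box.} Next I assume that all parameters lie in a bounded set $\Theta\subset[-B,B]^P$. This is implicit in the paper: $\rho_i\in[\rho_0,1)$, the poles lie in the compact parameter domain, and the weights are bounded as usual in such covering arguments. I take $B$ to be at most polynomial in $W_dL_d$, which is the case if, for example, $B$ is an absolute constant.

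\textbf{Discretization and transfer to function space.} I take a $\delta/L_f$-net of $\Theta$ in $\|\cdot\|_2$. A cubic grid of mesh $\delta/(L_f\sqrt P)$ gives
\[
\mathcal{N}\bigl(\delta/L_f,\Theta,\|\cdot\|_2\bigr)\le \Bigl(\tfrac{2B L_f\sqrt{P}}{\delta}+1\Bigr)^{P}.
\]
By the Lipschitz assumption, the image of this net under $\theta\mapsto f_\theta$ is a $\delta$-cover of $\mathcal{N}_{L_d,W_d,N}$ in $\|\cdot\|_{\mathcal H}$. Taking logarithms gives
\[
\log\mathcal{N}\le P\log\Bigl(\tfrac{3BL_f\sqrt P}{\delta}\Bigr),
\]
using $\delta<1$ to absorb the $+1$.

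\textbf{Absorbing constants.} With $P\le c_0W_dL_d$, I have $\sqrt{P}\le c_0 W_dL_d$. Since $\delta<1$ and $W_dL_d\ge1$, the constant factor $3BL_f c_0$ inside the logarithm contributes at most an additive term. That term is dominated by a constant multiple of $\log(W_dL_d/\delta)$, because $\log(W_dL_d/\delta)\ge\log 2$ whenever $W_dL_d\geq 2$ or $\delta\le 1/2$. The remaining edge case is handled by enlarging $\dot C$. This yields the claimed bound $\dot C\,W_dL_d\log(W_dL_d/\delta)$.

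\textbf{Main obstacle.} The delicate step is justifying the parameter count $P=\mathcal{O}(W_dL_d)$ and ensuring that the Lipschitz constant $L_f$ does not depend exponentially on $L_d$. The Gram--Schmidt step in Equation \ref{eqn:orthor-kernel} is Lipschitz only while the residual norms $\|k_{a_i}-\sum_j\langle k_{a_i},\mathscr B_j\rangle\mathscr B_j\|$ stay bounded below. I would enforce this by requiring the poles to be separated, so that the kernel Gram matrix has its smallest eigenvalue bounded below. I would then treat $L_f$ as a fixed constant, as the paper's assumption does. If $L_f$ instead grew like $C^{L_d}$, the logarithm would pick up an extra $L_d$ term. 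Under $L_d=\mathcal{O}(\log Z)$ this term is still of order $\log(W_dL_d/\delta)$ up to constants, so the bound would survive in the regime where it is later applied in Theorem \ref{thm:1}.
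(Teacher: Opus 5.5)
Your proposal is essentially the paper's proof. The paper likewise posits $p=2W_dL_d$ real parameters confined to an $\ell_2$-ball of radius $R=\mathcal{O}(W_dL_d)$ and transfers the cover through the assumed Lipschitz map $\theta\mapsto f_\theta$. It uses the volumetric bound $\log\mathcal{N}(\delta/L_f,\mathcal{B}^{p}(R),\|\cdot\|_2)\le p\log(3L_fR/\delta)$ where you use a cubic grid, which only costs a harmless extra $\sqrt{P}$ inside the logarithm. Your explicit flagging of the linear parameter count and the depth-independent $L_f$ is, if anything, more careful than the original.

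One caveat: enlarging $\dot C$ cannot rescue the corner case $W_dL_d=1$, $\delta\to1^-$. There $\log(W_dL_d/\delta)\to0$, but the left side need not vanish. The paper's final absorption step has exactly the same defect, and it is irrelevant in the growing-$W_dL_d$ regime where the lemma is applied.
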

\begin{proof}
    Let us consider the \(p\)-dimensional \(\ell_2\)-unit ball \(\mathcal{B}^p(1) = \{x \in \mathbb{R}^p : \|x\|_2 \leq 1\}\). Results for covering \(\mathcal{B}^p\) \citep{wainwright2019high} concludes:
    \begin{equation}\label{eqn:cov_1}
        \log \mathcal{N}(\delta, \mathcal{B}^p(1), \|\cdot\|_2) \leq p \log\left(1 + \frac{2}{\delta}\right) \leq p \log\left(\frac{3}{\delta}\right).
    \end{equation}
    Extending this result to a \(\ell_2\)-ball of radius \(R\), Equation \ref{eqn:cov_1} becomes:
    \begin{equation}\label{eqn:cov_2}
        \log \mathcal{N}(\delta, \mathcal{B}^p(R), \|\cdot\|_2) \leq p \log\left(1 + \frac{2R}{\delta}\right) \leq p \log\left(\frac{3R}{\delta}\right)
    \end{equation}
    by rescaling \(\delta\) in the RHS of Equation \ref{eqn:cov_1} with \(\delta/R\). Furthermore, by letting \(p=2W_dL_d\), Equation \ref{eqn:cov_2} becomes: 
    \begin{equation}
        \log \mathcal{N}(\delta, \mathcal{B}^{2W_dL_d}(R), \|\cdot\|_2) \leq 2W_dL_d \log\left(\frac{3R}{\delta}\right).
    \end{equation}
    From the Lipschitz property and the fact that the parameter space of \(\mathcal{N}_{L_d,W_d,N}\) can be controlled by \(\mathcal{B}^{2W_dL_d}(R)\), we have:
    \begin{equation}\label{eqn:cov_3}
        \log \mathcal{N}\left(\delta, \mathcal{N}_{L_d,W_d,N}, \|\cdot\|_{\mathcal{H}}\right) \leq \log \mathcal{N}\left(\frac{\delta}{L_f}, \mathcal{B}^{2W_dL_d}(R), \|\cdot\|_2\right)\leq 2W_dL_d \log\left(\frac{3L_fR}{\delta}\right),
    \end{equation}
    where $L_f$ is the Lipschitz constant. With \(R=\mathcal{O}(W_dL_d)\), Equation \ref{eqn:cov_3} leads to:
    \begin{equation}\label{eqn:cov_4}
        \log \mathcal{N}\left(\delta, \mathcal{N}_{L_d,W_d,N}, \|\cdot\|_{\mathcal{H}}\right) \leq \dot{C} W_dL_d \log\left(\frac{W_dL_d}{\delta}\right),
    \end{equation}
    which completes the proof.
\end{proof}

\begin{lemma}\label{lem:e}
    For \( a > 1 \) and \( 0 < r \leq \min(a, e) \) where \( e \) is the base of the natural logarithm, there exists \(b > 0\) that satisfies the following inequality:
\[
r \sqrt{\log\left(\frac{a}{r}\right)} \leq \sqrt{b} \sqrt{r \log a}.
\]
\end{lemma}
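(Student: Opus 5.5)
We need to show that $r\sqrt{\log(a/r)} \le \sqrt{b}\sqrt{r\log a}$. Since both sides are nonnegative for $0<r\le a$ (note $\log(a/r)\ge 0$ there, and $\log a>0$ because $a>1$), we may square and reduce to proving
\[
r^2 \log\left(\frac{a}{r}\right) \le b\, r \log a, \qquad\text{i.e.}\qquad r\log\left(\frac{a}{r}\right) \le b\log a,
\]
after dividing by $r>0$. Expanding, $r\log(a/r) = r\log a - r\log r$. The plan is to bound the two pieces separately and then take $b$ to be a constant large enough to absorb both.

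\textbf{First term.} Since $r\le e$, we have $r\log a\le e\log a$.

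\textbf{Second term.} The function $-r\log r$ on $(0,e]$ attains its maximum $1/e$ at $r=1/e$. Hence $-r\log r\le 1/e$.

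\textbf{Combining.} This gives $r\log(a/r)\le e\log a+1/e$. It remains to dominate the additive constant $1/e$ by a multiple of $\log a$.

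This last step is the only real obstacle. As $a\downarrow 1$, $\log a\to 0$, so no uniform $b$ can absorb a fixed constant; indeed, the squared inequality genuinely fails as $a\to1$ for fixed small $r$. Since the statement only asserts that some $b>0$ exists, I would let $b$ depend on $a$ and set
\[
b = e+\frac{1}{e\log a},
\]
which is finite and positive for every $a>1$. With this choice, $e\log a+1/e = b\log a$, so the reduced inequality holds.

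If a uniform constant is wanted for the later application (where $a$ will be of order $W_dL_d/\delta$, so $\log a$ is bounded below), I would add a remark: assuming $a\ge a_0>1$ gives the uniform choice $b=e+1/(e\log a_0)$.

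Finally, taking square roots of the reduced inequality and multiplying back by $r$ returns the stated inequality.
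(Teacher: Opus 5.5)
Your argument is correct, but it is organized differently from the paper's.

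The paper splits into the cases $1<r\le\min(a,e)$ and $0<r\le 1$. In the first case it takes $b=e$ and checks the squared form $(r-e)\log a\le r\log r$ directly. In the second case it maximizes $r\log(a/r)/\log a$ exactly over $r$, with critical point $r=a/e$, and gets $b\ge a/(e\log a)$ when $a\le e$ and $b\ge 1$ when $a\ge e$.

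You avoid the case split by bounding $r\log a$ and $-r\log r$ separately, via $r\le e$ and $\sup_{r>0}(-r\log r)=1/e$. This gives one explicit constant $b=e+1/(e\log a)$, valid for every admissible $r$ at once.

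The paper's constants are somewhat sharper. For $r\le 1$ it computes the exact supremum, and its two cases combine through a maximum rather than your sum. However, the paper leaves that combination implicit: one must take $b=\max\{e,\,a/(e\log a)\}$ oneself when $a\le e$, and $b=e$ when $a\ge e$.

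Your route buys three things:
\begin{itemize}
\item It is shorter.
\item It makes explicit the uniformity in $r$ that is needed when the lemma is invoked inside Lemma~\ref{lem:bound}. There it is applied with $a=W_dL_d$ and $2r$ in place of $r$, not with $a$ of order $W_dL_d/\delta$ as your remark suggests.
\item It shows plainly that the dependence of $b$ on $a$ is unavoidable. Both routes blow up like $1/(e\log a)$ as $a\downarrow 1$, and both give a uniform constant once $\log a$ is bounded away from zero.
\end{itemize}
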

\begin{proof}
For the case \(1< r \leq \min(a, e)\), we may choose \(b=e\). Squaring both sides of the inequality and rearranging lead to \((r - e)\log a \leq r \log r\). Suppose \( r = e \), the inequality is automatically satisfied for any \(a>1\). Suppose \(r < e \), since \( a \geq r \), we have: \((r - e)\log a \leq (r - e)\log r\). Thus, it suffices to show \((r - e)\log r \leq r \log r\), which is equivalent to showing \(e \log r \geq 0\). This is automatically satisfied because \( 0 < \log r \leq 1 \).

For the case \(0<r \leq 1\), we rearrange the inequality and obtain \(b \geq \frac{r(\log a- \log r)}{\log a} > 0\). Furthermore, $\frac{r(\log a- \log r)}{\log a}$ reaches its maximum, $\frac{a}{e\log a}$, at \(r = \frac{a}{e}\). Thus, suppose $a \leq e$, then we may choose $b \geq \frac{a}{e\log a}$ and the inequality is satisfied. Suppose $a \geq e$, then $\max \frac{r(\log a- \log r)}{\log a} = 1$ within $0<r\leq 1$. Thus, we may choose $b\geq 1$ and the inequality is satisfied.
\end{proof}


\begin{lemma}\label{lem:bound}
There exists \(\widetilde{C}>0\) such that:
    \[\mathbb{E}_\epsilon\left[\sup_{f \in \mathscr{F}}\left|\frac{1}{Z} \sum_{i=1}^Z \epsilon_i f_\theta(p_i)\right|\right]\leq \widetilde{C} \sqrt{\frac{r W_dL_d \log(W_dL_d)}{Z}},\] where \(\epsilon_i\) are i.i.d. Rademacher variables and \(\mathscr{F}\) is a function class for a radius \(0<r \leq e\) defined as \( \{ f \in \mathcal{N}_{L_d,W_d,N} : \|f - F\|_{\mathcal{H}} \le r \}\). 
\end{lemma}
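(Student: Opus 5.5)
The plan is to bound the Rademacher complexity of the localized class $\mathscr{F}$ by Dudley's entropy integral, use Lemma \ref{lem:cover} to control the metric entropy, and then use Lemma \ref{lem:e} to convert the $r\sqrt{\log(\cdot/r)}$ term into the $\sqrt{r\log(\cdot)}$ form in the statement. First, centre the class. For $f\in\mathscr{F}$, write $f_\theta(p_i)=F(p_i)+(f_\theta-F)(p_i)$. The term $\frac1Z\sum_i\epsilon_i F(p_i)$ does not depend on $f$, so it contributes only $O(Z^{-1/2})$ by Khintchine's inequality. That is dominated by the claimed rate and can be absorbed into $\widetilde C$. It therefore suffices to bound the Rademacher average of the shifted class $\mathscr{F}-F$, whose elements have $\mathcal{H}$-norm at most $r$. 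By the reproducing property, $|g(p)|\le \sup_p\sqrt{k_m(p,p)}\,\|g\|_{\mathcal H}\le \kappa r$. So the empirical $L^2(P_Z)$ diameter of $\mathscr{F}-F$ is at most $2\kappa r$, and empirical covering numbers are dominated by $\mathcal{H}$-covering numbers with the radius rescaled by $\kappa$.

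Second, apply Dudley's chaining bound. Conditionally on the $p_i$, the process $g\mapsto Z^{-1/2}\sum_i\epsilon_i g(p_i)$ is sub-Gaussian with respect to the empirical $L^2$ metric. This gives
\[
\mathbb{E}_\epsilon\sup_{g\in\mathscr{F}-F}\Bigl|\frac1Z\sum_i\epsilon_i g(p_i)\Bigr|\le \frac{C_0}{\sqrt Z}\int_0^{2\kappa r}\sqrt{\log\mathcal N(\delta/\kappa,\mathscr{F},\|\cdot\|_{\mathcal H})}\,\mathrm d\delta .
\]
Since $\mathscr{F}\subset\mathcal N_{L_d,W_d,N}$, and a $\delta$-cover of the larger class yields a $2\delta$-cover of the subset, Lemma \ref{lem:cover} bounds the integrand by $\sqrt{\dot C W_dL_d\log(W_dL_d/\delta)}$, up to constants absorbed into $\delta$. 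With the substitution $\delta=2\kappa r s$ and the bound $\int_0^1\sqrt{\log(A/s)}\,\mathrm ds\le\sqrt{\log A}+\sqrt{\pi}/2$, the integral is at most $C_1\,r\sqrt{W_dL_d}\,\sqrt{\log(W_dL_d/r)}$.

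Third, convert this to the stated form. Apply Lemma \ref{lem:e} with $a=W_dL_d$, which is greater than $1$ and, for the network sizes considered, at least $e\ge r$. This yields $r\sqrt{\log(a/r)}\le\sqrt b\sqrt{r\log a}$. Combining everything gives $\widetilde C\sqrt{rW_dL_d\log(W_dL_d)/Z}$, as claimed.

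The main obstacle is the second step. Lemma \ref{lem:cover} controls a cover of the whole network class, not the localized ball $\mathscr{F}$. The Dudley integral over $[0,2\kappa r]$ is what yields the factor $r$; a naive single-scale bound would lose it. If the $\log(1/\delta)$ integrability turns out to be delicate near $0$, I would first try truncating the integral at $\delta_0=r/\sqrt Z$, and accept an extra term $O(r/\sqrt Z)$, which is dominated by the main term.
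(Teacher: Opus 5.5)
Your route is the paper's: Dudley's integral over a range of length of order $r$, Lemma~\ref{lem:cover} for the full network class as a bound on the covering numbers of $\mathscr{F}$, then Lemma~\ref{lem:e} with $a=W_dL_d$. Your bound $\int_0^1\sqrt{\log(A/s)}\,\mathrm{d}s\le\sqrt{\log A}+\sqrt{\pi}/2$ is a cleaner, non-asymptotic replacement for the paper's incomplete-gamma computation. The gap is in your centring step. The term $\frac1Z\sum_i\epsilon_iF(p_i)$ has size exactly of order $\|F\|_{L^2(P_Z)}/\sqrt Z$, since Khintchine bounds it above and below. Absorbing it into $\widetilde C\sqrt{rW_dL_d\log(W_dL_d)/Z}$ therefore forces $\widetilde C\gtrsim\|F\|_{L^2(P_Z)}/\sqrt{rW_dL_d\log(W_dL_d)}$. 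That constant blows up as $r\to0$ and destroys the $\sqrt r$-dependence, which is the whole point of the lemma. This bites exactly where the lemma is used. Lemma~\ref{lem:risk} takes $r\asymp W_dL_d\log(W_dL_d)/Z$, and under the scaling of Theorem~\ref{thm:1} (with $k>0$) this gives $rW_dL_d\log(W_dL_d)\asymp (W_dL_d\log(W_dL_d))^2/Z\to0$.

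The step cannot be repaired, because the uncentred statement itself fails. For any $f\in\mathscr{F}$ you have $\|f-F\|_{L^2(P_Z)}\le\kappa r$, so Szarek's form of Khintchine's inequality gives
\[
\mathbb{E}_\epsilon\sup_{f\in\mathscr{F}}\Bigl|\frac1Z\sum_{i=1}^Z\epsilon_if(p_i)\Bigr|\;\ge\;\frac{\|F\|_{L^2(P_Z)}-\kappa r}{\sqrt{2Z}}.
\]
This exceeds the claimed bound whenever $\mathscr{F}\neq\emptyset$ and $\sqrt2\,\widetilde C\sqrt{rW_dL_d\log(W_dL_d)}<\|F\|_{L^2(P_Z)}-\kappa r$. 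What your chaining argument does establish correctly is the bound for the centred class $\mathscr{F}-F$. For that you must add the anchor term $\mathbb{E}_\epsilon\bigl|\frac1Z\sum_i\epsilon_ig_0(p_i)\bigr|\le\kappa r/\sqrt Z$ for a fixed $g_0\in\mathscr{F}-F$; Dudley needs it for a supremum of absolute values, and the main term dominates it. The centred version is what a localization argument actually needs, so state the lemma that way rather than claim absorption. The paper's proof does not resolve this either. It applies Dudley directly to the uncentred $\mathbb{E}_\epsilon\sup_{\mathscr{F}}|\cdot|$, with no anchor term and with the $\mathcal{H}$-norm standing in for the empirical metric, so the same difficulty is hidden rather than handled. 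One smaller point: $k_m$ is the kernel of $\mathcal{H}(\mathcal{M})$ on $\mathcal{M}$, whereas $p_i\in\mathcal{P}$ and $f(p_i)\in\mathcal{H}(\mathcal{M})$. So $|g(p)|\le\kappa\|g\|_{\mathcal{H}}$ is ill-typed in the paper's setting. Reading $\|f-F\|_{\mathcal{H}}$ as $\sup_{p\in\mathcal{P}}\|f(p)-F(p)\|_{\mathcal{H}(\mathcal{M})}$ gives the metric comparison with $\kappa=1$, and Pinelis' inequality supplies sub-Gaussian increments for the resulting Hilbert-valued Rademacher process.
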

\begin{proof}
    From Dudley's entropy integral bound \citep{wainwright2019high}, we have:
    \begin{equation}\label{eqn:dudley}
        \mathbb{E}_\epsilon\left[\sup_{f \in \mathscr{F}}\left|\frac{1}{Z} \sum_{i=1}^Z \epsilon_i f(p_i)\right|\right]\leq \frac{24}{\sqrt{Z}}\int_\varepsilon^{2r}\sqrt{\log\mathcal{N}\left(t, \mathscr{F}, \|\cdot\|_{\mathcal{H}}\right)}dt.
    \end{equation}
    Since \(\mathcal{N}(\delta, \mathscr{F}, \|\cdot\|_{\mathcal{H}}) \leq \mathcal{N}(\delta, \mathcal{N}_{L_d,W_d,N}, \|\cdot\|_{\mathcal{H}})\) and according to Lemma \ref{lem:cover}, Equation \ref{eqn:dudley} becomes:
    \begin{equation}\label{eqn:dudley_2}
    \begin{aligned}
        \mathbb{E}_\epsilon\left[\sup_{f \in \mathscr{F}}\left|\frac{1}{Z} \sum_{i=1}^Z \epsilon_i f(p_i)\right|\right]&\leq \frac{24}{\sqrt{Z}}\int_\varepsilon^{2r}\sqrt{\log\mathcal{N}\left(t, \mathcal{N}_{L_d,W_d,N}, \|\cdot\|_{\mathcal{H}}\right)} \, dt\\
        &\leq \frac{24}{\sqrt{Z}}\int_\varepsilon^{2r}\sqrt{\dot{C} W_dL_d \log\left(\frac{W_dL_d}{t}\right)} \, dt.\\
    \end{aligned}
    \end{equation}
    To evaluate the integral on the RHS of Equation \ref{eqn:dudley_2}, we apply the change of variables technique by defining \(u = \log\left(\frac{W_d L_d}{t}\right)\) (and thus \(dt=-W_dL_de^{-u}du\)):
    \begin{equation}\label{eqn:integral}
        \begin{aligned}
            \int_\varepsilon^{2r}\sqrt{\log\left(\frac{W_dL_d}{t}\right)}dt&=\int_{\log(\frac{W_d L_d} {2r})}^{\log(\frac{W_d L_d}  {\varepsilon})} \sqrt{u} \cdot W_d L_d e^{-u} \, du\\
            &=W_d L_d \left[ \Gamma\left(\frac{3}{2}, \log\left(\frac{W_d L_d}{2r}\right)\right) - \Gamma\left(\frac{3}{2}, \log\left(\frac{W_d L_d}{\varepsilon}\right)\right) \right]\\
            &=2r \sqrt{\log\left(\frac{W_d L_d}{2r}\right)} + \mathcal{O}\left(\frac{r}{\log(\frac{W_d L_d} {2r})}\right),
        \end{aligned}
    \end{equation}
    where \(\Gamma(s, x) = \int_x^\infty t^{s-1} e^{-t} dt\) is the upper incomplete gamma function. 
    
    Substituting Equation \ref{eqn:integral} into Equation \ref{eqn:dudley_2} and applying Lemma \ref{lem:e} lead to:
    \begin{equation}
    \begin{aligned}
        \mathbb{E}_\epsilon\left[\sup_{f \in \mathscr{F}}\left|\frac{1}{Z} \sum_{i=1}^Z \epsilon_i f(p_i)\right|\right]& \leq 24\cdot 2r\sqrt{\frac{\dot{C}W_d L_d\log\left( \frac{W_d L_d}{2r}\right)}{Z}}\\
        &\leq 24\sqrt{b}\sqrt{\frac{2r\dot{C}W_d L_d\log\left( W_d L_d\right)}{Z}}\\
        &\leq\widetilde{C}\sqrt{\frac{rW_d L_d\log\left( W_d L_d\right)}{Z}},
    \end{aligned}
    \end{equation}
    where \(\widetilde{C}\geq 24\sqrt{2b\dot{C}}\).
\end{proof}

\begin{lemma}\label{lem:risk}
    Let $\hat{\theta}$ minimize the loss function $\mathcal{L}$ in Equation \ref{eqn:loss}. With probability at least \(1 - e^{-t}\) for all \(t\geq 0\), 
    \[
        \mathcal{L}(\hat{\theta}) \leq \inf_{\theta} \mathcal{L}(\theta) + \hat{C} \frac{W_dL_d \log(W_dL_d) + t}{Z}
    \]
    holds for some \(\hat{C}\).
\end{lemma}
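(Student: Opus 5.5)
We read $\mathcal{L}$ in the statement as the population version of the loss in Equation \ref{eqn:loss}, i.e.\ its expectation over the data distribution of $(p,u)$, and $\hat{\theta}$ as the minimizer of its empirical counterpart $\hat{\mathcal{L}}_Z$ over the $Z$ samples; otherwise the claim would be trivial. The plan is a standard localized-Rademacher / Talagrand argument. Let $\theta^\ast$ attain $\inf_\theta \mathcal{L}(\theta)$, or come within an arbitrarily small slack of it. The basic decomposition is
\begin{equation*}
\mathcal{L}(\hat{\theta}) - \mathcal{L}(\theta^\ast) = \bigl[\mathcal{L}(\hat{\theta}) - \hat{\mathcal{L}}_Z(\hat{\theta})\bigr] + \bigl[\hat{\mathcal{L}}_Z(\hat{\theta}) - \hat{\mathcal{L}}_Z(\theta^\ast)\bigr] + \bigl[\hat{\mathcal{L}}_Z(\theta^\ast) - \mathcal{L}(\theta^\ast)\bigr].
\end{equation*}
The middle bracket is $\le 0$ by optimality of $\hat{\theta}$. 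So everything reduces to controlling the centered empirical process indexed by the excess-loss class $\{\ell_\theta - \ell_{\theta^\ast}\}$.

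First, we treat each term of the loss separately. The reconstruction term is quadratic in $f_\theta(p) - u$, and the Sobolev-type holomorphic term is also quadratic. The feature-map and KL terms are bounded, Lipschitz functionals of $\theta$. Under a boundedness assumption on the network outputs and truncation of the sub-Gaussian noise at level $\mathcal{O}(\sqrt{\log Z})$ (the tail contributes only $\mathcal{O}(Z^{-1})$), each excess loss $g_\theta = \ell_\theta - \ell_{\theta^\ast}$ satisfies two conditions:
\begin{itemize}
\item a Lipschitz contraction $|g_\theta| \lesssim \|f_\theta - f_{\theta^\ast}\|_{\mathcal{H}}$;
\item a Bernstein condition $\mathbb{E}[g_\theta^2] \le B\,\mathbb{E}[g_\theta]$.
\end{itemize}
The Bernstein condition is where the squared structure matters: since $\mathbb{E}\xi = 0$, the cross term vanishes and $\mathbb{E} g_\theta$ is comparable to $\|f_\theta - F\|^2 - \|f_{\theta^\ast} - F\|^2$.

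Second, we apply Talagrand's concentration inequality in Bousquet's form to the localized class $\{g_\theta : \mathbb{E} g_\theta \le r\}$. With probability at least $1-e^{-t}$, the supremum of the centered process is at most
\begin{equation*}
2\,\mathbb{E}\,\mathrm{Rad} + \sqrt{\frac{2Brt}{Z}} + \frac{ct}{Z}.
\end{equation*}
By symmetrization, Ledoux--Talagrand contraction, and Lemma \ref{lem:bound} (with the $\mathcal{H}$-ball of radius $r$ around $F$, which contains the localized class up to constants), the Rademacher term is bounded by the sub-root function
\begin{equation*}
\psi(r) = \widetilde{C}\sqrt{\frac{r\,W_dL_d\log(W_dL_d)}{Z}}.
\end{equation*}

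Third, we use the fixed-point theorem for local Rademacher complexities (Bartlett--Bousquet--Mendelson). Its fixed point $r^\ast$ solves $r = \psi(r)$, giving $r^\ast \asymp W_dL_d\log(W_dL_d)/Z$. The theorem then yields, with probability $1-e^{-t}$,
\begin{equation*}
\mathbb{E} g_{\hat{\theta}} \lesssim r^\ast + \frac{t}{Z},
\end{equation*}
which is exactly the claimed bound with a suitable $\hat{C}$ that absorbs $B$, $\widetilde{C}$, and the truncation constants.

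The main obstacle is the Bernstein/variance condition together with the restriction $r \le e$ in Lemma \ref{lem:bound}. The noise is only sub-Gaussian rather than bounded, so the truncation must be done so that the $\log$ factor it introduces is absorbed into $\log(W_dL_d)$; this uses $W_dL_d = Z^{\Omega(1)}$ under the theorem's scaling. The localization radius must also be peeled over dyadic shells $r_k = 2^k r^\ast$ up to the constant $e$ so that Lemma \ref{lem:bound} applies on each shell. I would first attempt the peeling directly with a union bound over the $\mathcal{O}(\log Z)$ shells, replacing $t$ by $t + \log\log Z$, and absorb that term into the constant.
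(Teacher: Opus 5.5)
Your skeleton matches the paper's: symmetrization, Bousquet's inequality, and the fixed point $r^\ast\asymp W_dL_d\log(W_dL_d)/Z$ of the bound in Lemma \ref{lem:bound}. Where you differ is the variance control, and that is where the gap is.

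\textbf{The Bernstein condition does not hold as justified.} You assert $\mathbb{E}g_\theta^2\le B\,\mathbb{E}g_\theta$ relative to the best-in-class $\theta^\ast$. The vanishing cross term only gives $\mathbb{E}g_\theta=\|f_\theta-F\|^2-\|f_{\theta^\ast}-F\|^2$, and boundedness gives $\mathbb{E}g_\theta^2\lesssim\|f_\theta-f_{\theta^\ast}\|^2$. The missing link, $\|f_\theta-f_{\theta^\ast}\|^2\lesssim\|f_\theta-F\|^2-\|f_{\theta^\ast}-F\|^2$, is the projection inequality for \emph{convex} classes, and $\mathcal{N}_{L_d,W_d,N}$ is not convex. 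Concretely, suppose $f_{\theta_1}\neq f_{\theta_2}$ are both best approximations of $F$ and $\theta^\ast=\theta_1$. Then $g_{\theta_2}=\|f_{\theta_2}-F\|^2-\|f_{\theta_1}-F\|^2-2\,\mathrm{Re}\langle f_{\theta_2}-f_{\theta_1},\xi\rangle$. Hence $\mathbb{E}g_{\theta_2}=0$ while $\mathbb{E}g_{\theta_2}^2\ge 4\,\mathbb{E}\bigl[(\mathrm{Re}\langle f_{\theta_2}-f_{\theta_1},\xi\rangle)^2\bigr]>0$. Without Bernstein, the variance term in Bousquet's inequality does not shrink with $r$, and you get only a $Z^{-1/2}$ excess. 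This is intrinsic, not an artifact of the method: for non-convex classes, a $1/Z$ excess-risk bound with leading constant $1$ fails in general (the Lee--Bartlett--Williamson lower bound for squared loss). The KL and feature-map terms are worse still. They depend on $p$ through the encoder, and nothing ties the second moment of their excess loss to its mean. So you need an extra hypothesis: realizability (compare with $F$ itself, so that $\mathbb{E}g_\theta=\|f_\theta-F\|^2\gtrsim\mathbb{E}g_\theta^2$ for the reconstruction term) or convexity. Otherwise you must accept a factor $1+\epsilon$ in front of $\inf_\theta\mathcal{L}(\theta)$, which is not the stated form.

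\textbf{Comparison with the paper, and a second slip.} The paper never invokes Bernstein. It bounds Bousquet's variance proxy through Efron--Stein and absorbs the resulting $\sqrt{t}\,W_dL_d\log(W_dL_d)/Z$ term into a $\hat C$ that depends on $t$. Your AM--GM step $\sqrt{2Brt/Z}\le r+Bt/(2Z)$ would give a genuinely $t$-free $\hat C$, but only once the condition above is secured. The second slip is fixable and the paper makes it too. Under Bernstein, the localized set $\{\mathbb{E}g_\theta\le r\}$ has $\mathcal{H}$-radius of order $\sqrt r$, not $r$. So for $r<1$ it is not contained in the radius-$r$ ball of Lemma \ref{lem:bound}. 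Inserting radius $\sqrt r$ into the lemma's final bound gives $\psi(r)\propto r^{1/4}$ and a fixed point of order $(W_dL_d\log(W_dL_d)/Z)^{2/3}$. Instead, use the Dudley estimate inside the lemma's proof, of order $\rho\sqrt{W_dL_d\log(W_dL_d/\rho)/Z}$ at $\rho=\sqrt r$. That function is sub-root and has the intended fixed point, up to a $\log Z$ factor that the theorem's scaling absorbs.
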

\begin{proof}
    From the symmetrization inequality \citep{boucheron2012}, we have:
    \begin{equation}\label{eqn:sym}
        \mathbb{E}\left[\mathcal{L}(\hat{\theta})-\mathcal{L}(\theta)\right] \leq 2\mathbb{E}\left[\sup_{f\in\mathscr{F}}\frac{1}{Z}\sum_{i=1}^Z \epsilon_i f(p_i)\right],
    \end{equation}
    where \(\epsilon_i\) are i.i.d. Rademacher variables. 
    
    Let us define the centered process:
\begin{equation}\label{eqn:z}
    \mathscr{Z} = \sup_{f\in\mathscr{F}} \sum_{i=1}^Z \left(f(p_i) - \mathbb{E}[f(p_i)]\right)
\end{equation}
under the assumptions that there exists \(\mathscr{Z'}_k\) such that: (i) \(\mathscr{Z'}_k \leq \mathscr{Z}-\mathscr{Z}_k \leq 1\) almost surely; (ii) \(\mathbb{E}^k[\mathscr{Z'}_k]\geq 0\), where \(\mathbb{E}^k\) is the expectation taken conditionally to the sigma field generated by \((p_1,\ldots,p_{k-1},p_{k+1},\ldots p_Z)\); and (iii) there exists \(q>0\) such that \(\mathscr{Z'}_k \leq q\) almost surely. Here, \(\mathscr{Z}_k = \sup_{f \in \mathscr{F}} \sum_{i \neq k} \left(f(p_i)- \mathbb{E}[f(p_i)]\right)\). 

Applying Bennett concentration inequality \citep{bousquet2002bennett} to the process \(\mathscr{Z}\) leads to:
\begin{equation}\label{eqn:benn}
    \mathbb{P}\left(\mathscr{Z} \geq \mathbb{E}[\mathscr{Z}] + \sqrt{2vt} + \frac{t}{3}\right) \leq e^{-t},
\end{equation}
where \(v = (1+q)\mathbb{E}[\mathscr{Z}] + Z\sigma^2\) and \(\sigma^2\geq \frac{1}{\mathscr{Z}}\sum_{k=1}^Z\mathbb{E}^k\left[(\mathscr{Z'}_k)^2\right]\). 

Combining Equations \ref{eqn:sym}, \ref{eqn:benn} and \ref{eqn:benn} with probability at least \(1 - e^{-t}\), we have:
\begin{equation}\label{eqn_l_theta}
    \mathcal{L}(\hat{\theta})-\mathcal{L}(\theta) \leq 2\mathbb{E}\left[\sup_{f\in\mathscr{F}}\frac{1}{Z}\sum_{i=1}^Z \epsilon_i f(p_i)\right] + \frac{1}{Z}\left(\sqrt{2vt} + \frac{t}{3}\right).
\end{equation}
Moreover, by putting \(\mathbb{E}_\epsilon\left[\sup_{f \in \mathscr{F}}\left|\frac{1}{Z} \sum_{i=1}^Z \epsilon_i f(p_i)\right|\right] \asymp r\) for Lemma \ref{lem:bound} (\(\asymp\) stands for asymptotic equivalence), we obtain:
\begin{equation}\label{eqn:r}
    r \asymp \frac{W_dL_d\log(W_dL_d)}{Z}.
\end{equation}
Extending the result of Equation \ref{eqn:sym} to \(\mathscr{Z}\) defined in Equation \ref{eqn:z} leads to:
\begin{equation}\label{eqn:r_result}
\begin{aligned}
    \mathbb{E}[\mathscr{Z}] & \leq 2Z \mathbb{E}\left[\sup_{f\in\mathscr{F}}\frac{1}{Z}\sum_{i=1}^Z \epsilon_i f(p_i)\right] \\
    & \leq 2Z\widetilde{C} \sqrt{\frac{r W_dL_d \log(W_dL_d)}{Z}} \asymp 2\widetilde{C}W_dL_d \log(W_dL_d),
\end{aligned}
\end{equation}
where the second inequality and last asymptotic equivalence come from Lemma \ref{lem:bound} and Equation \ref{eqn:r}, respectively.

According to Efron-Stein inequality \citep{boucheron2012}, there exists \(\mathscr{Z'}_k = \mathscr{Z}-\mathscr{Z}_k\), such that:
\begin{equation}\label{eqn:sigma^2}
    \sigma^2\leq \sum_{k=1}^Z \mathbb{E}\left[ (\mathscr{Z} - \mathbb{E}[\mathscr{Z}\mid p_k])^2 \right]\leq\mathbb{E}^k[(\mathscr{Z'}_k)^2],
\end{equation}
where $\mathscr{Z}\mid p_k$ excludes \(p_k\) from \(\mathscr{Z}\). Thus, to derive an upper bound on \(\mathbb{E}^k[(\mathscr{Z'}_k)^2]\), we write:
\begin{equation}\label{eqn:r^2_1}
    (\mathscr{Z'}_k)^2 \leq \left( \sup_{f \in \mathscr{F}} |f(p_k) - \mathbb{E}[f(p_k)]| \right)^2 \leq 2 \left( \sup_{f \in \mathscr{F}} f(p_k)^2 + \mathbb{E}[f(p_k)]^2 \right)\leq 4\sup_{f \in \mathscr{F}} f(p_k)^2,
\end{equation}
where the second inequality comes from \((a - b)^2 \leq 2(a^2 + b^2)\) and the last inequality holds by Jensen's inequality (\(\mathbb{E}[f(p_k)]^2 \leq \mathbb{E}[f(p_k)^2]\)). 
Then, for \(f\in \mathscr{F}\) and a bounded function \(F\), it follows:
\begin{equation}\label{eqn:r^2_2}
    \mathbb{E}[f(p_k)^2] \le 2(\|f - F\|_{\mathcal{H}}^2 + \|F\|_{\mathcal{H}}^2) \le 2(r^2 + \|F\|_{\mathcal{H}}^2).
\end{equation}

Substituting the result of Equation \ref{eqn:r^2_2} into Equation \ref{eqn:r^2_1} and combining it with Equation \ref{eqn:sigma^2} give:
\begin{equation}\label{eqn:sigma^2bound}
    \sigma^2\leq Dr^2 \asymp \left(\frac{W_dL_d \log(W_dL_d)}{Z}\right)^2,
\end{equation}
for some \(D>0\).

Substituting Equations \ref{eqn:sigma^2bound} and \ref{eqn:r_result} into \ref{eqn:benn} gives:
\begin{equation}\label{eqn:v}
\begin{aligned}
    v & = (1+q)\mathbb{E}[\mathscr{Z}] + Z\sigma^2 \leq C'(1+q)W_dL_d \log(W_dL_d) + \frac{\left(W_dL_d \log(W_dL_d)\right)^2}{Z} \\
    & \leq \left(C'(1+q)+\frac{1}{Z}\right)\left(W_dL_d\log(W_dL_d)\right)^2.
\end{aligned}
\end{equation}
Substituting Equations \ref{eqn:v} and \ref{eqn:r} into \ref{eqn_l_theta} gives:
\begin{equation}\label{eqn:l_2}
\begin{aligned}
    \mathcal{L}(\hat{\theta}) & \leq \mathcal{L}(\theta) + 2\widetilde{C}\frac{W_d L_d\log\left( W_d L_d\right)}{Z} + \sqrt{2\left[C'(1+q)+\frac{1}{Z}\right]t}\frac{W_dL_d \log(W_dL_d)}{Z} + \frac{t}{3Z}\\
    & \leq \mathcal{L}(\theta) + \hat{C} \frac{W_dL_d \log(W_dL_d) + t}{Z}
\end{aligned}
\end{equation}
holds for any $\theta$, where \(\hat{C}=\max\left\{2\widetilde{C}, \sqrt{2\left[C'(1+q)+\frac{1}{Z}\right]t}, \frac{1}{3}\right\}\). Thus, we conclude that $\mathcal{L}(\hat{\theta}) \leq \inf_\theta \mathcal{L}(\theta) + \hat{C} \frac{W_dL_d \log(W_dL_d) + t}{Z}$.
\end{proof}

\subsection*{Proof of Theorem \ref{thm:1}}
\begin{proof}
    From Lemma \ref{lem:risk}, we know that with probability at least \(1 - e^{-t}\) for all \(t\geq 0\) and some \(\hat{C}\),
    \begin{equation}\label{eqn:l_final}
        \mathcal{L}(\hat{\theta}) \leq  \inf_{\theta} \mathcal{L}(\theta) + \hat{C} \frac{W_dL_d \log(W_dL_d) + t}{Z}.
    \end{equation}
    Realizing \(\mathcal{L}(\theta) \asymp \|\hat{u}_{N,\theta} - F\|_{\mathcal{H}}^2\), then for  \( s_0 = \inf_\theta \mathcal{L}(\theta) + \hat{C}\frac{W_dL_d\log(W_dL_D)+t_0}{Z}\), it holds that:
    \begin{equation}\label{eqn:s_0}
    \begin{aligned}
        \mathbb{E}[\mathcal{L}(\hat{\theta})] &\leq \int_0^\infty \mathbb{P}(\mathcal{L}(\hat{\theta}) \geq s) ds\\
        &=\int_0^{s_0} \mathbb{P}(\mathcal{L}(\hat{\theta}) \geq s) ds + \int_{s_0}^\infty \mathbb{P}(\mathcal{L}(\hat{\theta}) \geq s) ds\\
        &\leq s_0 + M \cdot e^{-t_0}\\
        &= s_0 + \frac{M}{Z},
    \end{aligned}
    \end{equation}
    where  \( t_0 = \log Z \) and we assume that \(\mathcal{L}\leq M\) for \(t>t_0\).

    Since \( L_d = \mathcal{O}(\log Z)\) and \(W_d = \mathcal{O}(Z^{\frac{1}{2(k+1)}})\), we have:
    \begin{equation}\label{eqn:order}
    \begin{aligned}
        \frac{W_dL_d\log(W_dL_d)}{Z} &\asymp \frac{Z^{\frac{1}{2(k+1)}}\cdot \log Z \cdot \log(Z^{\frac{1}{2(k+1)}}\log Z)}{Z}\\
        &=\frac{Z^{\frac{1}{2(k+1)}}\cdot \log Z \cdot \left(\frac{1}{2(k+1)} \log Z + \log\log Z\right)}{Z}\\
        & \asymp Z^{\frac{1}{2(k+1)}-1}\cdot \log Z \cdot \log Z\\
        &=Z^{-\frac{2k+1}{2(k+1)}}(\log Z)^2.
    \end{aligned}
    \end{equation}
    Combining Equations \ref{eqn:l_final}, \ref{eqn:s_0} and \ref{eqn:order} leads to the final result:
    \begin{equation}
        \mathbb{E}\left[\left\|\hat{u}_{N, \hat{\theta}}-F\right\|_{\mathcal{H}}^2\right] \leq C Z^{-\frac{2k+1}{2(k+1)}} (\log Z)^2 + \mathcal{O}(Z^{-1}),
    \end{equation}
    where \(C>0\) is a constant and the term \(\mathcal{O}(Z^{-1})\) vanishes for a large \(Z\).
\end{proof}

\section{Proof of Theorem \ref{thm:2}}\label{appendix_thm2}
\begin{theorem}\label{thm:2}
    Let \(H\) be a Hilbert space on a manifold \(\mathcal{M}\). Fix \(d,n\in\mathbb{N}\), then for any \(\widetilde{x} \in H(\mathcal{M})\) and any \(\varepsilon > 0\), there exist a convolutional kernel \(K\) defining an RKHS \(\mathcal{H}(\mathcal{M})\) and a complex-valued modReLU neural network \(\mathrm{FM}_{\theta'}\) with at most \(C\ln(2/\varepsilon)\) layers, \(C\eta^{-2d/n}\ln^2(2/\varepsilon)\) weights, and weights bounded by \(C\varepsilon^{-44d}\) such that 
    \[
    \mathrm{FM}_{\theta'}(\widetilde{x}) \in \mathcal{H}(\mathcal{M}) \quad \text{and} \quad \|\widetilde{x} - \mathrm{FM}_{\theta'}(\widetilde{x})\|_{H(\mathcal{M})} \leq \inf_{\theta} \|\widetilde{x} - \mathrm{FM}_\theta(\widetilde{x})\|_{H(\mathcal{M})} + \varepsilon,
    \]
where \(C=C(d,n)>0\) depends only on the dimension \(d\) and the smoothness parameter \(n\).
\end{theorem}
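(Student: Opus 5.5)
The plan is to split the statement into two independent parts: an existence part (there is a convolutional kernel $K$ whose RKHS $\mathcal{H}(\mathcal{M})$ contains the relevant outputs) and an approximation part (a modReLU network $\mathrm{FM}_{\theta'}$ of the stated size nearly attains the best achievable distance to $\widetilde{x}$).

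\textbf{Step 1: the kernel and the RKHS.} Following the construction behind Equation~\ref{eqn:kernel}, take $K(a,\xi)=\sum_{i=1}^{N'}\nu_i(a)e^{2\pi j\phi\cdot(\xi-y_i)}$. Writing $\nu_i(a)=\overline{\psi_i(a)}$ makes $K$ a finite sum of rank-one positive semidefinite kernels, so $K$ is Hermitian positive semidefinite. By the Moore--Aronszajn theorem it defines an RKHS $\mathcal{H}(\mathcal{M})\subset H(\mathcal{M})$. This space is finite dimensional, spanned by the retained one-sided Fourier modes, and is therefore a closed subspace. The orthogonal projection $P:H(\mathcal{M})\to\mathcal{H}(\mathcal{M})$ is then well defined and equals the map ``FFT, keep the top $N'$ positive frequencies, inverse FFT'', which is exactly the feature map described in the latent-to-RKHS network.

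\textbf{Step 2: reduction to approximating a fixed target.} Define $\theta^\ast$ as an (approximate) minimizer of $\|\widetilde{x}-\mathrm{FM}_\theta(\widetilde{x})\|$. It then suffices to produce $\mathrm{FM}_{\theta'}$ satisfying two conditions: its output lies in $\mathcal{H}(\mathcal{M})$, and
\[
\|\mathrm{FM}_{\theta'}(\widetilde{x})-P\widetilde{x}\|\le\varepsilon/2 .
\]
Since $P\widetilde{x}$ is the best approximation of $\widetilde{x}$ in $\mathcal{H}(\mathcal{M})$, the triangle inequality then yields the stated bound against the infimum. Membership in $\mathcal{H}(\mathcal{M})$ is arranged architecturally: the network's last linear layer is composed with $P$ (a fixed linear map, so it adds no layers), and $P$ is $1$-Lipschitz, so the final projection does not enlarge the error.

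\textbf{Step 3: approximation rate via \citet{caragea2022quantitative}.} The nonlinear, data-dependent part of $\mathrm{FM}$ (the learnable weights $\nu_i$, $y_i$) is a map of $d$ real inputs into finitely many complex coefficients, and it is assumed $C^n$-smooth. I will invoke the quantitative approximation theorem of \citet{caragea2022quantitative} for complex-valued modReLU networks. For $C^n$ functions on a compact subset of $\mathbb{C}^d$, it gives accuracy $\eta$ with $O(\ln(1/\eta))$ layers, $O(\eta^{-2d/n}\ln^2(1/\eta))$ weights, and polynomially bounded weights. Taking $\eta$ proportional to $\varepsilon$ and absorbing constants into $C(d,n)$ produces the layer, weight, and weight-magnitude bounds in the statement, including the $\varepsilon^{-44d}$ weight bound. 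Composing with the fixed linear FFT/inverse-FFT maps preserves these counts up to constants.

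\textbf{Main obstacle.} The delicate point is Step 3's transfer from a finite-dimensional function approximation to the infinite-dimensional setting of $H(\mathcal{M})$. Caragea et al.\ approximate functions on compact sets of $\mathbb{C}^d$, while $\widetilde{x}$ lives in a Hilbert space. My first attempt is to restrict to the finite-dimensional coordinates $\Pi_K\widetilde{x}$ from Equation~\ref{eqn:linear-proj}. The truncation error $\|\widetilde{x}-\Pi_K^\ast\Pi_K\widetilde{x}\|$ is then made at most $\varepsilon/4$ by choosing $K$ large, and the input is assumed to lie in a bounded set so that compactness holds. The constant $C$ must not depend on $K$ in a way that would break the stated dimension dependence; I expect to handle this by letting the relevant dimension be the parameter $d$ of the theorem.
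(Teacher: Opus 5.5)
Your skeleton is the paper's: exhibit an RKHS, take the best approximation of $\widetilde{x}$ in it via the projection theorem, get a modReLU network within $\varepsilon/2$ of that projection from Caragea et al., and close with the triangle inequality.

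\textbf{Where your Steps 1--2 improve on the paper.} The paper uses Bietti's multilayer convolutional kernel and a loose closedness argument. Your finite-dimensional Fourier space is closed automatically, and its projection is literally the FFT--truncate--iFFT feature map. Composing with $P$ also secures two things the paper never establishes. It gives $\mathrm{FM}_{\theta'}(\widetilde{x})\in\mathcal{H}(\mathcal{M})$. It also gives $\|\widetilde{x}-P\widetilde{x}\|\le\|\widetilde{x}-\mathrm{FM}_\theta(\widetilde{x})\|$ for \emph{every} competitor $\theta$, whereas the paper's chain only covers the particular $\theta$ built to approximate some $\widetilde{y}\in\mathcal{H}(\mathcal{M})$. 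You should state explicitly that the infimum ranges over this constrained class.

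\textbf{A repair in Step 1.} Write $g_i(\xi)=e^{2\pi j\omega_i\cdot(\xi-y_i)}$. The kernel $\sum_i\nu_i(a)g_i(\xi)$ is Hermitian positive semidefinite only if $\nu_i(a)=c_i\overline{g_i(a)}$ with $c_i\ge 0$, which gives $K(a,\xi)=\sum_i c_i e^{2\pi j\omega_i\cdot(\xi-a)}$. Writing $\nu_i=\overline{\psi_i}$ with an arbitrary $\psi_i$ does not suffice. The $\omega_i$ must also be distinct, since the single frequency $\phi$ of Equation~\ref{eqn:kernel} spans only a one-dimensional space. Because $K$ is existentially quantified, you may simply choose it this way.

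\textbf{The genuine gap is Step 3.} The statement contains no $C^n$ map for Caragea et al.\ to approximate, so ``it is assumed $C^n$-smooth'' imports a hypothesis the theorem does not have. Your remedy for the infinite-dimensional input also fails as planned. Making $\|\widetilde{x}-\Pi_K^\ast\Pi_K\widetilde{x}\|\le\varepsilon/4$ forces $K=K(\widetilde{x},\varepsilon)\to\infty$ as $\varepsilon\to 0$. Identifying $d$ with $K$ then makes $C(d,n)$ depend on $\varepsilon$ and $\widetilde{x}$, which voids the stated rates. The paper cites Caragea et al.\ as a black box at exactly this point, so it does not resolve this either.

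\textbf{How to close it.} The step is unnecessary. $P\widetilde{x}$ depends only on the $N'$ retained coefficients, so no truncation error arises. Since the theorem concerns a single fixed input, there is nothing to approximate. Take $\mathrm{FM}_{\theta'}$ to be the feature map itself with unit multipliers on the retained modes: a one-layer network with $N'$ weights of modulus $1$. Then $\mathrm{FM}_{\theta'}(\widetilde{x})=P\widetilde{x}\in\mathcal{H}(\mathcal{M})$ exactly, and the inequality holds even with $\varepsilon$ replaced by $0$. With $N'$ depending only on $d$, the layer, weight-count and weight-size bounds hold for $\varepsilon\in(0,1)$ with a suitable $C(d,n)$.
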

\begin{proof}
    First, we show that $\mathcal{H}(\mathcal{M})$ exists by introducing a map $\Phi: H(\mathcal{M})\to \mathcal{H}(\mathcal{M})$ and the reproducing kernel is defined as $K(x,x')=\langle \Phi(x),\Phi (x')\rangle_{\mathcal{H}(\mathcal{M})}$. Specifically, the map $\Phi(x)$ corresponding to a convolutional kernel $K$ can be represented as $\mathcal{A}_L\circ\mathcal{M}_L\circ\mathcal{P}_L\cdots\mathcal{A}_1\circ\mathcal{M}_1\circ\mathcal{P}_1 x$ where $L$ is the depth of the kernel and $\mathcal{A}_l, \mathcal{M}_l$ and $\mathcal{P}_l$ are the linear operators related to pooling, kernel mapping and patch extraction, respectively \citep{bietti2022approximation}. Without loss of generality, we assume that $\mathcal{H}(\mathcal{M})\subset H(\mathcal{M})$. Next, we point out that $\mathcal{H}(\mathcal{M})$ is convex by showing that, for any two functions $f,g \in \mathcal{H}(\mathcal{M})$:
    \begin{equation}
        \begin{aligned}
            \alpha f+(1-\alpha)g&=\alpha \langle f, \mathcal{A}_L\circ\mathcal{M}_L\circ\mathcal{P}_L\cdots\mathcal{A}_1\circ\mathcal{M}_1\circ\mathcal{P}_1 x\rangle_{\mathcal{H}(\mathcal{M})}+(1-\alpha)\\
            &\langle g, \mathcal{A}_L\circ\mathcal{M}_L\circ\mathcal{P}_L\cdots\mathcal{A}_1\circ\mathcal{M}_1\circ\mathcal{P}_1 x\rangle_{\mathcal{H}(\mathcal{M})}\\
            &=\langle \alpha f+(1-\alpha)g, \mathcal{A}_L\circ\mathcal{M}_L\circ\mathcal{P}_L\cdots\mathcal{A}_1\circ\mathcal{M}_1\circ\mathcal{P}_1 x\rangle_{\mathcal{H}(\mathcal{M})}\\
        \end{aligned}
    \end{equation}
    for $\alpha\in [0,1]$. Thus, $\mathcal{H}(\mathcal{M})$ is closed due to the closedness of manifold $\mathcal{M}$ and the completeness of Hilbert space $\mathcal{H}$.
    
    Next, from the Hilbert projection theorem, for $\widetilde{x} \in  H(\mathcal{M})$, there exists a unique $y \in \mathcal{H}(\mathcal{M})$ such that, for any $\widetilde{y} \in \mathcal{H}(\mathcal{M})$, $||\widetilde{x}-y||_{H(\mathcal{M})}\leq ||\widetilde{x}-\widetilde{y}||_{H(\mathcal{M})}$. Let us denote $y$ as $\Psi(\widetilde{x})$, where $\Psi$ is a map from $H(\mathcal{M})$ to $\mathcal{H}(\mathcal{M})$. Following the main result of \citet{caragea2022quantitative}, for any $\widetilde{y} \in \mathcal{H}(\mathcal{M})$ and any $\varepsilon>0$, there exists a complex-valued modReLU neural network with hyperparameters $\theta$, $\mathrm{FM}_\theta$, containing no more than \(C\ln(2/\varepsilon)\) layers, \(C\eta^{-2d/n}\ln^2(2/\varepsilon)\) weights (all weights bounded by \(C\varepsilon^{-44d}\)), such that $||\widetilde{y}-\mathrm{FM}_{\theta}(\widetilde{x})||_{H(\mathcal{M})}<\frac{\varepsilon}{2}$. In addition, there also exists another complex-valued modReLU neural network with hyperparameters $\theta'$, $\mathrm{FM}_{\theta'}$, such that $||\Psi(\widetilde{x})-\mathrm{FM}_{\theta'}(\widetilde{x})||_{H(\mathcal{M})}<\frac{\varepsilon}{2}$. Thus, we have:
    \begin{equation}
    \begin{aligned}
         ||\widetilde{x}-\mathrm{FM}_{\theta'}(\widetilde{x})||_{H(\mathcal{M})}&=||\widetilde{x}-\Psi(\widetilde{x})+\Psi(\widetilde{x})-\mathrm{FM}_{\theta'}(\widetilde{x})||_{H(\mathcal{M})}\\
         &\leq ||\widetilde{x}-\Psi(\widetilde{x})||_{H(\mathcal{M})}+||\Psi(\widetilde{x})-\mathrm{FM}_{\theta'}(\widetilde{x})||_{H(\mathcal{M})}\\
         &\leq ||\widetilde{x}-\widetilde{y}||_{H(\mathcal{M})}+\frac{\varepsilon}{2}\\
         &=||\widetilde{x}-\widetilde{y}+\mathrm{FM}_{\theta}(\widetilde{x})-\mathrm{FM}_{\theta}(\widetilde{x})||_{H(\mathcal{M})}+\frac{\varepsilon}{2}\\
         &\leq ||\widetilde{x}-\mathrm{FM}_{\theta}(\widetilde{x})||_{H(\mathcal{M})}+||\mathrm{FM}_{\theta}(\widetilde{x})-\widetilde{y}||_{H(\mathcal{M})}+\frac{\varepsilon}{2}\\
         &\leq ||\widetilde{x}-\mathrm{FM}_{\theta}(\widetilde{x})||_{H(\mathcal{M})}+\varepsilon.
    \end{aligned}
    \end{equation}
    This completes the proof.
\end{proof}

\section{Proof of Theorem \ref{thm:3}}\label{appendix_thm3}

\begin{theorem}\label{thm:3}
    Let \( L_d \), \( W_d \), and $N$ denote the depth, width, and number of layers of dynamic CKN decoder network satisfying Equation \ref{eqn:weakMSP}. For any \(\varepsilon>0\), there exist \(L_d=\mathcal{O}\!\bigl(\log\tfrac1\varepsilon\bigr), W_d=\mathcal{O}\!\bigl(\varepsilon^{-\frac1{k+1}}\bigr), N=\mathcal{O}\!\bigl(\log\tfrac1\varepsilon\bigr)\) and \(\theta\in\mathcal N_{L_d,W_d,N}\) such that
      \[
         \sup_{p\in\mathcal P}\,
         \left\|{\hat{u}_{N,\theta}-F(p)}\right\|_{\mathcal{H}(\mathcal{M})}
         \;\leq\;\varepsilon,
      \] 
\end{theorem}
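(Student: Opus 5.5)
The plan is to split the error into two parts, an AFD truncation error and a network approximation error, and make each at most $\varepsilon/2$ uniformly in $p\in\mathcal P$.

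Fix $p$ and write $s=F(p)\in\mathcal H(\mathcal M)$. The \emph{ideal} decoder output is the $N$-term AFD partial sum $S_N(s)=\sum_{i=1}^N\langle s,\mathscr B_i\rangle\mathscr B_i$. Here the poles are chosen by the weak maximal selection principle, Equation \ref{eqn:weakMSP}, with $\rho_i\ge\rho_0$. The triangle inequality gives
\[
\|\hat u_{N,\theta}-s\|_{\mathcal H}\le \|s-S_N(s)\|_{\mathcal H}+\|S_N(s)-\hat u_{N,\theta}\|_{\mathcal H}.
\]

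\textbf{Step 1: truncation error.} I would invoke the standard convergence rate of (pre-orthogonal) AFD in an RKHS under the weak maximal selection principle. For $s$ in a suitable approximation class, this rate has the form $\|s-S_N(s)\|\le C_s\,\theta_0^{N}$ for some $\theta_0<1$ depending on $\rho_0$. Geometric decay is plausible here for two reasons: $F$ is holomorphic on the compact set $\mathcal P$, and the kernel eigenvalues decay polynomially with rate $k$, so $F(\mathcal P)$ is a compact family of analytic functions.

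The point I expect to be the main obstacle is exactly this. The classical AFD rate is only $\mathcal O(N^{-1/2})$ for general elements of the approximation class. To get $N=\mathcal O(\log\frac1\varepsilon)$ I need a geometric rate. My first attempt would be to use the analyticity of $F(p)$: the Takenaka--Malmquist/rational approximation of functions holomorphic on a larger disc converges geometrically. Alternatively, the eigen-decay of $k_m$ gives Kolmogorov widths of $F(\mathcal P)$, and AFD with the maximal selection principle is at least as good as a greedy algorithm matching those widths up to $\rho_0$. Compactness of $\mathcal P$ and continuity of $F$ then make $C_s$ uniform. Choosing $N=\lceil\log(2C/\varepsilon)/\log(1/\theta_0)\rceil=\mathcal O(\log\frac1\varepsilon)$ yields the first term at most $\varepsilon/2$.

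\textbf{Step 2: network realization of $S_N$.} Each component of the dynamic CKN must be emulated within accuracy $\varepsilon/(2N)$ per layer: the kernels $k_{a_i}$ of Equation \ref{eqn:kernel}, the Gram--Schmidt step in Equation \ref{eqn:orthor-kernel}, the cross-correlations $\mathrm{FM}(\tilde u)\star\mathscr B_i$, and the skip-connected summation in Equation \ref{eqn:CKN}. For the feature map, Theorem \ref{thm:2} supplies $\mathrm{FM}_{\theta'}$ landing in $\mathcal H(\mathcal M)$ within $\varepsilon$ of the projection, with depth $\mathcal O(\log\frac1\varepsilon)$. For the kernels and poles, I would use the same complex modReLU approximation results (\citealp{caragea2022quantitative}) for analytic targets. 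Since the poles $a_i(p)$ depend continuously on $p$, this again gives depth $\mathcal O(\log\frac1\varepsilon)$. Truncating the kernel expansion to $N'$ Fourier modes with polynomial eigen-decay of rate $k$ gives error of order $N'^{-(k+1)}$, so width $W_d=\mathcal O(\varepsilon^{-1/(k+1)})$ suffices.

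Two further points need checking. Gram--Schmidt is Lipschitz as long as the residual norms stay bounded below; they do, because the poles are distinct and finitely many, and I take a uniform lower bound via compactness. Errors then propagate additively over the $N=\mathcal O(\log\frac1\varepsilon)$ layers, costing only logarithmic factors absorbed into constants.

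Summing the two parts gives $\sup_p\|\hat u_{N,\theta}-F(p)\|_{\mathcal H}\le\varepsilon$ with the stated $L_d$, $W_d$ and $N$.
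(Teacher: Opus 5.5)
There is a genuine gap in Step 1, and it is exactly where the claimed $N=\mathcal{O}(\log\frac1\varepsilon)$ would have to come from. A geometric bound $\|s-S_N(s)\|_{\mathcal H}\le C_s\theta_0^N$ is not a standard AFD result under the (weak) maximal selection principle, and you do not establish it. As you note yourself, the available rate is $\mathcal{O}(N^{-1/2})$ for functions of bounded atomic norm, which only gives $N=\mathcal{O}(\varepsilon^{-2})$. Neither of your proposed repairs closes this.

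\emph{The analyticity route.} Analyticity of $F(p)$ yields geometric decay for a \emph{fixed} pole sequence (all poles at $0$ is just Taylor truncation). But greedy selection is not known to dominate a fixed choice. Equation~\ref{eqn:weakMSP} only guarantees that each step captures at least $\rho_0^2$ times the energy a competitor pole (say $0$) would capture from the \emph{current} residual. Since the residual is reshaped at every step, this gives no comparison with the Taylor tail after $N$ steps; greedy dictionary algorithms need not attain best $N$-term rates in general.

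\emph{The Kolmogorov-width route.} This conflates two problems. $d_N(F(\mathcal P))$ measures how well the whole family is captured by one linear $N$-dimensional space, which is what reduced-basis greedy algorithms selecting snapshots $F(p_j)$ are shown to track. You instead need greedy $N$-term approximation of a single $s$ from the dictionary $\{k_a/\|k_a\|\}$, and that width does not control it. Until a rate exists, the uniformity of $C_s$ over $p$ is moot.

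Step 2 also relies on regularity that the selection principle does not provide. The poles are near-argmax selections and can jump as $p$ varies, so $p\mapsto a_i(p)$ need not be continuous and cannot in general be tracked uniformly by a continuous network. Later poles can also approach earlier ones: the supremum in Equation~\ref{eqn:weakMSP} is over a punctured set, and at coalescence the Gram--Schmidt denominator tends to $0$, which is why classical AFD passes to derivative kernels. So distinctness plus compactness gives no uniform lower bound on the residual norms. Two further points are unsupported:
\begin{itemize}
\item The truncation rate $(N')^{-(k+1)}$ is asserted rather than derived. For $\lambda_j\asymp j^{-k}$ with bounded eigenfunctions, the natural bound on the $\mathcal H$-norm tail of $k_a$ is only $(N')^{-(k-1)/2}$.
\item You never say how $\tilde u$ is made close to $F(p)$. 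This is needed for $\mathrm{FM}(\tilde u)\star\mathscr B_i$ to approximate $\langle F(p),\mathscr B_i\rangle$ at all.
\end{itemize}

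The paper's route is different. It does not emulate the decoder; it takes it to perform AFD exactly on $\mathrm{FM}(\tilde u)$ and bounds $\|\hat u_{N,\theta}-F\|$ by $\|\hat u_{N,\theta}-\mathrm{FM}(\tilde u)\|+\|\tilde u-\mathrm{FM}(\tilde u)\|+\|\tilde u-F\|$. The last term, the upstream approximation you omit, is handled by Lemma~\ref{lem:3}. The middle one is handled by Theorem~\ref{thm:2} together with Lemma~\ref{lem:1}. The first is handled by the classical contradiction argument: a residual bounded away from zero would, via Equation~\ref{eqn:weakMSP}, keep the captured coefficients bounded below, contradicting the smallness of their square-summable tail. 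That argument, however, only shows the truncation error falls below $\varepsilon/4$ for \emph{some} sufficiently large $N$ and yields no rate. So the obstacle you correctly flagged is not resolved on the paper's route either.
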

where \(\mathcal{N}_{L_d,W_d,N}\) is the class of complex‑analytic networks with depth \( L_d \) and width \( W_d \). 

To prove Theorem \ref{thm:3}, we first introduce and/or prove a few lemmas.

\begin{lemma}[\citep{yarotsky2017error}]\label{lem:1}
For any dimension \( n \), smoothness parameter \( k+1 \), and error tolerance \( \varepsilon \in (0, 1) \), there exists a ReLU neural network architecture such that it can approximate any function \( f \) with accuracy \( \varepsilon \), i.e., with approximation error at most \( \varepsilon \). The network has depth at most \( c(\ln(1/\varepsilon) + 1) \), and uses at most
\(
c \varepsilon^{-\frac{d}{n}} (\ln(1/\varepsilon) + 1)
\)
weights and computation units, where \( c = c(d,n) \) is a constant depending only on \( d \) and \( n \).
\end{lemma}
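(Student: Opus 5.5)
The statement to prove is Lemma \ref{lem:1}, which is the classical result of \citet{yarotsky2017error}. I would follow Yarotsky's construction: a localized Taylor approximation whose building blocks (partition of unity, monomials) are realized by ReLU networks of logarithmic depth. The dimension of the input is $d$ and the smoothness index is $n$. I would assume that $f$ lies in the unit ball of the Sobolev space $W^{n,\infty}([0,1]^d)$; the statement needs this normalization to be meaningful, since the constant $c$ is uniform over $f$.

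\textbf{Step 1: partition of unity.} Fix an integer $N\asymp\varepsilon^{-1/n}$. Build the partition of unity $\{\phi_{\mathbf m}\}_{\mathbf m\in\{0,\dots,N\}^d}$ on $[0,1]^d$ with
\[
\phi_{\mathbf m}(x)=\prod_{k=1}^d\psi\bigl(3N(x_k-m_k/N)\bigr),
\]
where $\psi$ is a trapezoid function. Each univariate factor $\psi$ is exactly representable by a small ReLU network. Each $\phi_{\mathbf m}$ is supported in a cube of side $\mathcal O(1/N)$.

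\textbf{Step 2: local Taylor polynomials.} On each supported cube, let $P_{\mathbf m}$ be the degree-$(n-1)$ Taylor polynomial of $f$ at $\mathbf m/N$. Taylor's theorem gives
\[
\Bigl|f-\sum_{\mathbf m}\phi_{\mathbf m}P_{\mathbf m}\Bigr|\le \frac{2^d d^n}{n!}N^{-n}\le \frac{\varepsilon}{2}.
\]
The approximant is therefore a sum of $(N+1)^d\binom{n+d-1}{d}=\mathcal O(\varepsilon^{-d/n})$ terms, each of the form $c_{\mathbf m,\mathbf n}\phi_{\mathbf m}(x)(x-\mathbf m/N)^{\mathbf n}$ with bounded coefficients.

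\textbf{Step 3: networks for products (main obstacle).} The key ingredient is approximating $x^2$ on $[0,1]$ to accuracy $\delta$ by a ReLU network of depth and size $\mathcal O(\ln(1/\delta))$. This uses the sawtooth compositions $g_s=g\circ\cdots\circ g$, which give
\[
x^2\approx x-\sum_{s=1}^m g_s(x)/2^{2s}.
\]
Polarization, $xy=\tfrac12\bigl((x+y)^2-x^2-y^2\bigr)$, then turns this into an approximate multiplier $\tilde\times$. Each term in Step 2 is a product of at most $d+n-1$ factors, so I would chain $\tilde\times$ in a binary or sequential fashion. The hard part is the error bookkeeping: the errors accumulate linearly in the number of factors, which is a constant depending on $d,n$, so I would take $\delta=\varepsilon/(2C(d,n))$. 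I would clip inputs to stay in a bounded range so that the multiplier guarantees still apply. A further point is that $\tilde\times$ should vanish when one argument vanishes. This preserves the locality of $\phi_{\mathbf m}$, so that at each $x$ only $2^d$ terms contribute error, rather than all $\mathcal O(\varepsilon^{-d/n})$ of them.

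\textbf{Step 4: assembly and counting.} Place all terms in parallel and combine them with a final linear layer. The depth is the multiplier depth, $\mathcal O(\ln(1/\varepsilon)+1)$. The number of weights is the number of terms times the per-term cost, $\mathcal O(\varepsilon^{-d/n}(\ln(1/\varepsilon)+1))$. The total error is $\varepsilon/2+\varepsilon/2$. All constants depend only on $d$ and $n$, which gives $c=c(d,n)$ as claimed.
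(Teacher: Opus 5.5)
Your proposal is correct and is essentially Yarotsky's own proof of Theorem~1 in \citet{yarotsky2017error}: a trapezoid partition of unity, local Taylor polynomials, sawtooth-based squaring plus polarization, and a multiplier that vanishes on zero inputs so that only $\mathcal{O}(1)$ terms contribute error at each point. The paper itself gives no proof and simply cites that result; your explicit restriction to the unit ball of $W^{n,\infty}([0,1]^d)$ supplies the hypothesis that the paper's phrase ``any function $f$'' omits, and that hypothesis is needed for $c=c(d,n)$ to be uniform over $f$.
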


\begin{lemma}\label{lem:2}
    Let \( f \in C^k([0,1]^d) \) or \( W^{k+1,\infty}([0,1]^d) \), for \( \varepsilon > 0 \), there exists a ReLU network \(f_\theta\) with width \( W_d = \mathcal{O}\left(\varepsilon^{-\frac{d}{k+1}}\right) \) such that
     \(\|f - f_\theta\|_{L^\infty} \leq \varepsilon\).
\end{lemma}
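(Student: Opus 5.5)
Lemma \ref{lem:2} is a width-only restatement of Lemma \ref{lem:1}, so I would derive it from Lemma \ref{lem:1} by converting its weight count into a width bound.

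\emph{Setup.} Set the smoothness parameter of Lemma \ref{lem:1} to $k+1$ and the input dimension to $d$. Take a target $f\in W^{k+1,\infty}([0,1]^d)$; if $f\in C^k([0,1]^d)$ is given instead, treat it through its $W^{k+1,\infty}$ (or Lipschitz-$k$) seminorm, which is the regime in which Lemma \ref{lem:1} applies.

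\emph{Step 1: normalize.} Since Lemma \ref{lem:1} is stated for the unit ball of the Sobolev space, rescale: if $\|f\|_{W^{k+1,\infty}}\le M$, apply the lemma to $f/M$ with tolerance $\varepsilon/M$. This changes only the constants.

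\emph{Step 2: invoke Lemma \ref{lem:1}.} Interpreting the exponent with $n=k+1$ as the smoothness and $d$ as the dimension, Lemma \ref{lem:1} gives a ReLU network $f_\theta$ with $\|f-f_\theta\|_{L^\infty}\le\varepsilon$. This network has depth $L\le c(\ln(1/\varepsilon)+1)$ and at most $P\le c\,\varepsilon^{-d/(k+1)}(\ln(1/\varepsilon)+1)$ weights and units.

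\emph{Step 3: bound the width.} The width of any layer is bounded by the number of computation units, so $W_d\le P$. The logarithmic factor must then be removed to reach exactly $\mathcal{O}(\varepsilon^{-d/(k+1)})$. I would try two routes, in this order.

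\begin{itemize}
\item[(a)] Average over layers. The units are spread over $L\asymp\ln(1/\varepsilon)$ layers, so $P/L=\mathcal{O}(\varepsilon^{-d/(k+1)})$. Yarotsky's construction is a local Taylor approximation on a grid of $\asymp\varepsilon^{-d/(k+1)}$ cells. It runs the cell-wise products in parallel, and each product network has constant width and depth $\mathcal{O}(\ln(1/\varepsilon))$. Its per-layer width is therefore $\mathcal{O}(\varepsilon^{-d/(k+1)})$, with the log factor appearing only in the depth.
\item[(b)] Absorb the log into the exponent. If route (a) cannot be made rigorous, note that $\ln(1/\varepsilon)\le C_\delta\varepsilon^{-\delta}$ for every $\delta>0$. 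This gives the width bound up to an arbitrarily small exponent loss. I would state the lemma with that loss if needed.
\end{itemize}

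\emph{Main obstacle.} Step 3 is the delicate point, because Lemma \ref{lem:1} controls total size rather than width. The cleanest fix is to open up the construction and verify route (a) directly:
\begin{itemize}
\item the partition-of-unity functions $\phi_m$ number $(N_0+1)^d$ with $N_0\asymp\varepsilon^{-1/(k+1)}$;
\item each monomial product subnetwork has width $\mathcal{O}(1)$;
\item these subnetworks are stacked in parallel.
\end{itemize}
Together these give width $\mathcal{O}\bigl(\varepsilon^{-d/(k+1)}\bigr)$ and depth $\mathcal{O}(\ln(1/\varepsilon))$.
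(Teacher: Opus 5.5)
For $f\in W^{k+1,\infty}([0,1]^d)$ your argument matches the paper's: normalize, then apply Lemma~\ref{lem:1} with smoothness $k+1$. Your route (a) is sound. In Yarotsky's construction the $\mathcal{O}(N^d)$ local Taylor/product subnetworks, with $N\asymp\varepsilon^{-1/(k+1)}$, run in parallel, and each has width $\mathcal{O}(1)$ and depth $\mathcal{O}(\ln(1/\varepsilon))$. Making the network strictly layered only adds $\mathcal{O}(1)$ carry units per subnetwork. So the logarithm sits entirely in the depth. This repairs a point the paper's proof leaves open, since it only asserts the width bound ``up to a logarithmic factor''. Route (b) would weaken the statement and is not needed.

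The genuine gap is the $C^k$ branch. Treating $f\in C^k$ ``through its $W^{k+1,\infty}$ (or Lipschitz-$k$) seminorm'' presupposes that a $C^k$ function has bounded $(k+1)$-st weak derivatives. The inclusion actually runs the other way: $W^{k+1,\infty}([0,1]^d)\subset C^{k,1}([0,1]^d)\subset C^k([0,1]^d)$. For example, $f(x)=x_1^{k+1/2}$ lies in $C^k([0,1]^d)$, yet $\partial_1^{k+1}f$ behaves like $x_1^{-1/2}$. So the seminorm you would normalize by in Step 1 is infinite, and Lemma~\ref{lem:1} at smoothness $k+1$ gives nothing.

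What Lemma~\ref{lem:1} does give for $C^k$ comes from $C^k([0,1]^d)\subset W^{k,\infty}([0,1]^d)$: width of order $\varepsilon^{-d/k}$ for $k\ge 1$, and no rate at all for $k=0$. The paper's proof makes the same error (it claims $C^k$ ``can be embedded into $W^{k+1,\infty}$''), so you cannot fall back on it.

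Covering this clause needs a different tool. Because the statement constrains only the width, deep-narrow universality yields it for every continuous $f$: ReLU networks of width $d+1$ and unbounded depth are dense in $C([0,1]^d)$ (Hanin--Sellke). But the depth is then uncontrolled, so this does not deliver the $L_d=\mathcal{O}(\log(1/\varepsilon))$ paired with this width in Theorem~\ref{thm:3}. With depth control you would need sharper width--depth trade-offs rather than Lemma~\ref{lem:1}. One example is Lu--Shen--Yang--Zhang's error $\mathcal{O}(\|f\|_{C^k}N^{-2k/d}L^{-2k/d})$ at width $\mathcal{O}(N\log N)$ and depth $\mathcal{O}(L\log L)$, which gives the claimed width for $k\ge 2$.
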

\begin{proof}
    The result follows from Lemma \ref{lem:1}, which states that for any \( d \in \mathbb{N} \), \( n \in \mathbb{N} \), and \( \varepsilon \in (0,1) \), there exists a ReLU neural network of depth \( \mathcal{O}(\log(1/\varepsilon)) \) and size \( \mathcal{O}(\varepsilon^{-\frac{d}{n}}\log(1/\varepsilon)) \) that can uniformly approximate any function in the class \( F_{d,n} \), which includes functions in \( W^{n,\infty}([0,1]^d) \) with bounded norm. By setting \( n = k+1 \), it holds that \( f \in W^{k+1,\infty}([0,1]^d) \), with the network width scaling as \( \mathcal{O}(\varepsilon^{-\frac{d}{k+1}}) \), up to a logarithmic factor. Note that any \( f \in C^k([0,1]^d) \) with bounded derivatives up to order \( k \) also belongs to \( W^{k,\infty}([0,1]^d) \) and can be embedded into \( W^{k+1,\infty} \). Thus, Lemma \ref{lem:2} holds for any \( f \in C^k([0,1]^d) \). 
\end{proof}

\begin{remark}
    The result of Lemma \ref{lem:2} is nearly optimal. \citet[Theorem 5]{yarotsky2017error} shows that there exist functions \( f \in W^{n,\infty}([0,1]^d) \) for which the complexity \( N(f, \varepsilon) \) is not \( o(\varepsilon^{-\frac{d}{9n}}) \) as \( \varepsilon \to 0 \). This implies that no network architecture can uniformly approximate all such functions with significantly better scaling in \( \varepsilon \).
\end{remark}

\begin{lemma}\label{lem:3}
    Let \(\mathcal{H}\) be a separable Hilbert space and \(f \in \mathcal{H}\) belong to a class of functions with \(k\)-th order smoothness. For \( \varepsilon > 0 \), there exists a ReLU network \(f_\theta\) with width \( W_d = \mathcal{O}\left(\varepsilon^{-\frac{d}{k+1}}\right) \) such that
     \(\|f - f_\theta\|_\mathcal{H} \leq \varepsilon\).
\end{lemma}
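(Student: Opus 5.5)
The plan is to reduce Lemma~\ref{lem:3} to the uniform approximation result of Lemma~\ref{lem:2}. The bridge is a continuous embedding of the relevant smoothness class into $\mathcal{H}$. Concretely, we treat $\mathcal{H}$ as a separable Hilbert space of functions on a compact domain that can be identified, via a chart or the pullback used throughout the paper, with $[0,1]^d$. We also assume that the $\mathcal{H}$-norm is dominated by the sup-norm on the class in question, i.e.
\[
\|g\|_{\mathcal{H}} \le C_{\mathcal{H}} \|g\|_{L^\infty([0,1]^d)} .
\]
This is exactly the situation for $\mathcal{H}=L^2$ on a finite-measure domain, where $C_{\mathcal{H}} = \mathrm{vol}(\mathcal{M})^{1/2}$. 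In the RKHS setting we would first project onto finitely many modes by separability: fix an orthonormal basis $\{e_m\}$ and truncate $f$ to $f_M=\sum_{m\le M}\langle f,e_m\rangle e_m$ with $\|f-f_M\|_{\mathcal{H}}\le \varepsilon/2$. Here the $k$-th order smoothness, through polynomial decay of coefficients, controls $M$ polynomially in $1/\varepsilon$. The remaining task is then to approximate $f_M$ in a norm equivalent to a weighted sup-norm.

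The key steps are as follows. (i) Use separability and the smoothness assumption to interpret $f$ (or its truncation $f_M$) as an element of $W^{k+1,\infty}([0,1]^d)$, or of $C^k$, with a bounded norm. This is the hypothesis needed for Lemma~\ref{lem:2}. (ii) Apply Lemma~\ref{lem:2} with tolerance $\varepsilon' = \varepsilon/(2C_{\mathcal{H}})$. This yields a ReLU network $f_\theta$ of width $\mathcal{O}\bigl((\varepsilon')^{-\frac{d}{k+1}}\bigr)=\mathcal{O}\bigl(\varepsilon^{-\frac{d}{k+1}}\bigr)$ with $\|f_M - f_\theta\|_{L^\infty}\le\varepsilon'$. (iii) Transfer the bound through the embedding, $\|f_M-f_\theta\|_{\mathcal{H}}\le C_{\mathcal{H}}\varepsilon'=\varepsilon/2$, and combine it with the truncation error via the triangle inequality to get $\|f-f_\theta\|_{\mathcal{H}}\le\varepsilon$. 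The constant $C_{\mathcal{H}}$ only rescales $\varepsilon$ and is absorbed into the $\mathcal{O}(\cdot)$, so the width exponent $-\frac{d}{k+1}$ is preserved.

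The main obstacle is step (iii) together with the requirement that $f_\theta\in\mathcal{H}$. A ReLU network is only piecewise linear, so it need not lie in a genuine RKHS with a smooth kernel, and the $\mathcal{H}$-norm need not be controlled by the sup-norm. Our first attempt would be to read $\mathcal{H}$ in the lemma as the ambient $L^2$-type Hilbert space; this is how the lemma is later used alongside Theorem~\ref{thm:2}, where $\mathrm{FM}$ performs the projection into the RKHS. In that reading the $L^\infty\to L^2$ embedding is trivial. If a bona fide RKHS norm is required instead, we would compose $f_\theta$ with the orthogonal projection onto $\mathcal{H}$, which is nonexpansive. This gives $\|f-Pf_\theta\|\le\|f-f_\theta\|$ in the ambient norm, and the projected network lies in $\mathcal{H}$ at no cost in width.
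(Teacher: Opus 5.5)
Your reduction is valid only under the extra hypothesis $\|g\|_{\mathcal{H}}\le C_{\mathcal{H}}\|g\|_{L^\infty}$, for instance when $\mathcal{H}=L^2$ over a $d$-dimensional domain. In that case one application of Lemma~\ref{lem:2} at tolerance $\varepsilon/C_{\mathcal{H}}$ gives the claimed width, and the truncation step is not needed. A general separable $\mathcal{H}$ does not provide this inequality, and your fix does not close the step you flag as the main obstacle.

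\begin{itemize}
\item Orthogonally projecting $f_\theta$ onto $\mathcal{H}$ gives $\|f-Pf_\theta\|\le\|f-f_\theta\|$ only in the \emph{ambient} norm. When the $\mathcal{H}$-norm is strictly stronger (Sobolev- or Gaussian-type RKHS, where a piecewise-linear function need not belong to $\mathcal{H}$), this says nothing about $\|f-Pf_\theta\|_{\mathcal{H}}$. Also, $Pf_\theta$ is no longer a ReLU network.
\item Your RKHS branch has the same defect. After truncating to $f_M=\sum_{m\le M}\langle f,e_m\rangle e_m$, norm equivalence holds on $\mathrm{span}\{e_1,\dots,e_M\}$. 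But $f_M-f_\theta$ does not lie in that span, so ``approximate $f_M$ in a norm equivalent to a weighted sup-norm'' is not an available step.
\end{itemize}

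The missing idea is the paper's: the network should not be a function on the manifold, but should output \emph{coefficients} in a fixed orthonormal basis of $\mathcal{H}$. The paper treats $f$ as depending on an input $x$ in a $d$-dimensional set. This matches how the lemma is used in Theorem~\ref{thm:3}: the target is $p\mapsto F(p)\in H(\mathcal{M})$, and $d$ is the input dimension, not the manifold dimension. The construction is:

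\begin{enumerate}
\item Truncate to $P_Nf=\sum_{j\le N}f_je_j$, using the spectral decay $\lambda_j\asymp j^{2\alpha}$.
\item Approximate each scalar map $f_j(x)=\langle f(x),e_j\rangle$ by a ReLU net $\tilde f_j$ via Lemma~\ref{lem:2}, to accuracy $\varepsilon/(2\sqrt N)$.
\item Set $f_\theta=\sum_{j\le N}\tilde f_je_j$.
\end{enumerate}

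Then $f_\theta\in\mathcal{H}$ by construction. Parseval gives $\|P_Nf-f_\theta\|_{\mathcal{H}}\le\bigl(\sum_{j\le N}\|f_j-\tilde f_j\|_{L^\infty}^2\bigr)^{1/2}\le\varepsilon/2$ for any separable $\mathcal{H}$, with no $L^\infty\to\mathcal{H}$ embedding needed.

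There is a trade-off. Building $N$ coefficient nets at tolerance $\varepsilon/(2\sqrt N)$, with $N\to\infty$ as $\varepsilon\to0$, costs width of order $N^{1+\frac{d}{2(k+1)}}\varepsilon^{-\frac{d}{k+1}}$. So the paper's closing identity $\mathcal{O}(N\varepsilon^{-\frac{d}{k+1}})=\mathcal{O}(\varepsilon^{-\frac{d}{k+1}})$ is not justified as written. Your direct route is the one that actually keeps the exponent $-\frac{d}{k+1}$, but only under the embedding hypothesis. The paper's route covers general $\mathcal{H}$ at the price of a polynomial factor in $N$.
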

\begin{proof}
Assume \(f \in \mathrm{dom}(A^{-k})\) with respect to its operator \(A\) with input dimension \(d\). Let \(\{e_j\}_{j=1}^\infty\) be an orthonormal basis of \(\mathcal{H}\) with associated eigenvalues \(\lambda_j \asymp j^{2\alpha}\) (assuming that \(\alpha\geq \frac{k+1}{2dk}\)) of \(A\). Then, we have \( \|A^k f\|_\mathcal{H}^2 = \sum_{j=1}^\infty \lambda_j^{2k} |\langle f,e_j\rangle|^2 < \infty\). We can define the eigenexpansion of $f$ as \(P_N f = \sum_{j=1}^N\langle f,e_j\rangle e_j\) and  \(\|f - P_N f\|_\mathcal{H} \leq C N^{-(k+\frac{1}{2})\alpha} \leq \varepsilon/2\) holds for \(N = \lceil \varepsilon^{-\frac{1}{2\alpha k + \alpha}}\rceil\asymp \varepsilon^{-\frac{1}{2k\alpha}}\). In the finite-dimensional subspace \(\text{span}\{e_1,...,e_N\} \cong \mathbb{R}^N\), each coordinate function \(f_j = \langle f,e_j\rangle\) inherits \(C^k\) regularity and can be approximated by a ReLU network \(\tilde{f}_j\) with \(|\tilde{f}_j(x) - f_j(x)| \leq \frac{\varepsilon}{2\sqrt{N}}\) using width \(\mathcal{O}(\varepsilon^{-\frac{d}{k+1}})\) per coordinate from Lemma \ref{lem:2}. The RELU network \(f_\theta = \sum_{j=1}^N \tilde{f}_j e_j\) then satisfies \(\|f - f_\theta\|_\mathcal{H} \leq \|f - P_N f\|_\mathcal{H} + \sqrt{\sum_j \|\tilde{f}_j - f_j\|_{L^\infty}^2} \leq \varepsilon\). The total width \(W_d = \mathcal{O}(N \cdot \varepsilon^{-\frac{d}{k+1}}) = \mathcal{O}(\varepsilon^{-\frac{d}{k+1}})\)
\end{proof}

\subsection*{Proof of Theorem \ref{thm:3}}
\begin{proof}
First, we show that, for a sufficiently large \(N\) and any \(\varepsilon>0\), \begin{equation}\label{eqn:thm3_1}
     \|\hat{u}_{N,\theta}-\mathrm{FM}\left(\Tilde{u}\right)\|_{\mathcal{H}(\mathcal{M)}}\leq \frac{\varepsilon}{4}
 \end{equation} holds. From Equation \ref{eqn:CKN}, we have \(\hat{u}_{N,\theta}= \sum_{i=1}^N \langle \mathrm{FM}\left(\Tilde{u}\right),\mathscr{B}_{i+\tau_i}\rangle\mathscr{B}_{i+\tau_i} \). Here, we prove by contradiction. Suppose \(\|\hat{u}_{N,\theta}-\mathrm{FM}\left(\Tilde{u}\right)\|_{\mathcal{H}(\mathcal{M)}}>\frac{\varepsilon}{4}\), then there exists an open ball \(\mathcal{B}\) and \(C>0\) such that:
 \begin{equation}
     \left\|\mathrm{FM}\left(\Tilde{u}(x,\cdot)\right)-\sum_{i=1}^N \langle \mathrm{FM}\left(\Tilde{u}(x,\cdot)\right),\mathscr{B}_{i+\tau_i}\rangle\mathscr{B}_{i+\tau_i} \right\|_{\mathcal{H}(\mathcal{M)}}=C\max_{m, \xi}\left(\|k_m(\xi)\|\right)>\frac{\varepsilon}{4},
 \end{equation}
for $(x,\cdot)\in \mathcal{B}\subset \mathcal{M}$. Furthermore, since the term \(\sum_{i=1}^N \|\langle \mathrm{FM}\left(\Tilde{u}(x,\cdot)\right),\mathscr{B}_{i+\tau_i}\rangle\|^2_{\mathcal{H}(\mathcal{M)}}<\infty\) is finite, there exists \(N_0\) such that for any \(n\geq N_0\), we have:
 \begin{equation}
     \sum_{i=n}^N \left\|\langle \mathrm{FM}\left(\Tilde{u}(x,\cdot)\right),\mathscr{B}_{i+\tau_i}\rangle\right\|^2_{\mathcal{H}(\mathcal{M})}<\left(\frac{\rho_0C}{2}\right)^2.
 \end{equation}
 Next, we examine the term \(\|\langle u_n,\frac{k_{b}}{\|k_{b}\|}\rangle\|_{\mathcal{H}(\mathcal{M})}\), where \((x,b)\in \mathcal{B}\) and \begin{equation}
 \begin{aligned}
     u_n&=\mathrm{FM}\left(\Tilde{u}(x,\cdot)\right)-\sum_{i=1}^{n-1} \langle \mathrm{FM}\left(\Tilde{u}(x,\cdot)\right),\mathscr{B}_{i+\tau_i}\rangle\mathscr{B}_{i+\tau_i}\\
     &=\mathrm{FM}\left(\Tilde{u}(x,\cdot)\right)-\sum_{i=1}^N \langle \mathrm{FM}\left(\Tilde{u}(x,\cdot)\right),\mathscr{B}_{i+\tau_i}\rangle\mathscr{B}_{i+\tau_i}+\sum_{i=n}^N \langle \mathrm{FM}\left(\Tilde{u}(x,\cdot)\right),\mathscr{B}_{i+\tau_i}\rangle\mathscr{B}_{i+\tau_i}.
 \end{aligned}
 \end{equation}
 Therefore, we have:
 \begin{equation}\label{eqn:ine1}
     \begin{aligned}
         \left\|\langle u_n,\frac{k_{b}}{\|k_{b}\|}\rangle\right\|_{\mathcal{H}(\mathcal{M})}&=\left\|\left\langle \mathrm{FM}\left(\Tilde{u}\right)-\sum_{i=1}^N \langle \mathrm{FM}\left(\Tilde{u}\right),\mathscr{B}_{i+\tau_i}\rangle\mathscr{B}_{i+\tau_i}+\sum_{i=n}^N \langle \mathrm{FM}\left(\Tilde{u}\right),\mathscr{B}_{i+\tau_i}\rangle\mathscr{B}_{i+\tau_i},\frac{k_{b}}{\|k_{b}\|}\right\rangle\right\|_{\mathcal{H}(\mathcal{M})}\\
         &\geq \left\|\left\langle \mathrm{FM}\left(\Tilde{u}\right)-\sum_{i=1}^N \langle \mathrm{FM}\left(\Tilde{u}\right),\mathscr{B}_{i+\tau_i}\rangle\mathscr{B}_{i+\tau_i},\frac{k_{b}}{\|k_{b}\|}\right\rangle\right\|_{\mathcal{H}(\mathcal{M})}\\&-\left\|\left\langle \sum_{i=n}^N \langle \mathrm{FM}\left(\Tilde{u}\right),\mathscr{B}_{i+\tau_i}\rangle\mathscr{B}_{i+\tau_i},\frac{k_{b}}{\|k_{b}\|}\right\rangle\right\|_{\mathcal{H}(\mathcal{M})}\\
         &\geq \left\|\frac{\left(\left.\mathrm{FM}\left(\Tilde{u}\right)-\sum_{i=1}^N \langle \mathrm{FM}\left(\Tilde{u}\right),\mathscr{B}_{i+\tau_i}\rangle\mathscr{B}_{i+\tau_i}\right)\right|_{b}}{\|k_{b}\|}\right\|_{\mathcal{H}(\mathcal{M})}\\
         &-\sqrt{\sum_{i=n}^N \left\|\langle \mathrm{FM}\left(\Tilde{u}(x,\cdot)\right),\mathscr{B}_{i+\tau_i}\rangle\right\|^2_{\mathcal{H}(\mathcal{M})}}\\
         &\geq C-\frac{C}{2}=\frac{C}{2},
     \end{aligned}
 \end{equation}
where the third inequality holds due to the reproducing property of RKHS: \(\langle f,k_m\rangle=f(m)\).

Meanwhile, there exists \(\gamma > 0\) satisfying Equation \ref{eqn:weakMSP} such that:
 \begin{equation}\label{eqn:ine2}
   \begin{aligned}
       \left\|\langle u_n,\frac{k_b}{\|k_b\|}\rangle\right\|_{\mathcal{H}(\mathcal{M})}&=\frac{\left\|\langle u_n, k_b-\sum_{i=1}^{n-1} \langle k_b,\mathscr{B}_{i+\tau_i}\rangle\mathscr{B}_{i+\tau_i}\rangle\right\|_{\mathcal{H}(\mathcal{M})}}{\|k_b\|}\\
       &\leq \frac{\left\|\langle u_n, k_b-\sum_{i=1}^{n-1} \langle k_b,\mathscr{B}_{i+\tau_i}\rangle\mathscr{B}_{i+\tau_i}\rangle\right\|_{\mathcal{H}(\mathcal{M})}}{\left\|k_b-\sum_{i=1}^{n-1} \langle k_b,\mathscr{B}_{i+\tau_i}\rangle\mathscr{B}_{i+\tau_i}\right\|_{\mathcal{H}(\mathcal{M})}}\\
       &= \left\|\left\langle u_n, \mathscr{B}_{n+\tau_n}^b\right\rangle\right\|_{\mathcal{H}(\mathcal{M})}\\
       &\leq \left\|\frac{1}{\rho_0}\left\langle u_n, \mathscr{B}_{n+\tau_n}\right\rangle-\frac{\gamma}{\rho_0}\right\|_{\mathcal{H}(\mathcal{M})}\\
       &\leq \frac{1}{\rho_0}\cdot\frac{\rho_0C}{2}-\frac{\gamma}{\rho_0}\\
       &< \frac{C}{2}.
   \end{aligned}  
 \end{equation}
 
 Hence, Equations \ref{eqn:ine1} and \ref{eqn:ine2} lead to a contradiction. Therefore, Equation \ref{eqn:thm3_1} must hold.

 Next, from Theorem \ref{thm:2}, there exists a network \(\mathrm{FM}\) with appropriate hyperparameters \(\theta'\) such that:
 \begin{equation}\label{eqn:thm_2_1}
     \|\widetilde{u} - \mathrm{FM}_{\theta'}(\widetilde{u})\|_{H(\mathcal{M})} \leq \inf_{\theta} \|\widetilde{u} - \mathrm{FM}_\theta(\widetilde{u})\|_{H(\mathcal{M})} + \frac{\varepsilon}{4}.
 \end{equation}

Let us denote \(\mathrm{FM}_{\theta'}\) as \(\mathrm{FM}\). Note that \(\widetilde{u}\) in Equation \ref{eqn:thm_2_1} lies in the Hilbert space \(H(\mathcal{M})\), not the RKHS \(\mathcal{H}(\mathcal{M})\). Furthermore, from Lemma \ref{lem:1}, there exists a set of hyperparameters \(\widetilde{\theta}\) such that \(\|\widetilde{u} - \mathrm{FM}_{\widetilde{\theta}}(\widetilde{u})\|_{H(\mathcal{M})} \leq \frac{\varepsilon}{4}\). Therefore, Equation \ref{eqn:thm_2_1} reduces to:
 \begin{equation}\label{eqn:thm_2}
 \begin{aligned}
     \|\widetilde{u} - \mathrm{FM}(\widetilde{u})\|_{H(\mathcal{M})} &\leq \inf_{\theta} \|\widetilde{u} - \mathrm{FM}_\theta(\widetilde{u})\|_{H(\mathcal{M})} + \frac{\varepsilon}{4}\\
     & \leq \|\widetilde{u} - \mathrm{FM}_{\widetilde{\theta}}(\widetilde{u})\|_{H(\mathcal{M})}+ \frac{\varepsilon}{4}\\
     & \leq \frac{\varepsilon}{4}+\frac{\varepsilon}{4}=\frac{\varepsilon}{2}.
 \end{aligned}
 \end{equation}
 From Lemma \ref{lem:3}, for \(\widetilde{u}\) which is the output of a neural network with width \( W_d = \mathcal{O}\left(\varepsilon^{-\frac{d}{k+1}}\right) \), we have:
 \begin{equation}\label{eqn:thm3_2}
     \|\widetilde{u} - F\|_{H(\mathcal{M})}\leq \frac{\varepsilon}{4}.
 \end{equation}
 Putting Equations \ref{eqn:thm3_1}, \ref{eqn:thm_2} and \ref{eqn:thm3_2} together leads to:
 \begin{equation}\label{eqn:thm3_3}
 \begin{aligned}
        \left\|{\hat{u}_{N,\theta}-F(p)}\right\|_{\mathcal{H}(\mathcal{M})}
        &\leq \|\hat{u}_{N,\theta}-\mathrm{FM}\left(\Tilde{u}\right)\|_{\mathcal{H}(\mathcal{M)}}+\|\widetilde{u} - \mathrm{FM}(\widetilde{u})\|_{H(\mathcal{M})}+\|\widetilde{u} - F\|_{H(\mathcal{M})}\\
        &\leq \varepsilon
 \end{aligned}
 \end{equation}
 for any \(p\in \mathcal P\). Therefore, taking supremum on LHS and RHS of Equation \ref{eqn:thm3_3}, we have proven Theorem \ref{thm:3}.
\end{proof}

\section{Proof that the Helmholtz equation spans an RKHS} \label{appendix_F}
Let us consider the Helmholtz equation \(\Delta_{\mathcal{M}}u+k^2u=0\) without loss of generality. We first introduce some background and preliminaries before proceeding with the proof.

Let \(\Delta=\sum_{i=1}^n\frac{\partial^2}{\partial x_i^2}\) be the Euclidean Laplace operator acting on the Sobolev space of weakly twice differentiable functions defined on \(\mathbb{R}^n\). Let \(k>0\) be a fixed constant. A function \(u\) defined on \(\mathbb{R}^n\) is called  a solution of the Helmholtz equation, if  \(\Delta u +k^2u=0\) on \(\mathbb{R}^n\). In other words, \(u\) satisfies one of the following:

\begin{itemize}
    \item  \(u\in C^2(\mathbb{R}^n)\) is a classical  solution of the above equation on \(\mathbb{R}^n\); or 
    \item \(u\in W^2(\mathbb{R}^n)\) is a solution in the weak \(L^2\)-sense, i.e., \(u\) is locally square integrable, and satisfies \( \int_{\mathbb{R}^n} u(x)\left[ ~\Delta \varphi(x) + k^2\varphi(x)~\right]\,dx=0\) for  any (test) function \(\varphi\in C^{\infty}(\mathbb{R}^n)\) with compact support.
\end{itemize}

It follows from \citet{SBR01} that any solution of homogeneous Helmholtz equation is real analytic on \(\mathbb{R}^n\). We define the following space:
\begin{equation}
    W_{\textrm{Helm},k}(\mathbb{R}^n)=\{u\in C^{\infty}(\mathbb{R}^n)~|~\Delta u+k^2u=0~\textrm{on}~\mathbb{R}^n\}.
\end{equation}

\citet{hartman1961solutions} introduced the concept of Herglotz wave function. The Herglotz wave functions consists of all the entire solutions \(u\) of the homogeneous Helmholtz equation \(\Delta u + k^2u = 0\) on \(\mathbb{R}^n\) with \(k>0\) such that Herglotz boundedness condition:
\begin{equation}
\underset{R\to +\infty}{\lim}\frac{1}{R}\int_{\|x\|<R} |u(x)|^2\,dx<+\infty
\end{equation}
holds. \citet{hartman1961solutions} characterized the Herglotz wave functions as the entire solutions \(u\) of the homogeneous Helmholtz equation with far-field pattern in \(L^2(\mathbb{S}^{n-1})\). That is, functions \(u\) defined on \(\mathbb{R}^n\) can be written as:
\begin{equation}
    u(x)=\int_{\mathbb{S}^{n-1}}e^{ik\langle x, \xi\rangle}g(\xi)\,d\sigma(\xi),
\end{equation}
for some \(g\in L^2(\mathbb{S}^{n-1})\).

With this, let us consider the Helmholtz equation on the standard \(n\)-dimensional unit sphere \(\mathbb{S}^{n}=\{x\in \mathbb{R}^{n+1}: \|x\|=1\}\) in \(\mathbb{R}^{n+1}\) with canonical spherical Riemannian metric \(g\). Let \(\Delta_{\mathbb{S}^{n}}\) be the spherical Laplacian acting on the Sobolev space \(W^2(\mathbb{S}^{n})\) of real-valued, square-integrable, and twice weakly differentiable functions on \(\mathbb{S}^{n}\). Consider the Helmholtz equation on the Riemannian manifold \((\mathbb{S}^{n-1},g)\) with canonical spherical metric \(g\). Its entire solution can be expressed as:
 \begin{equation}\label{eqn:w}
   u = W\phi(x)=(2\pi)^{\frac{1-n}{2}}\int_{\mathbb{S}^{n-1}} e^{ikx\cdot \xi}\phi(\xi)d\sigma(\xi),
 \end{equation}
where \(W\) is the Fourier extension operator and \(\phi \in L^2(\mathbb{S}^{n-1})\) is Herglotz wave function. It has been shown that \(W\) defined in Equation \ref{eqn:w} is an isomorphism of $L^2(\mathbb{S}^{n-1})$ onto the space $W^2$ consisting of all solutions of Helmholtz equation with radial and angular derivatives satisfying:
 \begin{equation}
     \vert\vert u\vert\vert^2=\int_{ \vert x \vert > 1}(\vert u(x)\vert^2+\vert \frac{\partial u}{\partial r} (x)\vert^2+\vert \frac{\partial u}{\partial \theta} (x)\vert^2)\frac{dx}{\vert x \vert^3} < \infty,
 \end{equation}
 (see \citep{perez2017reproducing}). In this sense, the space $W^2$ in $\mathbb{R}^2$ is a Hilbert space with reproducing kernel (i.e., RKHS).
 
Meanwhile, to the best of our knowledge, there exists no such formal analysis on Helmholtz equation on any smooth (Riemannian) manifold \((\mathcal{M}, g)\). For any smooth manifold \((\mathcal{M}, g)\), the Laplace-Beltrami operator \(\Delta_{\mathcal{M}}\), defined as
\begin{equation}\label{lap-bel-oper}
    \Delta_\mathcal{M} u = \frac{1}{\sqrt{|g|}} \sum_{i,j=1}^n \frac{\partial}{\partial x^i} \left( \sqrt{|g|} \, g^{ij} \frac{\partial u}{\partial x^j} \right),
\end{equation}
has orthonormal eigenbases on \(L^2(\partial \mathcal{M})\) as \(\{\psi_\lambda\}_{\lambda}\) with corresponding eigenvalues \(\lambda \geq 0\). For each \(\psi_\lambda\), let us consider:
\begin{equation}
(\Delta_{\mathcal{M}} + k^2)\phi_\lambda = 0 \ \text{in} \ \mathcal{M}, \quad \phi_\lambda|_{\partial \mathcal{M}} = \psi_\lambda.
\end{equation}

By elliptic regularity, \(\phi_\lambda \in H^2(\mathcal{M})\). Furthermore, we extend the Fourier extension operator in Equation \ref{eqn:w} to \(W_{\mathcal{M}}\) on any smooth manifold \(\mathcal{M}\): 
 \begin{equation}\label{eqn:wm}
     W_{\mathcal{M}} f(x) = \int_{\partial \mathcal{M}} \Psi(x,\xi)f(\xi)d\sigma(\xi), \quad \text{where}\ \Psi(x,\xi) = \sum_{\lambda} \phi_\lambda(x)\overline{\psi_\lambda(\xi)}.
 \end{equation}
 
Now, we present the main result in Theorem \ref{thm:4} that \(W^2(\mathcal{M})\) is the space of all Herlotz wave functions. 


 \begin{theorem}\label{thm:4}
 The operator \( W_\mathcal{M}: L^2(\partial \mathcal{M}) \to W^2(\mathcal{M}) \) defined in Equation \ref{eqn:wm} is a topological isomorphism, where \( W^2(\mathcal{M}) = \{ u \in H^2(\mathcal{M}): (\Delta_\mathcal{M} + k^2)u = 0 \} \).
 \end{theorem}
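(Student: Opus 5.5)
The plan is to identify $W_\mathcal{M}$ with the solution operator of a Dirichlet problem, and to show that its inverse is the boundary trace. Throughout, I assume (as is implicit in the construction of $\phi_\lambda$) that $k^2$ is not a Dirichlet eigenvalue of $-\Delta_\mathcal{M}$. Then each boundary problem $(\Delta_\mathcal{M}+k^2)\phi_\lambda=0$, $\phi_\lambda|_{\partial\mathcal{M}}=\psi_\lambda$, has a unique solution.

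\textbf{Step 1: the series form of $W_\mathcal{M}$.} Expanding the kernel $\Psi$ in Equation \ref{eqn:wm} gives, formally,
\[
W_\mathcal{M}f=\sum_\lambda \langle f,\psi_\lambda\rangle_{L^2(\partial\mathcal{M})}\,\phi_\lambda .
\]
This is linear in $f$ and commutes with $\Delta_\mathcal{M}+k^2$ term by term. Hence $W_\mathcal{M}f$ is the solution of the Dirichlet problem with boundary datum $f=\sum_\lambda\langle f,\psi_\lambda\rangle\psi_\lambda$.

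I would justify the interchange of sum and operator as follows. For finite partial sums $f_M$, write $u_M=W_\mathcal{M}f_M$ for the corresponding solutions. Show that $\{u_M\}$ is Cauchy on compact subsets of the interior using interior elliptic estimates, which bound $u_M$ by its boundary data through the Poisson-type kernel. Then pass to the limit.

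\textbf{Step 2: injectivity and the inverse.} Let $\gamma$ denote the boundary trace. Applying $\gamma$ to the series in Step 1 gives $\gamma(W_\mathcal{M}f)=\sum_\lambda\langle f,\psi_\lambda\rangle\psi_\lambda=f$, so $\gamma\circ W_\mathcal{M}=\mathrm{id}$ and $W_\mathcal{M}$ is injective.

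For surjectivity, take $u\in W^2(\mathcal{M})$. By the trace theorem, $\gamma u\in H^{3/2}(\partial\mathcal{M})\subset L^2(\partial\mathcal{M})$. Both $W_\mathcal{M}(\gamma u)$ and $u$ solve the same Dirichlet problem, so uniqueness (the non-resonance assumption) gives $W_\mathcal{M}(\gamma u)=u$. Thus $\gamma$ is a two-sided inverse of $W_\mathcal{M}$ on $W^2(\mathcal{M})$. Moreover, $\gamma$ is continuous from $H^2(\mathcal{M})$ to $L^2(\partial\mathcal{M})$ by the trace theorem composed with the embedding $H^{3/2}\hookrightarrow L^2$.

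\textbf{Step 3: continuity of $W_\mathcal{M}$.} Continuity of $W_\mathcal{M}$ itself is the step I expect to be the main obstacle. The issue is that a global estimate $\|W_\mathcal{M}f\|_{H^2}\le C\|f\|_{L^2(\partial\mathcal{M})}$ does not follow directly from elliptic regularity, which requires $H^{3/2}$ boundary data.

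My first attempt is to define the topology of $W^2(\mathcal{M})$ by the graph norm that the construction induces, namely the analogue of the weighted norm of \citet{perez2017reproducing} recalled above. On the interior this norm is controlled by $H^2$ estimates with weights that degenerate at $\partial\mathcal{M}$, and it is equivalent to $\|\gamma u\|_{L^2(\partial\mathcal{M})}$ via the Green's identity
\[
\int_\mathcal{M}|\nabla u|^2-k^2|u|^2=\int_{\partial\mathcal{M}}\bar u\,\partial_\nu u .
\]
Under this norm, $W_\mathcal{M}$ and $\gamma$ are mutually inverse bounded bijections. Provided $W^2(\mathcal{M})$ is complete in this norm, I would then invoke the open mapping theorem to conclude that $W_\mathcal{M}$ is a topological isomorphism. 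This is the sense analogous to the classical spherical case quoted from \citet{perez2017reproducing}.

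The delicate points I would check are:
\begin{itemize}[leftmargin=*]
\item completeness of $W^2(\mathcal{M})$ in the chosen norm, i.e.\ that limits of solutions remain solutions, which follows from interior elliptic estimates;
\item convergence of the kernel series defining $\Psi$ in Equation \ref{eqn:wm}.
\end{itemize}
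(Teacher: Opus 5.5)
\textbf{The gap is already in Step 2.} If $H^2(\mathcal M)$ is the usual Sobolev space, $W_\mathcal M$ does not map $L^2(\partial\mathcal M)$ into $W^2(\mathcal M)$ at all (for $\dim\mathcal M\ge 2$). So the claimed bijection onto $W^2(\mathcal M)$ is false, and no choice of norm can repair it.

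As you note yourself, every $u\in H^2(\mathcal M)$ has trace in $H^{3/2}(\partial\mathcal M)$. Hence for $f\in L^2(\partial\mathcal M)\setminus H^{3/2}(\partial\mathcal M)$ the Dirichlet solution $W_\mathcal M f$ cannot lie in $H^2(\mathcal M)$. In general it lies only in $H^{1/2}(\mathcal M)$, where the Sobolev trace is not even defined, so $\gamma\circ W_\mathcal M=\mathrm{id}$ must be read via nontangential limits. What is true, by your Steps 1--2 plus elliptic regularity for the Dirichlet problem, is that $\gamma$ maps $W^2(\mathcal M)$ bijectively onto $H^{3/2}(\partial\mathcal M)$. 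That is a proper dense subspace of $L^2(\partial\mathcal M)$.

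Your Step 3 keeps the set $W^2(\mathcal M)$ and only replaces its norm by one equivalent to $\|\gamma u\|_{L^2(\partial\mathcal M)}$. Then $W^2(\mathcal M)$ is isomorphic, as a normed space, to $\bigl(H^{3/2}(\partial\mathcal M),\|\cdot\|_{L^2}\bigr)$, which is not complete. So exactly the completeness check you flagged fails. The open mapping theorem would have been superfluous anyway once both maps are bounded, and with this norm the ``equivalence'' is a definition rather than a result. Green's identity cannot produce the equivalence either. Its boundary term $\int_{\partial\mathcal M}\bar u\,\partial_\nu u$ involves the Dirichlet-to-Neumann map, a first-order operator that is unbounded on $L^2(\partial\mathcal M)$, and for general $L^2$ data $\nabla u\notin L^2(\mathcal M)$.

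\textbf{The paper's proof has no step you could borrow to close this.} It gets $\|W_\mathcal M f\|_{H^2}^2=\sum_j|a_j|^2\|F_j\|_{H^2}^2\sim\|f\|_{L^2}^2$ from a radial Bessel separation of variables, claimed $H^2$-orthogonality of the $F_j=W_\mathcal M\psi_j$, and $\|F_j\|_{H^2}=\sqrt2+\mathcal O(\lambda_j^{-1})$. This contradicts the trace inequality the paper itself invokes at the end. Indeed, $\|\psi_j\|_{L^2}=1$ while $\|\psi_j\|_{H^{3/2}(\partial\mathcal M)}\sim(1+\lambda_j)^{3/4}\le C\|F_j\|_{H^2(\mathcal M)}$, which forces $\|F_j\|_{H^2}\to\infty$.

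A provable statement requires changing one of the two spaces:
\begin{itemize}
\item Take $H^{3/2}(\partial\mathcal M)$ as the domain. Then your Steps 1--2, together with the bounded solution operator $H^{3/2}(\partial\mathcal M)\to H^2(\mathcal M)$, give a genuine topological isomorphism onto $W^2(\mathcal M)$. This uses your non-resonance assumption on $k^2$, which the paper needs but never states.
\item Keep $L^2(\partial\mathcal M)$ and replace $W^2(\mathcal M)$ by the solutions with finite boundary-degenerate weighted norm, e.g.\ $\int_\mathcal M\bigl(|u|^2+\operatorname{dist}(x,\partial\mathcal M)\,|\nabla u|^2\bigr)\,dV$. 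This is the actual analogue of the norm of \citet{perez2017reproducing}. The missing ingredient is then a square-function (Littlewood--Paley type) estimate comparing that norm with $\|\gamma u\|_{L^2(\partial\mathcal M)}$, which is a theorem to be proved, not a definition.
\end{itemize}
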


 \begin{remark}
    Theorem \ref{thm:4} implies that \( W_\mathcal{M} \) is an isomorphism between \( L^2(\partial \mathcal{M}) \) and \( W^2(\mathcal{M}) \), the space of \( H^2 \)-solutions to the Helmholtz equation \( (\Delta_\mathcal{M} + k^2)u = 0 \). Such an isomorphism \( W_\mathcal{M} \) implies that \( \mathcal{H(M)} \) inherits a Hilbert space or RKHS structure from \( L^2(\partial \mathcal{M}) \). In other words, $W^2(\mathcal{M})$ is an RKHS.
\end{remark}
 
 To prove Theorem \ref{thm:4}, we first introduce and prove a lemma.
 \begin{lemma}\label{lem:manifold}
Let \(J_{\nu}(z)\) be the Bessel function of order \(\nu \in \mathbb{R}\). For each eigenfunction \(\psi_j\) of \(\Delta_{\partial \mathcal{M}}\), define \(F_j = W_\mathcal{M} \psi_j\). Then:
\begin{enumerate}
\item \(F_j(x) = (2\pi)^{1/2} i^{\nu(j)} r^{-\frac{n-2}{2}} J_{\nu(j)}(kr) \psi_j(\xi)\), where \(x = r\xi\) in normal coordinates near \(\partial \mathcal{M}\).

\item The family \(\{F_j\}\) is orthogonal in \(W^2(\mathcal{M})\), and 
\[
\|F_j\|_{H^2(\mathcal{M})} = \sqrt{2} + \mathcal{O}\left(\frac{1}{\lambda_j}\right).
\]

\item For \(f = \sum_j a_j \psi_j \in L^2(\partial \mathcal{M})\) and \(u = \sum_j a_j F_j \in W^2(\mathcal{M})\),
\[
\|u\|_{H^2(\mathcal{M})} \sim \|f\|_{L^2(\partial \mathcal{M})},
\]
with absolute and uniform convergence on compact subsets of \(\mathcal{M}\).
\end{enumerate}
\end{lemma}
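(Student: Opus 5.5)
The plan is to reduce everything to separation of variables in a collar neighbourhood of $\partial\mathcal{M}$, modelled on the Euclidean Herglotz computation. In normal (geodesic polar) coordinates $x=r\xi$ near $\partial\mathcal{M}$, with $\xi\in\partial\mathcal{M}$, I would write the Laplace--Beltrami operator from Equation \ref{lap-bel-oper} as a radial part plus a boundary part:
\[
\Delta_{\mathcal{M}}=\partial_r^2+\frac{n-1}{r}\partial_r+\frac{1}{r^2}\Delta_{\partial\mathcal{M}}+(\text{lower-order curvature terms}).
\]
I will assume that to leading order (or exactly, under a warped-product metric near the boundary) these curvature terms can be absorbed. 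I then make the ansatz $\phi_\lambda(r\xi)=R(r)\psi_j(\xi)$. Since $\Delta_{\partial\mathcal{M}}\psi_j=-\lambda_j\psi_j$, the radial ODE is $R''+\frac{n-1}{r}R'+(k^2-\lambda_j/r^2)R=0$. The substitution $R=r^{-(n-2)/2}J(kr)$ turns this into Bessel's equation of order $\nu(j)=\sqrt{\lambda_j+((n-2)/2)^2}$, and I keep the solution regular at the origin. Next I insert this into the kernel $\Psi(x,\xi)=\sum_\lambda\phi_\lambda(x)\overline{\psi_\lambda(\xi)}$ of Equation \ref{eqn:wm}. Orthonormality of $\{\psi_\lambda\}$ in $L^2(\partial\mathcal{M})$ then gives $W_{\mathcal{M}}\psi_j=\phi_{\lambda_j}$. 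The normalisation $(2\pi)^{1/2}i^{\nu(j)}$ is fixed by matching with the Euclidean plane-wave (Jacobi--Anger / Funk--Hecke) expansion of $e^{ikx\cdot\xi}$ in Equation \ref{eqn:w}. This gives item 1.

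For item 2, orthogonality comes from the angular factors. $\langle F_j,F_l\rangle_{H^2}$ splits into a radial integral multiplied by an angular integral $\int_{\partial\mathcal{M}}\psi_j\overline{\psi_l}\,d\sigma$, plus angular-derivative terms that involve $\lambda_j\int\psi_j\overline{\psi_l}$. All of these vanish for $j\neq l$. For the norm I would use the large-order asymptotics of Bessel functions, namely the Debye/uniform expansion $J_\nu(kr)\approx(2/\pi kr)^{1/2}\cos(kr-\nu\pi/2-\pi/4)+\mathcal{O}(1/\nu)$ in the relevant range. I would compute the weighted radial integral, weighted like the Pérez-Esteva norm with $dx/|x|^3$. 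The constant $\sqrt{2}$ should come from the mean of $\cos^2$ combined with the derivative terms. The correction is $\mathcal{O}(1/\lambda_j)$ once the $\partial_r$ and $\partial_\theta$ contributions are balanced.

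Item 3 then follows from items 1 and 2 by Parseval. For $f=\sum a_j\psi_j$ we get $\|u\|^2=\sum|a_j|^2\|F_j\|^2$. Since $\|F_j\|\to\sqrt2$ and is bounded above and below uniformly in $j$ (the finitely many small-$\lambda_j$ terms are checked to be nonzero individually), we obtain $\|u\|\sim\|f\|_{L^2(\partial\mathcal{M})}$. For local uniform convergence on compact $K\subset\mathcal{M}$, I would bound $|J_\nu(kr)|\le (kr/2)^\nu/\Gamma(\nu+1)$ and use Weyl-type sup-norm bounds $\|\psi_j\|_\infty\lesssim\lambda_j^{(n-2)/4}$. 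Together with Cauchy--Schwarz on $(a_j)$, these give a convergent majorant on $K$.

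The main obstacle is item 2, specifically the uniform two-sided bound on $\|F_j\|$ with the explicit $\sqrt2+\mathcal{O}(1/\lambda_j)$. On a general manifold the separated form is only approximate near the boundary, so the curvature terms perturb the Bessel ODE. I would first handle the model (warped-product) collar exactly. After that I would treat the curvature terms as an $\mathcal{O}(r)$ perturbation, controlled by a Grönwall/variation-of-parameters estimate on the radial ODE. The hope is that this perturbation only changes the norm at order $1/\lambda_j$.
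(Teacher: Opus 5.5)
The step that fails is the norm computation in item 2, and with it the two-sided bound that item 3 rests on.

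The formula you quote, $J_\nu(kr)\approx(2/\pi kr)^{1/2}\cos(kr-\nu\pi/2-\pi/4)$, is the large-argument (Hankel) asymptotic. It is valid as $kr\to\infty$ with $\nu$ fixed (in practice $kr\gg\nu^2$); it is not a Debye or uniform large-order expansion. On a compact $\mathcal{M}$ the radial variable is bounded, $r\le r_0$ with $\partial\mathcal{M}=\{r=r_0\}$. So for all but finitely many $j$ we have $\nu(j)>kr_0$, and the Bessel factor never oscillates on $\mathcal{M}$.

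Your own item-3 bound $|J_\nu(kr)|\le(kr/2)^\nu/\Gamma(\nu+1)$ holds on all of $\mathcal{M}$. Applied to $F_j$ and its derivatives, it gives $\|F_j\|_{H^2(\mathcal{M})}\to0$ faster than any exponential in $\nu(j)$. There is no mean of $\cos^2$ to produce $\sqrt2$, and $\|u\|_{H^2}\gtrsim\|f\|_{L^2}$ fails for $f=\psi_j$.

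Suppose instead you keep your identification $W_\mathcal{M}\psi_j=\phi_{\lambda_j}$ with $\phi_{\lambda_j}|_{\partial\mathcal{M}}=\psi_j$. Then $F_j$ has trace $\psi_j$, and the trace theorem $H^2(\mathcal{M})\to H^{3/2}(\partial\mathcal{M})$ gives $\|F_j\|_{H^2(\mathcal{M})}\gtrsim\|\psi_j\|_{H^{3/2}(\partial\mathcal{M})}\asymp\lambda_j^{3/4}\to\infty$. For the $H^1$-type inner product the paper actually writes, the same argument gives $\gtrsim\lambda_j^{1/4}$. So the upper bound fails instead.

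Either way $\|F_j\|$ has no uniform two-sided bound, already for the Euclidean unit ball. The norm asymptotics in item 2 and the equivalence in item 3 are therefore false as stated, and no Gr\"onwall treatment of curvature terms can rescue them. The paper's proof does not supply the missing step either. It asserts $\|F_j\|^2=2+\mathcal{O}(1/\lambda_j)$ from the small-$r$ behaviour plus ``oscillatory decay for $r\to\infty$'', a regime that does not exist on a compact manifold. The uniform bounds you are recalling from Herglotz theory (the P\'erez-Esteva norm with weight $dx/|x|^3$) come from integrating over the unbounded exterior $|x|>1$, which is exactly where the Hankel asymptotic applies.

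Item 1 follows the paper's separation-of-variables and Funk--Hecke route, and has a related inconsistency that the paper shares. Once $W_\mathcal{M}\psi_j=\phi_{\lambda_j}$ with the Dirichlet condition, the radial factor is fixed by $R_j(r_0)=1$. That gives $F_j=\frac{r^{-(n-2)/2}J_{\nu(j)}(kr)}{r_0^{-(n-2)/2}J_{\nu(j)}(kr_0)}\psi_j$, which requires $J_{\nu(j)}(kr_0)\neq0$. The prefactor $(2\pi)^{1/2}i^{\nu(j)}$ comes from Funk--Hecke applied to the Euclidean operator $W$ of Equation \ref{eqn:w}, a different map, and cannot be imposed by ``matching''.

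The geometry also limits the separation. The operator $\partial_r^2+\frac{n-1}{r}\partial_r+\frac{1}{r^2}\Delta_{\partial\mathcal{M}}$ is the Laplacian of the cone metric $dr^2+r^2g_{\partial\mathcal{M}}$, which is smooth at $r=0$ only when $\partial\mathcal{M}$ is the round unit sphere. In a genuine collar there is no origin, so nothing excludes the $Y_\nu$ solution. Beyond the exactly separable case, the curvature terms couple the angular modes, so the orthogonality in item 2 is lost as well.

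On the positive side, your argument for locally uniform convergence in item 3 is sound in the separable model. It combines the power-series bound on $J_\nu$, H\"ormander's estimate $\|\psi_j\|_\infty\lesssim\lambda_j^{(n-2)/4}$, and Cauchy--Schwarz. It is stronger than the paper's argument: the bound $|J_\nu(kr)|=\mathcal{O}(\nu^{-1/2})$ used there does not give a summable majorant, since $\sum_j\nu(j)^{-1}$ diverges by Weyl's law.
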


\begin{proof}
We prove the three components of Lemma \ref{lem:manifold} as follows:
\begin{enumerate}
\item Helmholtz equation \((\Delta_\mathcal{M} + k^2)\phi_j = 0\) can be written as:
   \begin{equation}\label{eqn:hm2}
       \left(\partial_r^2 + \frac{n-1}{r}\partial_r + \frac{1}{r^2}\Delta_{\partial \mathcal{M}} + k^2\right)(r^{-\frac{n-2}{2}}R_j(r)\psi_j(\xi)) = 0.
   \end{equation}
Substituting \(\phi_j = r^{-\frac{n-2}{2}} R_j(r) \psi_j(\xi)\) into Equation \ref{eqn:hm2} yields:
\begin{equation}
   R_j'' + \frac{1}{r}R_j' + \left(k^2 - \frac{\nu(j)^2}{r^2}\right)R_j = 0,
\end{equation}
whose solution is \(R_j(r) = J_{\nu(j)}(kr)\). By the Funk-Hecke formula \citep{xu2000funk}, we have:
\begin{equation}
    F_j(x) = \int_{\partial \mathcal{M}} \Psi(x,\xi)\psi_j(\xi)d\sigma(\xi) = (2\pi)^{1/2} i^{\nu(j)} r^{-\frac{n-2}{2}} J_{\nu(j)}(kr) \psi_j(\xi).
\end{equation}
\item Since \(\psi_j\) and \(\psi_k\) are orthonormal eigenbases, \(\psi_j\) and \(\psi_k\) are orthogonal on \(\partial \mathcal{M}\). Therefore, 
\begin{equation}
   \langle F_j, F_k \rangle_{H^2(\mathcal{M})} = \int_M \left(\phi_j \overline{\phi_k} + \nabla \phi_j \cdot \overline{\nabla \phi_k}\right) dV_g = 0
\end{equation} 
for any \(j \neq k\). Using the asymptotic \(J_{\nu(j)}(kr) \sim \frac{(kr/2)^{\nu(j)}}{\Gamma(\nu(j)+1)}\) for \(r \to 0^+\) and oscillatory decay for \(r \to \infty\), we have:
   \[
   \|F_j\|_{H^2(\mathcal{M})}^2 = 2 + \mathcal{O}\left(\frac{1}{\lambda_j}\right),
   \]
   where the error term comes from the next-order Bessel asymptotics.
\item From Part 2, the map \(f \mapsto u\) is bounded:
   \begin{equation}\label{eqn:u^2}
   \|u\|_{H^2(\mathcal{M})}^2 = \sum_j |a_j|^2 \|F_j\|_{H^2(\mathcal{M})}^2 \sim \sum_j |a_j|^2 = \|f\|_{L^2(\partial \mathcal{M})}^2.
   \end{equation}
   Next, we prove \(|J_{\nu}(kr)| \sim \mathcal{O}(\nu^{-1/2})\) uniformly holds on compact subsets \(K \subset \mathcal{M}\). According to \citet[\S 8.4]{watson1922treatise}, we have:
\begin{equation}\label{eqn:j}
\begin{aligned}
    J_\nu (\nu \sec \beta) &\sim \left( \frac{2}{\pi \nu \tan \beta} \right)^{1/2} \Bigg[ \cos \left( \nu \tan \beta - \nu \beta - \frac{\pi}{4} \right) \sum_{m=0}^\infty \frac{(-1)^m \Gamma(2m + \frac{1}{2})}{\Gamma(\frac{1}{2})} \cdot \\
    & \frac{A_{2m}}{(\frac{1}{2} \nu \tan \beta)^{2m}} + \sin \left( \nu \tan \beta - \nu \beta - \frac{\pi}{4} \right) \sum_{m=0}^\infty \frac{(-1)^m \Gamma(2m + \frac{3}{2})}{\Gamma(\frac{1}{2})} \cdot \\
    & \frac{A_{2m+1}}{(\frac{1}{2} \nu \tan \beta)^{2m+1}} \Bigg],
\end{aligned}
\end{equation}
where \(A_k\) is defined following \(A_0 = 1,\, A_1 = \frac{1}{3} + \frac{5}{24} \cot^2 \beta, \, A_2 = \frac{3}{128} + \frac{77}{576} \cot^2 \beta + \frac{385}{3456} \cot^4 \beta,\) and so on. 

Let \( z = \sec \beta \), which implies \(\tan \beta = \sqrt{z^2 - 1}\) and \(\cot \beta = \frac{1}{\sqrt{z^2 - 1}}\). Moreover, \( \eta \) is defined as \(\eta(z) = \tan \beta - \beta = \sqrt{z^2 - 1} - \sec^{-1} z\). Then, by \(\cos \theta = \Re(e^{i\theta}), \quad \sin \theta = \Im(e^{i\theta})\), we have:
\begin{equation}\label{eqn:real}
    \cos(\nu \eta - \pi/4) \cdot S_0 + \sin(\nu \eta - \pi/4) \cdot S_1 = \Re \left[ e^{i(\nu \eta - \pi/4)} (S_0 - i S_1) \right],
\end{equation}
where \(S_0=\sum_{m=0}^\infty \frac{(-1)^m \Gamma(2m + \frac{1}{2})}{\Gamma(\frac{1}{2})} \cdot \frac{A_{2m}}{(\frac{1}{2} \nu \tan \beta)^{2m}}\) and \(S_1=\sum_{m=0}^\infty \frac{(-1)^m \Gamma(2m + \frac{3}{2})}{\Gamma(\frac{1}{2})} \cdot \frac{A_{2m+1}}{(\frac{1}{2} \nu \tan \beta)^{2m+1}}\).

We say that there exists \(U_k(p)\) which is a polynomial combination of \(A_k\) by comparing \( \frac{A_{2m}}{(\nu \tan \beta)^{2m}} \) and \( \frac{U_k(p)}{\nu^k}\). By \( \tan \beta = \sqrt{z^2 - 1} \) and \( p = \frac{1}{\sqrt{1 + z^2}} \), we have:
\begin{equation}\label{eqn:j2}
    \left(\frac{2}{\pi \nu \tan \beta}\right)^{1/2} = \frac{1}{(1 + z^2)^{1/4}} \cdot \frac{1}{\sqrt{2\pi\nu}} \cdot \left( \frac{2z^2}{z^2 - 1} \right)^{1/4}.
\end{equation}
Combining Equation \ref{eqn:j}, Equation \ref{eqn:real}, and Equation \ref{eqn:j2} leads to:
\begin{equation}
    J_\nu(\nu z) \sim \frac{\exp\left(\nu \eta - \frac{\pi}{4}\right)}{(1 + z^2)^{1/4} \sqrt{2\pi\nu}} \left[ \sum_{k=0}^\infty \frac{U_k(p)}{\nu^k} \right].
\end{equation}
Next, for  \( \nu \gg 1 \) and \( r \in K \) (i.e., \( z = \frac{kr}{\nu} \) is bounded), we have:
\begin{equation}
    J_\nu(kr) \approx \left(\frac{2}{\pi \nu}\right)^{1/2} \frac{\cos\left(\nu \eta(z) - \frac{\pi}{4}\right)}{(1 + z^2)^{1/4}}.
\end{equation}
Since \( |\cos(\cdot)|\leq 1 \) and \( (1 + z^2)^{1/4} \) has positive lower bound \(G\) on \(K\), we have:
\begin{equation}\label{eqn:J_mu_order}
    |J_\nu(kr)| \leq G \left(\frac{2}{\pi \nu}\right)^{1/2} = \mathcal{O}(\nu^{-1/2}).
\end{equation}
   Finally, substituting Equation \ref{eqn:J_mu_order} into \ref{eqn:u^2}, we have, for compact subsets \(K \subset \mathcal{M}\):
   \begin{equation}
     \sum_j |a_j| |F_j(x)| \leq \left(\sum_j |a_j|^2\right)^{1/2} \left(\sum_j |J_{\nu(j)}(kr)|^2\right)^{1/2} < \infty.
     \end{equation}
\end{enumerate}
This completes the proof.
\end{proof}

\subsection*{Proof of Theorem \ref{thm:4}}
\begin{proof}
For \( f = \sum_j a_j \psi_j \in L^2(\partial \mathcal{M}) \), let us define:
\begin{equation}
    W_{\mathcal{M}}f = \sum_j a_j F_j, \quad \text{where } F_j = W_{\mathcal{M}} \psi_j.
\end{equation}
From Part 3 of Lemma \ref{lem:manifold}, the series converges absolutely and uniformly on compact subsets \(K\) as:
  \begin{equation}
      \sum_j |a_j| \|F_j\|_{L^\infty(K)} \leq C \left( \sum_j |a_j|^2 \right)^{1/2} \left( \sum_j \lambda_j^{-1/2} \right)^{1/2} < \infty,
  \end{equation}
  where \( \|F_j\|_{L^\infty(K)} \leq C \lambda_j^{-\frac{1}{4}} \) comes from Bessel decay \citep{matviyenko1993evaluation} and \( \lambda_j \sim j^{\frac{2}{n-1}} \) comes from Weyl's law \citep{liokumovich2018weyl}.

Then, from Part 2 of Lemma \ref{lem:manifold}:
  \begin{equation}
  \|W_\mathcal{M}f\|_{H^2(\mathcal{M})}^2 = \sum_j |a_j|^2 \|F_j\|_{H^2(\mathcal{M})}^2 \sim \sum_j |a_j|^2 = \|f\|_{L^2(\partial \mathcal{M})}^2.
  \end{equation}

Next, we prove the surjectivity of \(W_\mathcal{M}\). Let \( u \in W^2(\mathcal{M}) \). On \( \partial \mathcal{M} \), we expand \( u \) in eigenfunctions using:
\begin{equation}
u(r, \xi) = \sum_j A_j(r) \psi_j(\xi), \quad A_j(r) = \langle u(r, \cdot), \psi_j \rangle_{L^2(\partial \mathcal{M})}.
\end{equation}
This way, the Helmholtz equation \( (\Delta_\mathcal{M} + k^2)u = 0 \) reduces to an ordinary differential equation:
  \begin{equation}\label{eqn:ode}
  A_j'' + \frac{n-1}{r} A_j' + \left( k^2 - \frac{\lambda_j + (\frac{n-2}{2})^2}{r^2} \right) A_j = 0,
  \end{equation}
whose solution is \( A_j(r) = a_j r^{-\frac{n-2}{2}} J_{\nu(j)}(kr) \), where \( \nu(j) = \sqrt{\lambda_j + (\frac{n-2}{2})^2} \). Therefore, \(u = \sum_j a_j F_j = W_\mathcal{M}f\) for \( f = \sum_j a_j \psi_j \in L^2(\partial \mathcal{M})\). Finally, the inverse \( W_{\mathcal{M}}^{-1} : u \mapsto u|_{\partial \mathcal{M}} \) is bounded by the trace theorem \citep{adams2003sobolev}:
\begin{equation}
\|W_{\mathcal{M}}^{-1}u\|_{L^2(\partial \mathcal{M})} = \|u|_{\partial \mathcal{M}}\|_{L^2(\partial \mathcal{M})} \leq C \|u\|_{H^2(\mathcal{M})}.
\end{equation}
This completes the proof.
\end{proof}
 
\section{Experiment details} \label{appendix_G}
In this section, we provide a detailed description of datasets, implementation details, and additional experimental results. 

\subsection{Datasets}

\textbf{Helmholtz equation.} We generate the dataset using the Helmholtz equation solver \texttt{helmhurts-python}, which is available in \citet{helmhurts-python}. This solver computes the electric field distribution \( u(x,y) \) for given \( n(x,y) \) and source terms \( S(x,y) \), discretized on a uniform grid with resolution \( \Delta x = \Delta y = 1 \, \text{cm} \). \( S(x,y) \) is constructed by assigning a complex-valued excitation \( P \cdot e^{i\phi} \) to all pixels marked as sources (RGB (255,0,0)) in the input image, where \( P \) is the transmitter power and \( \phi = 0 \) denotes a uniform phase alignment. Perfectly matched layers (PMLs) of thickness \(12\) cells absorb outgoing waves to approximate open boundary conditions. We select randomized physical parameters to generate the full dataset, including transmitter power \( P \sim \mathcal{U}(0.5, 2.0) \), frequency \( f \sim \mathcal{U}(1.5, 3.0) \, \text{GHz} \), and wall properties \( \eta \sim \mathcal{U}(1.5, 3.0) \), \( \kappa \sim \mathcal{U}(0.05, 0.2) \). The resulting field intensities \( |u| \) are log-scaled and normalized to \( [0,1] \).

\textbf{Navier-Stokes equation.} The dataset is generated by numerically solving the 2D incompressible Navier-Stokes equations using a spectral method solver adapted from the \texttt{NSsimulation} repository \citep{lavenderses2021nssimulation} on a torus. The viscosity \(\nu\) are sampled following \(\nu \sim \mathcal{U}(0.001, 0.1)\). For the static task, the dataset contains the value of parameters \(\alpha\) and the numerical solutions \(\mathbf u\). For the autoregressive task, the dataset contains the numerical solutions \(\mathbf u(x,t)\) and \(\mathbf u(x,t+1)\).

\textbf{Poisson equation.} Using isogeometric analysis with NURBS basis functions of order \(p = 2\) proposed in \citep{kamilis2013numerical}, we generate the dataset for this problem by specifying \(\alpha \sim (2,6)\).

\subsection{Implementation details}
We run all experiments in a Dell Precision 7920 Tower equipped with Intel Xeon Gold 6246R CPU and NVIDIA Quadro RTX 6000 GPU (with 24GB GGDR6 memory).

For FNO-based solvers \citep{li2020fourier,li2023fourier,li2025d}, the number of Fourier modes considered in the spectral convolutions is an important hyperparameter. We find that no more than \(16\) Fourier modes are enough to solve the three benchmark PDE problems. In fact, increasing the number of Fourier modes beyond $16$ could lead to worse performance. From Figure \ref{fig:fno}, we plot the average \(\mathrm{MAE}\) and total computational time of FNO with \(8, 12, 16, 32, 64, 128\) Fourier modes. As a result, in our experiments, we set the number of Fourier modes to be \(12\) for all FNO and D-FNO models. Similar trends happen to other benchmark PDE problems, so we use \(12\) Fourier modes in all benchmark PDE problems.

\begin{figure}[htp]
  \centering
  \includegraphics[width=\textwidth]{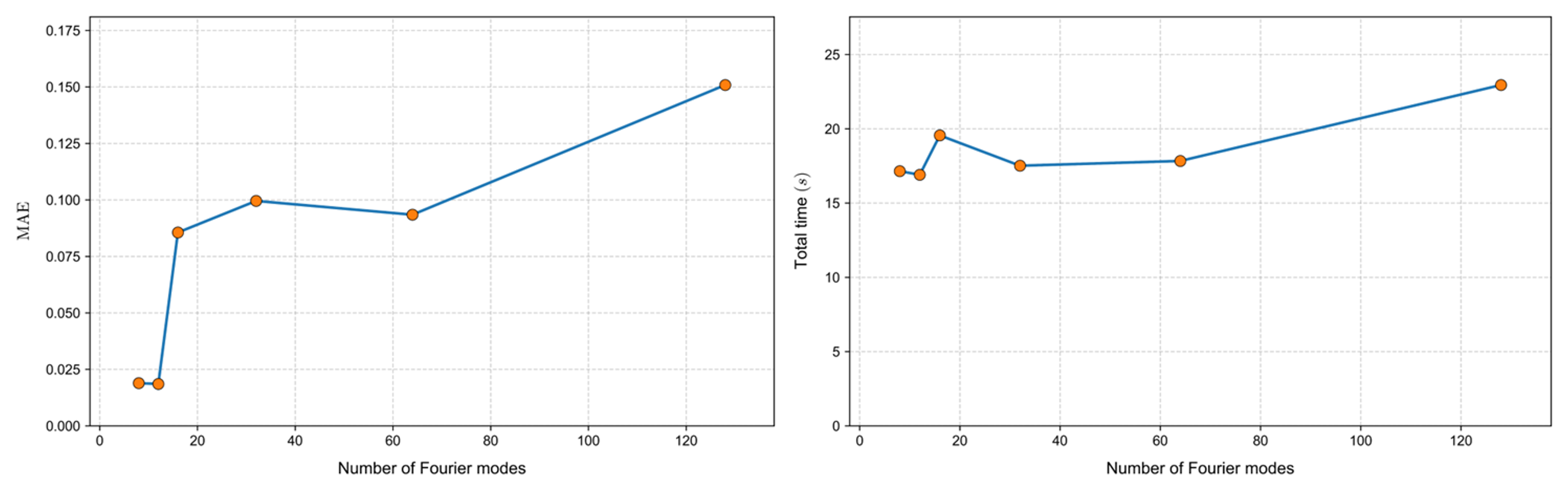}
  \caption{Average \(\mathrm{MAE}\) and total computational time (in seconds) of FNO solver with respect to number of Fourier modes (averaged over five random seeds) for solving the Helmholtz equation \ref{eqn:helm}.}
  \label{fig:fno}
\end{figure}

In addition, for AFDONet, increasing the dimension of the latent space helps achieve higher accuracy. However, this also comes with an increase in computational costs. This is illustrated in Table \ref{tab:afdonet_latent} below taking Navier-Stokes equation. Therefore, to demonstrate the effectiveness of our AFDONet solver even in the worst-case scenario, we set the latent space dimension to \(10\) for all benchmark PDE problems. 

\begin{table}[htbp]
\centering
\caption{Average \(\mathrm{MAE}\), relative \(L^2\) error, and computational time (in seconds) of AFDONet (averaged over five random seeds) for solving Navier-Stokes equation \ref{eq:NS} (autoregressive task) under different latent space dimensions.}
\begin{tabular}{c c c c}
\toprule
\bf Latent dimension & \bf MAE & \bf Relative \(L^2\) error & \bf Time (sec) \\
\midrule
16  &  6.40E-04 \(\pm\) 9.90E-05  &  1.11E-03 \(\pm\) 1.91E-04  &  1058.39 \(\pm\) 19.30\\
20   &  5.35E-04 \(\pm\) 1.36E-04  &  1.40E-03 \(\pm\) 1.03E-03  &  1190.61 \(\pm\) 15.67 \\
32  &  3.77E-04 \(\pm\) 1.28E-04  &  9.60E-04 \(\pm\) 8.03E-04 &  1110.57 \(\pm\) 18.38 \\
64  &  4.62E-04 \(\pm\) 1.35E-04  &  1.22E-03 \(\pm\) 8.92E-04  &  1173.40 \(\pm\) 17.22 \\
100   &  4.05E-04 \(\pm\) 1.09E-04  &  1.06E-03 \(\pm\) 9.94E-04  &  1365.03 \(\pm\) 21.89 \\
128 &  3.89E-04 \(\pm\) 1.26E-04  &  9.99E-04 \(\pm\) 8.48E-04  &  1406.05 \(\pm\) 23.98 \\
256 &  5.03E-04 \(\pm\) 1.98E-04  &  1.27E-03 \(\pm\) 1.14E-03  &  1743.28 \(\pm\) 27.64 \\
\bottomrule
\end{tabular}
\label{tab:afdonet_latent}
\end{table}

The AFDONet loss function and training specifications are listed in Table \ref{tab:training} below.

\begin{table}[ht!]
\centering
\caption{Specifications of loss function and training for AFDONet solver.}
    \begin{tabular}{lc}
    \toprule
    \textbf{Parameter} & \textbf{Value} \\
    \midrule
    Training epochs & \(100\) \\
    Loss weights ($\omega$) & $10^{-5}$ \\
    Loss weights ($w_i$) & $10^{-8}$ \\
    Optimizer & Adam \\
    Learning rate & $10^{-3}$ \\
    Batch size & \(16\) \\
    Encoder hidden layers dimension & \(256\) \\
    Latent space dimension & \(10\) \\
    \bottomrule
    \end{tabular}\label{tab:training}
\end{table}

For the benchmark solvers, their detailed architectures are as follows:
\begin{itemize}
    \item The FNO solver \citep{li2020fourier,li2023fourier} consists of an initial linear projection layer \(P\) (width is \(32\)) followed by \(5\) Fourier layers with \(12\) Fourier modes and GeLU activation function. A neural network with two fully connected layers \(Q\) (the first layer has 128 neurons and the second layer has 2 neurons) is used to project back to the target dimension. The Adam optimizer (learning rate: \(10^{-3}\)) is used to train the FNO solver based on minimizing the MSE loss.
    \item The D-FNO solver \citep{li2025d} has a similar architecture as the FNO solver, except that a reduction layer is introduced between the initial linear projection layer \(P\) and the \(5\) Fourier layers to decompose the output of \(P\) into a series of two one-dimensional vectors. The reduction layer does not use traditional neurons. Instead, it projects inputs into a rank-16 subspace via factor matrices (see Equation 6 of \citet{li2025d}). The Fourier layers have \(12\) Fourier modes (also suggested by \citet{li2025d}) and use GeLU activation function. After that, an operation called product is used to put the two vectors together. In D-FNO, \(Q\) has two layers (the first layer has \(128\) neurons and the second layer has one neuron).
    The Adam optimizer (learning rate: \(10^{-3}\)) is used to train the D-FNO solver based on minimizing the MSE loss.
    \item The WNO solver \citep{tripura2023wavelet} adopts the FNO architecture by replacing Fourier layers with wavelet integral layers that decompose the inputs using Daubechies wavelets and apply learnable linear transformations to the wavelet coefficients before reconstruction. The structure of \(Q\) is the same as that of FNO. GeLU activation function and the Adam optimizer (learning rate: \(10^{-3}\)) are used.
    \item The DeepONet solver \citep{lu2019deeponet} consists of two subnetworks: a branch network and a trunk network. The branch network which handles the high-dimensional input functions has three fully-connected layers with \(64\) neurons per layer. The truck network which handles spatial coordinates also has three fully-connected layers with \(64\) neurons per layer. Their outputs are combined via a dot product. ReLU activation function is employed in both branch and truch networks. We use the Adam optimizer (learning rate: \(10^{-3}\)) to minimize the MSE loss.

\end{itemize}

\subsection{Additional experimental results}\label{sec:aer}

\subsubsection*{Visualization of solver performance in benchmark PDE problems}

In Figures \ref{fig:results_visu_nonrandom} through \ref{fig:results_visu_poisson}, we plot the ground truth and predicted solutions of AFDONet and baseline methods for the three case studies. The corresponding $\mathrm{MAE}$ and relative $L^2$ error results are listed in Table \ref{tab:mae}.

\begin{figure}[ht!]
  \centering
  \includegraphics[width=\textwidth]{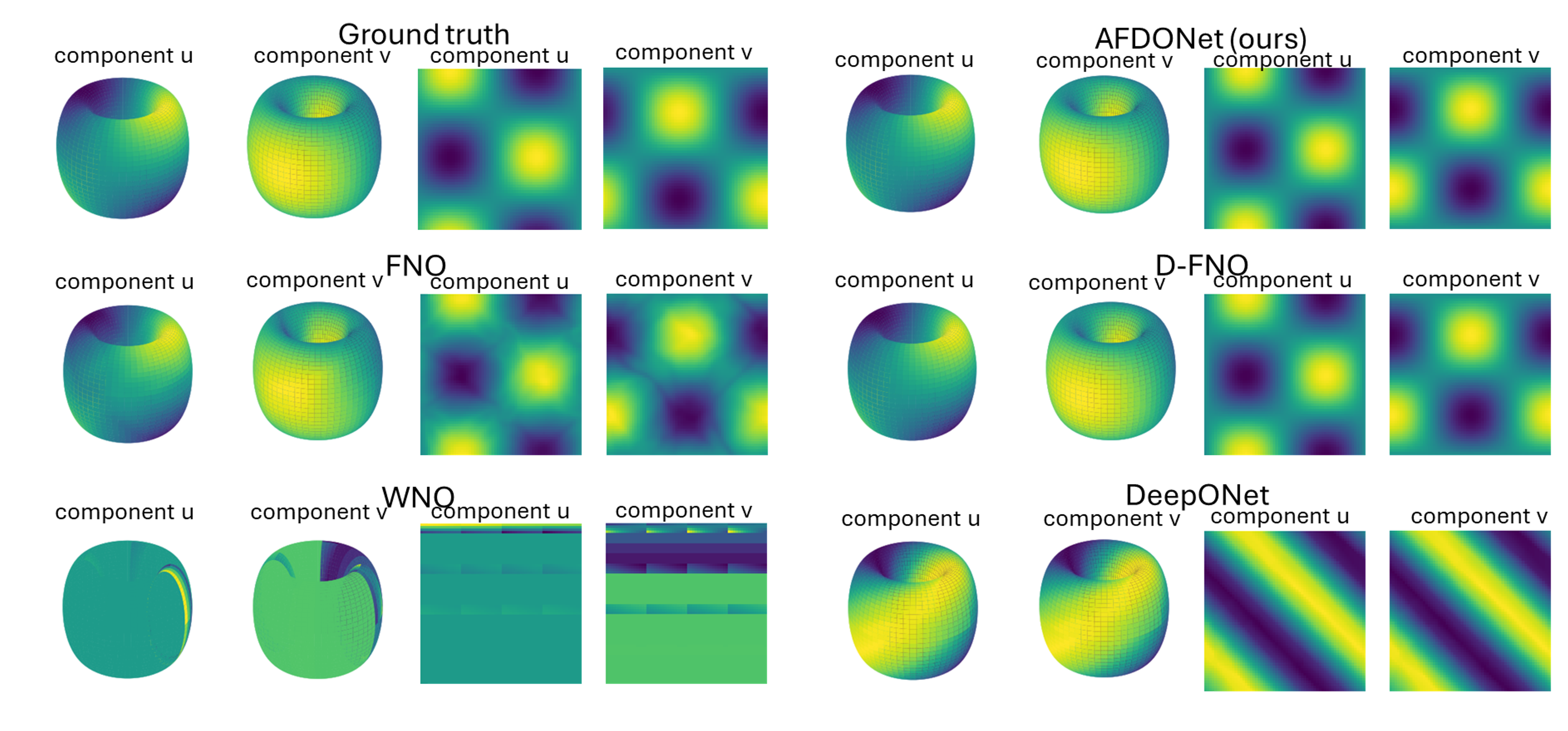}
  \caption{Ground truth and predicted solutions \((u,v)\) of the Navier-Stokes equation (static task) on the torus and heat map.}
  \label{fig:results_visu_nonrandom}
\end{figure}

\begin{figure}[ht!]
  \centering
  \includegraphics[width=\textwidth]{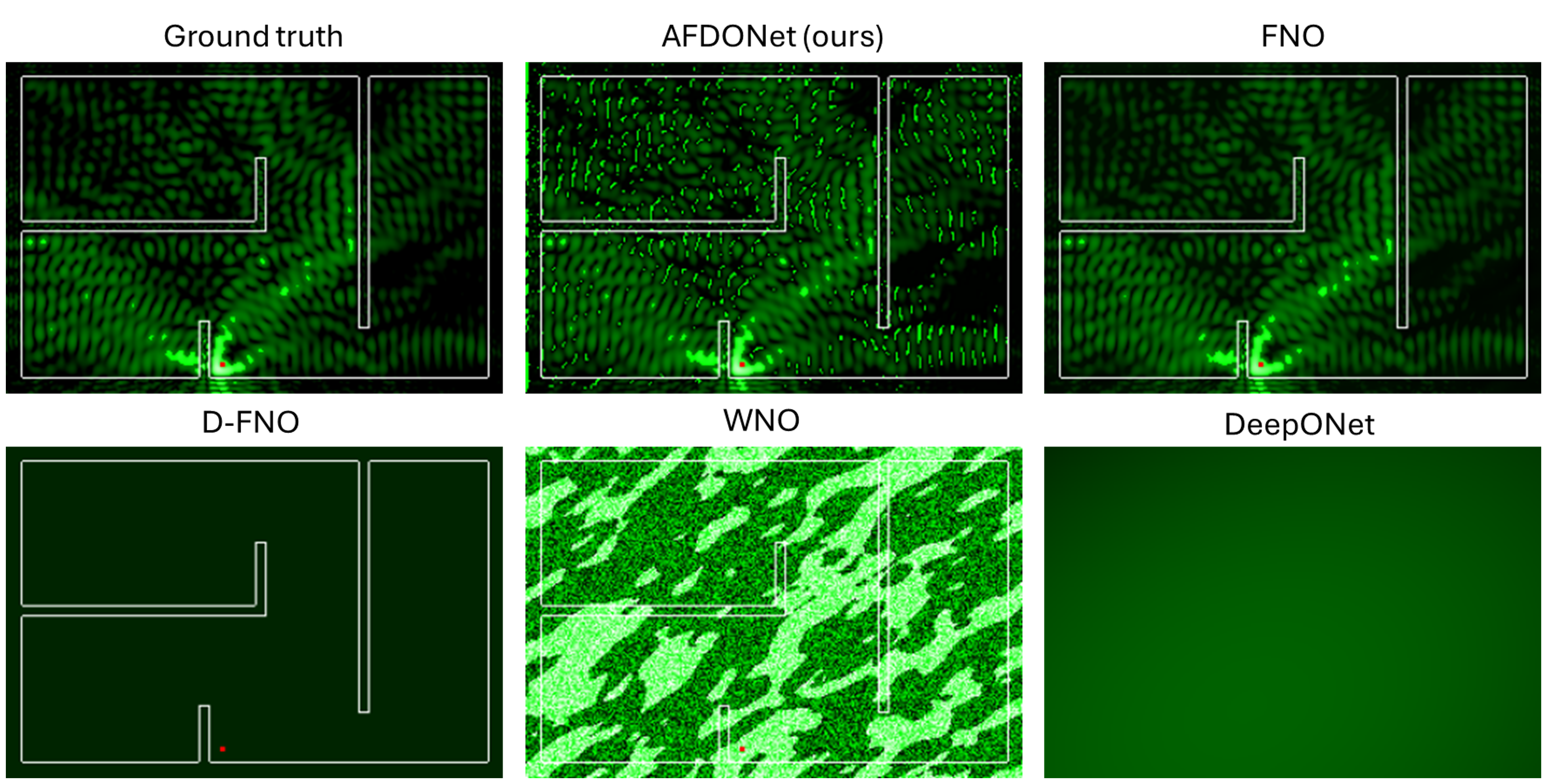}
  \caption{Ground truth and predicted solutions \(u(x,y)\) of the Helmholtz equation on the planar manifold.}
  \label{fig:results_visu_helm}
\end{figure}

\begin{figure}[ht!]
  \centering
  \includegraphics[width=\textwidth]{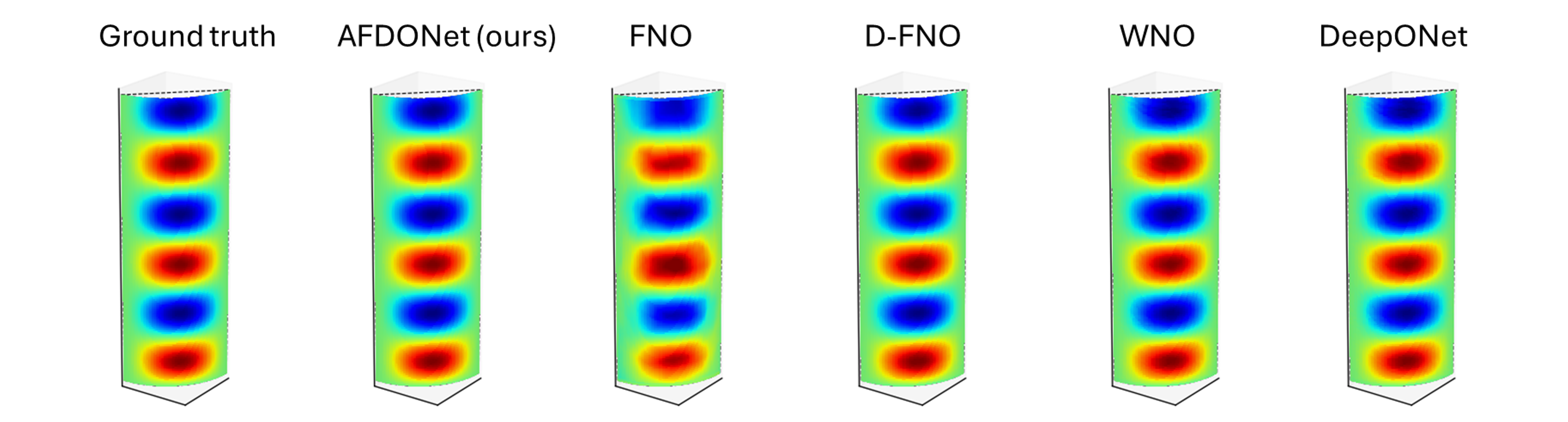}
  \caption{Ground truth and predicted solutions \(u(\phi,z)\) of the Poisson equation on the quarter-cylindrical surface.}
  \label{fig:results_visu_poisson}
\end{figure}

\subsubsection*{AFDONet Performance on Navier-Stokes equation with randomized vortex dataset}

We extend the ablation study shown in Table \ref{tab:ablation} with a new ablation study for the Navier-Stokes example with randomized vortex field dataset. The initial condition is set by vortex structures via Gaussian-based stream functions \( \psi = A \cdot \exp\left(-\frac{(x-c_x)^2 + (y-c_y)^2}{2r^2}\right)\) with randomized parameters vortex centers \((c_x,c_y)\sim \mathcal{U}(1, 5)^2\), radii \(r \sim \mathcal{U}(0.5, 2)\), and strengths \(A \sim \mathcal{U} (-2,2)\).

\begin{figure}[htp]
  \centering
  \includegraphics[width=\textwidth]{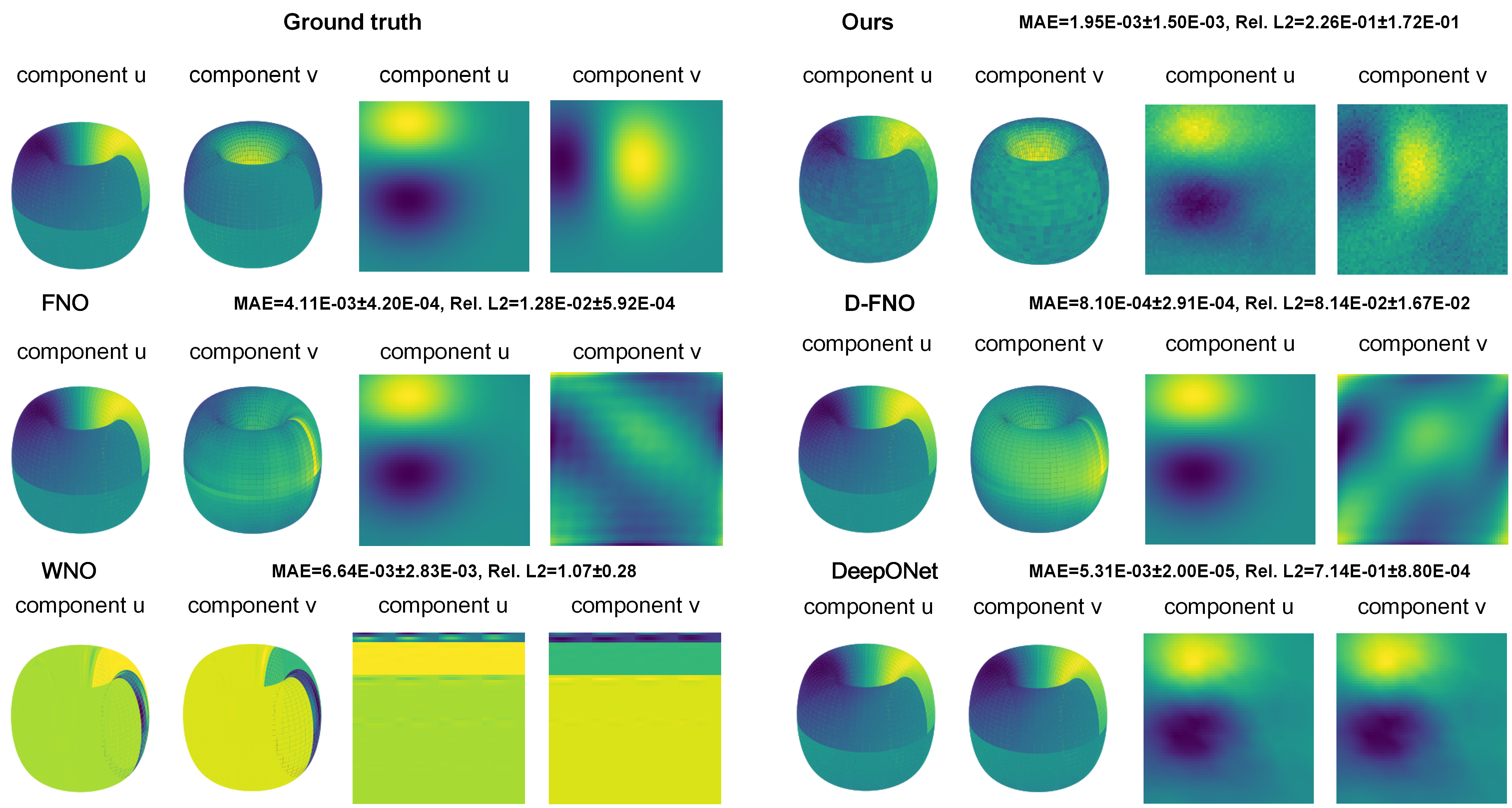}
  \caption{Ground truth and predicted fields (\(u\), \(v\)) of the Navier-Stokes equation (for static task) on both the torus \(\mathbb{T}^2\) and the heatmap for various solvers. Here, the dataset is generated from Gaussian-based randomized vortex fields (dataset size is $5000$) \citep{pedergnana2020explicit}. Average \(\mathrm{MAE}\) and relative $L^2$ errors and their standard deviations obtained using five random seeds are also reported.}
  \label{fig:results_visu}
\end{figure}
\section{Additional experiments}\label{appendix_h}

\subsection{Experiment using real-world noisy dataset}

To validate AFDONet's performance on noisy real-world datasets, we perform experiments using the latex glove DIC (Digital Image Correlation) original dataset \citep{ifno}. The goal is to learn the mechanical response of a nitrile glove sample directly from experimental data, without assuming a known constitutive law. The goal is to predict the displacement field at the current loading step. The input includes the spatial coordinates, the displacement field from the previous step, and the current boundary displacement. We compare the performance of AFDONet to the current SOTA of this dataset, IFNO, as well as FNO as follows. To ensure fair comparison, we conduct experiments using the same settings as IFNO with the number of hidden layers ranging from 3 to 12.

\begin{table}[htbp]
\centering
\caption{Relative \(L^2\) error of AFDONet and other baselines using the latex glove DIC (Digital Image Correlation) original dataset.}
\begin{adjustbox}{width=\columnwidth}
\begin{tabular}{c c c c}
\toprule
\bf Number of hidden layers & \bf AFDONet & \bf IFNO & \bf FNO \\
\midrule
3  &  3.26E-02 $\pm$ 3.18E-04  &  3.43E-02 $\pm$ 4.96E-04  &  3.40E-02  $\pm$ 4.09E-04 \\
6   &  2.78E-02 $\pm$ 4.01E-04  &  3.34E-02 $\pm$ 4.53E-04  &  3.84E-02 $\pm$ 4.21E-04 \\
12  &  2.52E-02 $\pm$ 3.91E=04  &  3.32E-02 $\pm$ 4.41E-04 &  4.66E-02 $\pm$ 1.47E-03 \\
\bottomrule
\end{tabular}
\end{adjustbox}
\end{table}

In addition, \citet{ifno} also reported the results of generalized Mooney-Rivlin (GMR) model in two settings. The relative $L^2$ errors of GMR model fitting and GMR inverse analysis are 3.30E-01 and 2.91E-01, respectively. We can observe that our AFDONet consistently outperforms other models in every $L$. Finally, the best reported result of IFNO is 3.30E-02 $\pm$ 4.63E-04 when $L=24$ \citep{ifno}. Although we do not conduct the experiment $L=24$ due to the limited time, our AFDONet still performs better than the best result of IFNO.

The average training time of AFDONet is $\approx$5.3 seconds per epoch, which is comparable to that of IFNO ($\approx$4.6 seconds) and FNO ($\approx$5.7 seconds).

\subsection{Problem defined on an arbitrary manifold}

To demonstrate the effectiveness of our AFDONet on arbitrary manifolds, here we design a new manifold that cannot be trivially projected onto a Euclidean space. The manifold is the closed unit ball $\overline{B} = \{ z = (z_1, z_2) \in \mathbb{C}^2 : |z_1|^2 + |z_2|^2 \leq 1 \}$, with boundary $\partial \overline{B} = S^3$. This is a compact 2-dimensional complex manifold equipped with the standard complex structure inherited from $\mathbb{C}^2$ and the flat Kähler metric $g = \sum_{j=1}^2 dz_j \otimes d\bar{z}_j$. On this manifold, we solve the Schrödinger equation $(\Delta_A + q(|u|^2)) u = 0$, where $\Delta_A = (d + iA)^* (d + iA)$ is the magnetic Laplacian with $d$ the exterior derivative, $*$ is the Hodge star with respect to the Kähler metric, $A$ is a smooth real-valued 1-form as the magnetic potential, and $q$ is a smooth complex-valued function as the electric potential. The results are shown below in Table \ref{tab:arbitrary}. Again, our AFDONet achieves significantly higher accuracy compared to baseline methods.

\begin{table}[htbp]\label{tab:arbitrary}
\centering
\caption{Average $\textrm{MAE}$ and Relative \(L^2\) error of AFDONet and other baselines for solving the arbitrary manifold problem (values multiplied by 100).}
\begin{adjustbox}{width=\columnwidth}
\begin{tabular}{c c c c c c}
\toprule
\bf Metric & \bf AFDONet (Ours) & \bf FNO & \bf D-FNO & \bf WNO & \bf DeepONet \\
\midrule
$\mathrm{MAE}$  &  $0.025\pm0.017$ & $4.506\pm0.927$ & $3.207\pm0.873$ &  $5.884\pm1.374$ & $5.341\pm2.482$ \\
Rel. $L^2$      &  $0.332\pm0.148$ & $55.688 \pm 5.415$ & $48.267\pm4.384$&$99.99\pm0.000$   & $57.289\pm7.378$ \\
\bottomrule
\end{tabular}
\end{adjustbox}
\end{table}

\section{Future Work} \label{appendix_futurework}

In the future, we plan to explore the use of our AFDONet framework as a generalized method for various applications such as signal and image processing, industrial process monitoring and anomaly detection, variable renewable energy forecasting, and more. In particular, multiple stakeholders are involved in many of these applications. However, centralized aggregation of stakeholder data is challenging due to computational bottlenecks and data privacy concerns. To address this, we plan to develop a decentralized AFDONet framework by integrating recently proposed \(n\)-Best AFD approximation \citep{qian2023n} and decentralized VAE \citep{splitvae} methodologies.

\section{Broader Impacts}\label{appendix_broaderimpacts}
Our AFDONet solver can be used to model and understand a wide range of scientific and engineering applications. A faster, more accurate solution scheme can provide energy-efficient and sustainable new designs and operations for these applications. For example, root-zone soil moisture monitoring is essential for precision agriculture, smart irrigation, and agricultural drought prevention. The spatiotemporal water flow dynamics in soil is modeled by the Richards equation \citep{richards}, a nonlinear elliptic-parabolic PDE. Our proposed AFDONet solver can provide farmers with valuable insights to design smart, water efficient irrigation strategies.

Our AFDONet solver enhances PDE learning on manifolds, which extends the applicability of neural PDE solvers to complex geometries. Also, unlike conventional neural PDE solvers that operate like black-box models, the design and performance of our AFDONet solver is mathematically grounded and explainable, making our solver more accessible to users. Our AFDONet solver also exhibits superior performance in capturing the underlying spatiotemporal dynamics of the PDE, enabling it to discover new underlying physical laws and relationships from data.

Meanwhile, it is important to recognize that the accuracy of neural PDE solvers relies heavily on the accuracy of the training data, which requires rigorous cross-checks. Therefore, the underlying assumptions and limitations of neural PDE solvers in general must be well-understood in the context of the specific application of interest. Furthermore, domain knowledge and subject matter expertise of the specific application are desired to monitor the solution quality and identify any inconsistency with underlying physics during implementation.


\newpage
\section*{NeurIPS Paper Checklist}

\begin{enumerate}

\item {\bf Claims}
    \item[] Question: Do the main claims made in the abstract and introduction accurately reflect the paper's contributions and scope?
    \item[] Answer: \answerYes{} 
    \item[] Justification: All the claims are justified either in the theoretical aspect of the paper or in the experiments.
    \item[] Guidelines:
    \begin{itemize}
        \item The answer \answerNA{} means that the abstract and introduction do not include the claims made in the paper.
        \item The abstract and/or introduction should clearly state the claims made, including the contributions made in the paper and important assumptions and limitations. A \answerNo{} or \answerNA{} answer to this question will not be perceived well by the reviewers. 
        \item The claims made should match theoretical and experimental results, and reflect how much the results can be expected to generalize to other settings. 
        \item It is fine to include aspirational goals as motivation as long as it is clear that these goals are not attained by the paper. 
    \end{itemize}

\item {\bf Limitations}
    \item[] Question: Does the paper discuss the limitations of the work performed by the authors?
    \item[] Answer: \answerYes{} 
    \item[] Justification: We discuss future work directions in the Appendix \ref{appendix_futurework} to improve computational efficiency of our AFDONet solver.
    \item[] Guidelines:
    \begin{itemize}
        \item The answer \answerNA{} means that the paper has no limitation while the answer \answerNo{} means that the paper has limitations, but those are not discussed in the paper. 
        \item The authors are encouraged to create a separate ``Limitations'' section in their paper.
        \item The paper should point out any strong assumptions and how robust the results are to violations of these assumptions (e.g., independence assumptions, noiseless settings, model well-specification, asymptotic approximations only holding locally). The authors should reflect on how these assumptions might be violated in practice and what the implications would be.
        \item The authors should reflect on the scope of the claims made, e.g., if the approach was only tested on a few datasets or with a few runs. In general, empirical results often depend on implicit assumptions, which should be articulated.
        \item The authors should reflect on the factors that influence the performance of the approach. For example, a facial recognition algorithm may perform poorly when image resolution is low or images are taken in low lighting. Or a speech-to-text system might not be used reliably to provide closed captions for online lectures because it fails to handle technical jargon.
        \item The authors should discuss the computational efficiency of the proposed algorithms and how they scale with dataset size.
        \item If applicable, the authors should discuss possible limitations of their approach to address problems of privacy and fairness.
        \item While the authors might fear that complete honesty about limitations might be used by reviewers as grounds for rejection, a worse outcome might be that reviewers discover limitations that aren't acknowledged in the paper. The authors should use their best judgment and recognize that individual actions in favor of transparency play an important role in developing norms that preserve the integrity of the community. Reviewers will be specifically instructed to not penalize honesty concerning limitations.
    \end{itemize}

\item {\bf Theory assumptions and proofs}
    \item[] Question: For each theoretical result, does the paper provide the full set of assumptions and a complete (and correct) proof?
    \item[] Answer: \answerYes{} 
    \item[] Justification: The assumptions made to prove our main theorems are explicitly mentioned in the paper. The complete proofs of all theorems and major results are given in the Appendix.
    \item[] Guidelines:
    \begin{itemize}
        \item The answer \answerNA{} means that the paper does not include theoretical results. 
        \item All the theorems, formulas, and proofs in the paper should be numbered and cross-referenced.
        \item All assumptions should be clearly stated or referenced in the statement of any theorems.
        \item The proofs can either appear in the main paper or the supplemental material, but if they appear in the supplemental material, the authors are encouraged to provide a short proof sketch to provide intuition. 
        \item Inversely, any informal proof provided in the core of the paper should be complemented by formal proofs provided in appendix or supplemental material.
        \item Theorems and Lemmas that the proof relies upon should be properly referenced. 
    \end{itemize}

    \item {\bf Experimental result reproducibility}
    \item[] Question: Does the paper fully disclose all the information needed to reproduce the main experimental results of the paper to the extent that it affects the main claims and/or conclusions of the paper (regardless of whether the code and data are provided or not)?
    \item[] Answer: \answerYes{} 
    \item[] Justification: Our AFDONet solver architecture has been carefully discussed in the paper. All experimental details needed to reproduce the results are given in the Appendix.
    \item[] Guidelines:
    \begin{itemize}
        \item The answer \answerNA{} means that the paper does not include experiments.
        \item If the paper includes experiments, a \answerNo{} answer to this question will not be perceived well by the reviewers: Making the paper reproducible is important, regardless of whether the code and data are provided or not.
        \item If the contribution is a dataset and\slash or model, the authors should describe the steps taken to make their results reproducible or verifiable. 
        \item Depending on the contribution, reproducibility can be accomplished in various ways. For example, if the contribution is a novel architecture, describing the architecture fully might suffice, or if the contribution is a specific model and empirical evaluation, it may be necessary to either make it possible for others to replicate the model with the same dataset, or provide access to the model. In general. releasing code and data is often one good way to accomplish this, but reproducibility can also be provided via detailed instructions for how to replicate the results, access to a hosted model (e.g., in the case of a large language model), releasing of a model checkpoint, or other means that are appropriate to the research performed.
        \item While NeurIPS does not require releasing code, the conference does require all submissions to provide some reasonable avenue for reproducibility, which may depend on the nature of the contribution. For example
        \begin{enumerate}
            \item If the contribution is primarily a new algorithm, the paper should make it clear how to reproduce that algorithm.
            \item If the contribution is primarily a new model architecture, the paper should describe the architecture clearly and fully.
            \item If the contribution is a new model (e.g., a large language model), then there should either be a way to access this model for reproducing the results or a way to reproduce the model (e.g., with an open-source dataset or instructions for how to construct the dataset).
            \item We recognize that reproducibility may be tricky in some cases, in which case authors are welcome to describe the particular way they provide for reproducibility. In the case of closed-source models, it may be that access to the model is limited in some way (e.g., to registered users), but it should be possible for other researchers to have some path to reproducing or verifying the results.
        \end{enumerate}
    \end{itemize}

\item {\bf Open access to data and code}
    \item[] Question: Does the paper provide open access to the data and code, with sufficient instructions to faithfully reproduce the main experimental results, as described in supplemental material?
    \item[] Answer: \answerNo{} 
    \item[] Justification: We use open-source datasets for our experiments. The code will be released upon acceptance, following the NeurIPS code and data submission guidelines. Having said that, the implementation details are disclosed in Appendix \ref{appendix_G}.
    \item[] Guidelines:
    \begin{itemize}
        \item The answer \answerNA{} means that paper does not include experiments requiring code.
        \item Please see the NeurIPS code and data submission guidelines (\url{https://neurips.cc/public/guides/CodeSubmissionPolicy}) for more details.
        \item While we encourage the release of code and data, we understand that this might not be possible, so \answerNo{} is an acceptable answer. Papers cannot be rejected simply for not including code, unless this is central to the contribution (e.g., for a new open-source benchmark).
        \item The instructions should contain the exact command and environment needed to run to reproduce the results. See the NeurIPS code and data submission guidelines (\url{https://neurips.cc/public/guides/CodeSubmissionPolicy}) for more details.
        \item The authors should provide instructions on data access and preparation, including how to access the raw data, preprocessed data, intermediate data, and generated data, etc.
        \item The authors should provide scripts to reproduce all experimental results for the new proposed method and baselines. If only a subset of experiments are reproducible, they should state which ones are omitted from the script and why.
        \item At submission time, to preserve anonymity, the authors should release anonymized versions (if applicable).
        \item Providing as much information as possible in supplemental material (appended to the paper) is recommended, but including URLs to data and code is permitted.
    \end{itemize}

\item {\bf Experimental setting/details}
    \item[] Question: Does the paper specify all the training and test details (e.g., data splits, hyperparameters, how they were chosen, type of optimizer) necessary to understand the results?
    \item[] Answer: \answerYes{} 
    \item[] Justification: All experimental settings and details are included in the Appendix.
    \item[] Guidelines:
    \begin{itemize}
        \item The answer \answerNA{} means that the paper does not include experiments.
        \item The experimental setting should be presented in the core of the paper to a level of detail that is necessary to appreciate the results and make sense of them.
        \item The full details can be provided either with the code, in appendix, or as supplemental material.
    \end{itemize}

\item {\bf Experiment statistical significance}
    \item[] Question: Does the paper report error bars suitably and correctly defined or other appropriate information about the statistical significance of the experiments?
    \item[] Answer: \answerYes{} 
    \item[] Justification: We have included error bars to all numerical results reported in the paper and Appendix. 
    \item[] Guidelines:
    \begin{itemize}
        \item The answer \answerNA{} means that the paper does not include experiments.
        \item The authors should answer \answerYes{} if the results are accompanied by error bars, confidence intervals, or statistical significance tests, at least for the experiments that support the main claims of the paper.
        \item The factors of variability that the error bars are capturing should be clearly stated (for example, train/test split, initialization, random drawing of some parameter, or overall run with given experimental conditions).
        \item The method for calculating the error bars should be explained (closed form formula, call to a library function, bootstrap, etc.)
        \item The assumptions made should be given (e.g., Normally distributed errors).
        \item It should be clear whether the error bar is the standard deviation or the standard error of the mean.
        \item It is OK to report 1-sigma error bars, but one should state it. The authors should preferably report a 2-sigma error bar than state that they have a 96\% CI, if the hypothesis of Normality of errors is not verified.
        \item For asymmetric distributions, the authors should be careful not to show in tables or figures symmetric error bars that would yield results that are out of range (e.g., negative error rates).
        \item If error bars are reported in tables or plots, the authors should explain in the text how they were calculated and reference the corresponding figures or tables in the text.
    \end{itemize}

\item {\bf Experiments compute resources}
    \item[] Question: For each experiment, does the paper provide sufficient information on the computer resources (type of compute workers, memory, time of execution) needed to reproduce the experiments?
    \item[] Answer: \answerYes{} 
    \item[] Justification: Information on computer resources needed to reproduce the experiments is provided in the manuscript and Appendix.
    \item[] Guidelines:
    \begin{itemize}
        \item The answer \answerNA{} means that the paper does not include experiments.
        \item The paper should indicate the type of compute workers CPU or GPU, internal cluster, or cloud provider, including relevant memory and storage.
        \item The paper should provide the amount of compute required for each of the individual experimental runs as well as estimate the total compute. 
        \item The paper should disclose whether the full research project required more compute than the experiments reported in the paper (e.g., preliminary or failed experiments that didn't make it into the paper). 
    \end{itemize}
    
\item {\bf Code of ethics}
    \item[] Question: Does the research conducted in the paper conform, in every respect, with the NeurIPS Code of Ethics \url{https://neurips.cc/public/EthicsGuidelines}?
    \item[] Answer: \answerYes{} 
    \item[] Justification: The authors have read the NeurIPS Code of Ethics and affirm that this work aligns with it.
    \item[] Guidelines:
    \begin{itemize}
        \item The answer \answerNA{} means that the authors have not reviewed the NeurIPS Code of Ethics.
        \item If the authors answer \answerNo, they should explain the special circumstances that require a deviation from the Code of Ethics.
        \item The authors should make sure to preserve anonymity (e.g., if there is a special consideration due to laws or regulations in their jurisdiction).
    \end{itemize}

\item {\bf Broader impacts}
    \item[] Question: Does the paper discuss both potential positive societal impacts and negative societal impacts of the work performed?
    \item[] Answer: \answerYes{} 
    \item[] Justification: We discussed the broader impacts in Appendix \ref{appendix_broaderimpacts}.
    \item[] Guidelines:
    \begin{itemize}
        \item The answer \answerNA{} means that there is no societal impact of the work performed.
        \item If the authors answer \answerNA{} or \answerNo, they should explain why their work has no societal impact or why the paper does not address societal impact.
        \item Examples of negative societal impacts include potential malicious or unintended uses (e.g., disinformation, generating fake profiles, surveillance), fairness considerations (e.g., deployment of technologies that could make decisions that unfairly impact specific groups), privacy considerations, and security considerations.
        \item The conference expects that many papers will be foundational research and not tied to particular applications, let alone deployments. However, if there is a direct path to any negative applications, the authors should point it out. For example, it is legitimate to point out that an improvement in the quality of generative models could be used to generate Deepfakes for disinformation. On the other hand, it is not needed to point out that a generic algorithm for optimizing neural networks could enable people to train models that generate Deepfakes faster.
        \item The authors should consider possible harms that could arise when the technology is being used as intended and functioning correctly, harms that could arise when the technology is being used as intended but gives incorrect results, and harms following from (intentional or unintentional) misuse of the technology.
        \item If there are negative societal impacts, the authors could also discuss possible mitigation strategies (e.g., gated release of models, providing defenses in addition to attacks, mechanisms for monitoring misuse, mechanisms to monitor how a system learns from feedback over time, improving the efficiency and accessibility of ML).
    \end{itemize}
    
\item {\bf Safeguards}
    \item[] Question: Does the paper describe safeguards that have been put in place for responsible release of data or models that have a high risk for misuse (e.g., pre-trained language models, image generators, or scraped datasets)?
    \item[] Answer: \answerNA{} 
    \item[] Justification: \answerNA{}
    \item[] Guidelines:
    \begin{itemize}
        \item The answer \answerNA{} means that the paper poses no such risks.
        \item Released models that have a high risk for misuse or dual-use should be released with necessary safeguards to allow for controlled use of the model, for example by requiring that users adhere to usage guidelines or restrictions to access the model or implementing safety filters. 
        \item Datasets that have been scraped from the Internet could pose safety risks. The authors should describe how they avoided releasing unsafe images.
        \item We recognize that providing effective safeguards is challenging, and many papers do not require this, but we encourage authors to take this into account and make a best faith effort.
    \end{itemize}

\item {\bf Licenses for existing assets}
    \item[] Question: Are the creators or original owners of assets (e.g., code, data, models), used in the paper, properly credited and are the license and terms of use explicitly mentioned and properly respected?
    \item[] Answer: \answerYes{} 
    \item[] Justification: All code packages and/or datasets used in our experiments are properly cited with proper open access licenses.
    \item[] Guidelines:
    \begin{itemize}
        \item The answer \answerNA{} means that the paper does not use existing assets.
        \item The authors should cite the original paper that produced the code package or dataset.
        \item The authors should state which version of the asset is used and, if possible, include a URL.
        \item The name of the license (e.g., CC-BY 4.0) should be included for each asset.
        \item For scraped data from a particular source (e.g., website), the copyright and terms of service of that source should be provided.
        \item If assets are released, the license, copyright information, and terms of use in the package should be provided. For popular datasets, \url{paperswithcode.com/datasets} has curated licenses for some datasets. Their licensing guide can help determine the license of a dataset.
        \item For existing datasets that are re-packaged, both the original license and the license of the derived asset (if it has changed) should be provided.
        \item If this information is not available online, the authors are encouraged to reach out to the asset's creators.
    \end{itemize}

\item {\bf New assets}
    \item[] Question: Are new assets introduced in the paper well documented and is the documentation provided alongside the assets?
    \item[] Answer: \answerYes{} 
    \item[] Justification: Any new datasets generated from our work will be released to Zenodo with CC-BY 4.0 license.
    \item[] Guidelines:
    \begin{itemize}
        \item The answer \answerNA{} means that the paper does not release new assets.
        \item Researchers should communicate the details of the dataset\slash code\slash model as part of their submissions via structured templates. This includes details about training, license, limitations, etc. 
        \item The paper should discuss whether and how consent was obtained from people whose asset is used.
        \item At submission time, remember to anonymize your assets (if applicable). You can either create an anonymized URL or include an anonymized zip file.
    \end{itemize}

\item {\bf Crowdsourcing and research with human subjects}
    \item[] Question: For crowdsourcing experiments and research with human subjects, does the paper include the full text of instructions given to participants and screenshots, if applicable, as well as details about compensation (if any)? 
    \item[] Answer: \answerNA{} 
    \item[] Justification: \answerNA{}
    \item[] Guidelines:
    \begin{itemize}
        \item The answer \answerNA{} means that the paper does not involve crowdsourcing nor research with human subjects.
        \item Including this information in the supplemental material is fine, but if the main contribution of the paper involves human subjects, then as much detail as possible should be included in the main paper. 
        \item According to the NeurIPS Code of Ethics, workers involved in data collection, curation, or other labor should be paid at least the minimum wage in the country of the data collector. 
    \end{itemize}

\item {\bf Institutional review board (IRB) approvals or equivalent for research with human subjects}
    \item[] Question: Does the paper describe potential risks incurred by study participants, whether such risks were disclosed to the subjects, and whether Institutional Review Board (IRB) approvals (or an equivalent approval/review based on the requirements of your country or institution) were obtained?
    \item[] Answer: \answerNA{} 
    \item[] Justification: \answerNA{}
    \item[] Guidelines:
    \begin{itemize}
        \item The answer \answerNA{} means that the paper does not involve crowdsourcing nor research with human subjects.
        \item Depending on the country in which research is conducted, IRB approval (or equivalent) may be required for any human subjects research. If you obtained IRB approval, you should clearly state this in the paper. 
        \item We recognize that the procedures for this may vary significantly between institutions and locations, and we expect authors to adhere to the NeurIPS Code of Ethics and the guidelines for their institution. 
        \item For initial submissions, do not include any information that would break anonymity (if applicable), such as the institution conducting the review.
    \end{itemize}

\item {\bf Declaration of LLM usage}
    \item[] Question: Does the paper describe the usage of LLMs if it is an important, original, or non-standard component of the core methods in this research? Note that if the LLM is used only for writing, editing, or formatting purposes and does \emph{not} impact the core methodology, scientific rigor, or originality of the research, declaration is not required.
    \item[] Answer: \answerNA{} 
    \item[] Justification: \answerNA{}
    \item[] Guidelines:
    \begin{itemize}
        \item The answer \answerNA{} means that the core method development in this research does not involve LLMs as any important, original, or non-standard components.
        \item Please refer to our LLM policy in the NeurIPS handbook for what should or should not be described.
    \end{itemize}

\end{enumerate}

\end{document}